\documentclass{article}
\usepackage{iclr2027_conference,times}
\usepackage{amsmath,amsfonts,bm}
\usepackage{titletoc}
\usepackage{comment}

\def\eqref#1{(\ref{#1})}

\def\1{\bm{1}}

\DeclareMathAlphabet{\mathsfit}{\encodingdefault}{\sfdefault}{m}{sl}
\SetMathAlphabet{\mathsfit}{bold}{\encodingdefault}{\sfdefault}{bx}{n}

\DeclareMathOperator*{\argmax}{arg\,max}

\usepackage{amsmath,amssymb,amsthm,mathtools}
\usepackage{booktabs,multirow,longtable,array}
\usepackage{algorithm}
\usepackage{algpseudocode}
\usepackage{graphicx}
\usepackage{wrapfig}
\usepackage{xcolor}
\usepackage{tikz}
\usetikzlibrary{arrows.meta,calc,fit,positioning,shapes.geometric}
\usepackage{microtype}
\usepackage{placeins,float}
\usepackage{hyperref}
\usepackage{url}
\hypersetup{hidelinks,pdfauthor={Chuiyang Meng, Wenlu Yu, Ming Tang, Cheng Li},pdftitle={Social Circuits behind Multi-agent Echo Chambers}}
\usepackage{colortbl}
\usepackage{subcaption}

\definecolor{CGDshade}{HTML}{EDF5EF}

\newcommand{\scms}[2]{%
  \begingroup
  \renewcommand{\arraystretch}{0.85}%
  \begin{tabular}[c]{@{}c@{}}
    #1\\
    $\pm#2$
  \end{tabular}%
  \endgroup
}

\newtheorem{theorem}{Theorem}
\newtheorem{proposition}{Proposition}
\newtheorem{corollary}{Corollary}
\newtheorem{lemma}{Lemma}
\theoremstyle{definition}

\newcommand{\method}{\textsc{Social Circuits}}
\newcommand{\cgd}{\textsc{CGD}}

\title{\centering Social Circuits behind\\Multi-agent Echo Chambers}

\author{\begin{tabular}{c}
Chuiyang Meng$^{1}$ \quad Wenlu Yu$^{2}$ \quad Ming Tang$^{3}$ \quad Cheng Li$^{1}$ \\[3pt]
{\normalfont $^{1}$Simon Fraser University} \\
{\normalfont $^{2}$University of Alberta} \\
{\normalfont $^{3}$Southern University of Science and Technology} \\[3pt]
{\normalfont\href{mailto:chuiyang_meng@sfu.ca}{{\fontencoding{T1}\selectfont\texttt{chuiyang\_meng@sfu.ca}}} \quad \texttt{wenlu3@ualberta.ca}} \\
{\normalfont\texttt{tangm3@sustech.edu.cn} \quad \href{mailto:li_cheng@sfu.ca}{{\fontencoding{T1}\selectfont\texttt{li\_cheng@sfu.ca}}}}
\end{tabular}}

\iclrfinalcopy

\begin{document}
\maketitle
\lhead{} 

\begin{abstract}
Language-model agents exchange messages to combine evidence, but their communication can also create echo chambers that reinforce shared errors.
However, overall task performance does not explain how a message changes the receiving agent's internal activations and affects its decision.
In this work, we introduce \method{}, a framework for tracing message effects through receiver activations.
We compare the receiver's answers before and after changing a message.
Then, we restore selected activations recorded under the original message to determine how much of the message effect these activations reproduce.
Based on \method{}, we propose \emph{Circuit-Guided Deliberation} (CGD), which learns to select useful messages using receiver activation changes.
We establish when activation replacement preserves receiver decisions and bound the gap between CGD's task performance and the best achievable through message selection.
Experiments show that receiver activation changes explain the message effects and guide message selection that improves the task performance.
Across three models and four datasets, CGD achieves the highest or joint-highest average accuracy in our main comparisons while generating fewer tokens than multi-agent baselines.
\end{abstract}

\section{Introduction}

\begin{wrapfigure}{r}{0.55\textwidth} 
  \centering
  \includegraphics[width=0.55\textwidth]{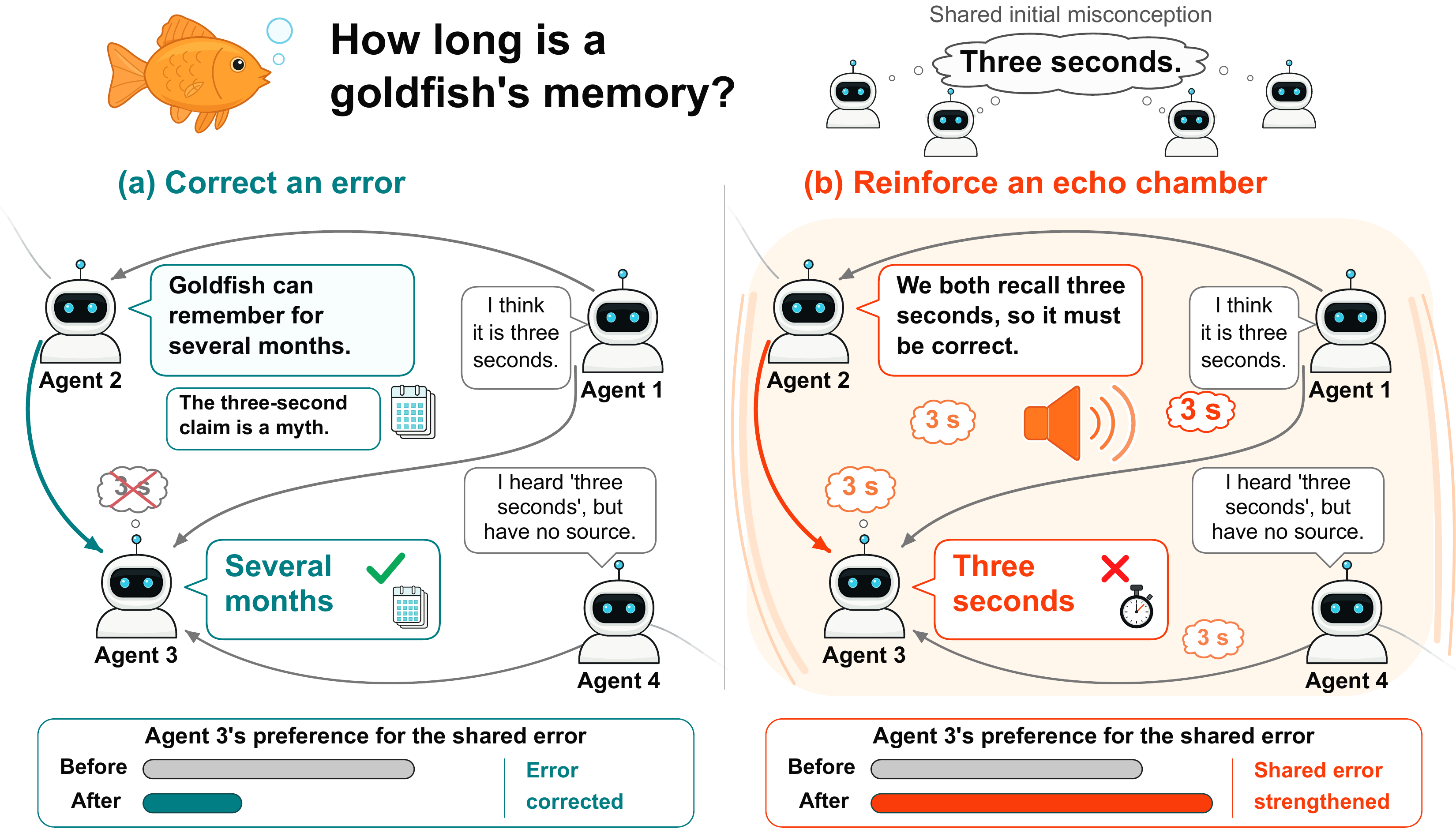} 
  \caption{{Comparison of the error correction and an echo chamber. True evidence corrects the receiver’s answer, whereas a message presenting the agents’ shared error as supporting evidence reinforces the echo chamber. }}
\label{fig:goldfish_example}
\end{wrapfigure}
Multi-agent systems enable language models to solve tasks collaboratively by exchanging messages that present evidence, reasoning, or proposed answers.
A message can correct a receiving agent's mistake, but it can also reinforce an error already shared by several agents.
We call the latter case an \emph{echo chamber}.
Fig.~\ref{fig:goldfish_example} illustrates that agreement can arise both when communication corrects errors and when it reinforces them.

Existing works show that debate can improve collective answers \citep{du2024multiagent}, while independent sampling and voting can achieve comparable performance under matched budgets \citep{smit2024mad,wang2023selfconsistency}.
However, agents can share errors before communication \citep{estornell2024multillm,kim2025correlated}.
Fig.~\ref{fig:motivating_example} shows that receiving more messages can either improve or degrade the task performance.
Comparing overall task performance alone does not explain how individual messages affect receivers \citep{lowe2019pitfalls}.
This motivates us to study the following question:

\textit{How do messages change a receiver's internal activations and decision, and can these activation changes help select messages that improve task performance?}

\par\noindent\begin{minipage}{\textwidth} \begin{wrapfigure}[16]{r}{0.38\textwidth} \centering \includegraphics[ width=\linewidth, trim=8bp 8bp 20bp 115bp, clip ]{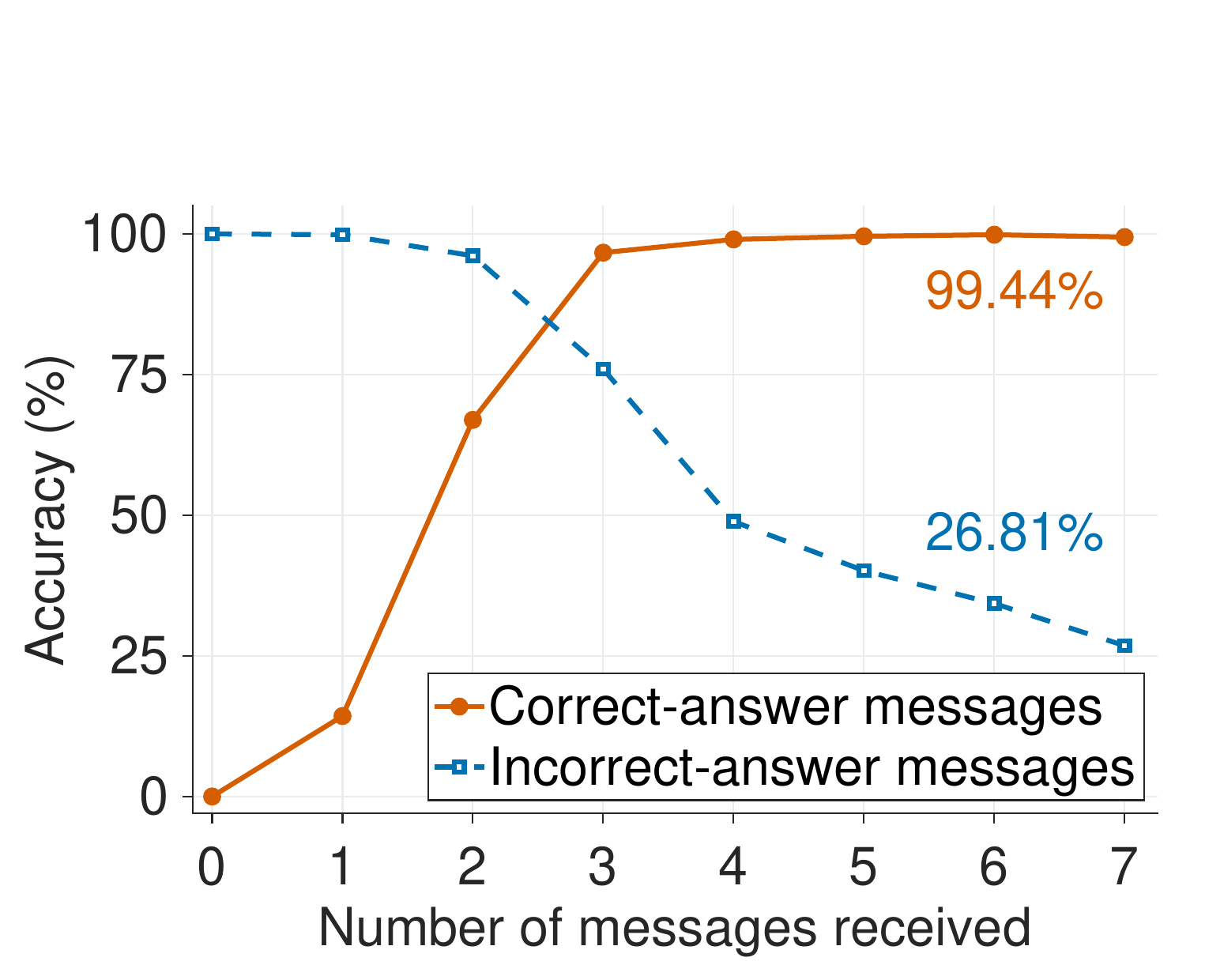} \caption{Receiver accuracy on Gemma-3-4B-it as more messages support the correct (orange) or incorrect (blue) answer.} \label{fig:motivating_example} \end{wrapfigure} 
To answer this question, we introduce \method{}, a framework for tracing message effects through receiver activations.
First, we compare the receiver's answers before and after removing a message or shuffling its tokens, with all other inputs fixed.
This \emph{matched rerun} measures the message's effect on answer scores and task performance.
Then, to trace this effect inside the receiver, we keep the message removed or shuffled and replace selected receiver activations with those recorded under the original message.
This \emph{activation replacement} measures the selected activations' contribution to the message-induced answer-score changes.

\medskip
Our observations show that receiver activation changes help distinguish how different messages affect the answers.
Based on these observations, we propose \emph{Circuit-Guided Deliberation} (\cgd{}), which uses these changes to learn which messages improve task performance.
CGD scores different sets of available messages, which are called \emph{message combinations}, by using sender activations and the receiver's activation changes.
The receiver then answers using the selected messages.
\par\end{minipage}\par

To sum up, our main contributions are as follows:

\begin{itemize}
\item We introduce \method{}, a framework for tracing how messages affect receiver decisions through internal activations.
It identifies receiver activation changes that reproduce part of the message's effect, whether the message corrects or reinforces a shared error.

\item Based on \method{}, we propose \emph{Circuit-Guided Deliberation} (\cgd{}).
By using the sender activations and the receiver's activation changes, \cgd{} learns to select message combinations that improve task performance.

\item Through theoretical analysis, we show the condition under which the activation replacement preserves receiver decisions without exactly reproducing answer scores.
We also bound CGD's expected performance gap against the optimal message combination and establish conditions under which minimizing its expected training loss closes this gap.

\item Experiments show that receiver activation changes help explain message effects and improve message selection.
Across three models and four datasets, \cgd{} achieves the highest or joint-highest average accuracy in our main comparisons while generating fewer tokens than multi-agent baselines.
On 2Wiki free generation, its accuracy exceeds the strongest baseline by 11.25\%--30.69\% under the same generation token limits.
\end{itemize}


\section{Related Work}

\paragraph{Multi-agent collaboration.}
Debate improves collective answers \citep{du2024multiagent}, while sampling and voting remain competitive \citep{wang2023selfconsistency,smit2024mad,choi2025debate}.
Agreement can reinforce shared errors \citep{kim2025correlated,estornell2024multillm}.
Other approaches model response dependencies \citep{ai2026beyond}, favor minority answers \citep{he2026minority}, or train with uncertainty \citep{tang2026variance}.
MAD-M2 masks unreliable memories from earlier rounds \citep{tian2026memorymasking}, while MADC orders roles by reasoning-path consistency \citep{zhang2026madc}.

\paragraph{Communication design and selection.}
G-Designer learns communication graphs \citep{zhang2025gdesigner}, while MOC combines multi-hop evidence under token constraints \citep{guan2026moc}.
SafeSieve prunes communication using semantic evaluation and performance feedback \citep{zhang2026safesieve}.
Shapley Message Value prunes communication through contribution estimates in cooperative multi-agent reinforcement learning \citep{ijcai2022p82}.

\paragraph{Causal analysis of communication.}
Text interventions reveal content effects \citep{lin2025isolated} and information propagation \citep{shen2025propagation}.
Comparisons with independent agents calibrate confidence \citep{huang2026cagecal}.
Restoring transmitted components identifies information affecting predictions \citep{zhang2026latent}.
Edge masking identifies compact communication subgraphs that preserve task performance \citep{li2026e2explainer}.

\paragraph{Internal mechanisms.}
Causal abstraction studies representations in causal models \citep{geiger2021causal,geiger2025causal}.
Mediation analysis traces input effects \citep{gultchin2021complex}, and sparse circuits identify behavior-relevant components \citep{marks2025sparse}.
Internal causal features predict output correctness \citep{huang2025internal}.
Agents communicate through activations \citep{ramesh2025activations} or state-change trajectories \citep{tang2025statedelta}.

\section{\method{}}
\label{sec:setup}

In this section, we introduce \method{}, a framework for tracing how a message from one agent affects another agent's answer.
Within \method{}, we first apply the \emph{message rerun} to measure the message's effect and \emph{activation replacement} to study whether selected activations in the receiving agent reproduce this effect.
Then, we evaluate whether the message corrects the receiving agent's answer or reinforces an error shared by the agents.
\method{} establishes a unified causal framework that connects the message exchange between agents, changes in receiver activations, and task performance.

We consider language-model agents that exchange text messages to solve a task.
Each agent can both send and receive messages.
For a given message, we refer to the agent sending it as the \emph{sender} and the agent receiving it as the \emph{receiver}.
We measure how the message affects the receiver's answer while keeping the task input and preceding conversation fixed.
Let $\mathcal O$ be a finite, nonempty set of candidate answers.
We denote the receiver's \emph{answer score} for the candidate answer $o\in\mathcal O$ by $F(o)$.
We use log probabilities as answer scores, calculated from the receiver's token probabilities conditioned on its input and preceding answer tokens.
Appendix~\ref{app:protocol-scoring} provides the detailed calculation of $F(o)$.
A higher score indicates a higher probability assigned to that answer.
The receiver selects $q=\argmax_{o\in\mathcal O}F(o)$.
We denote the corresponding \emph{task utility} as $u(q)$ to measure the performance under the task's evaluation rule, with higher values indicating better performance.

\subsection{Metrics for Measuring Message Effects}
We introduce metrics to characterize a message's effect on the receiver's answer scores and task performance.
We also quantify how much of the answer-score effect is reproduced by activation replacement.
We calculate these metrics using three runs as illustrated in Figure~\ref{fig:SC}: the \emph{original-message run}, the \emph{reference run}, and the \emph{activation-replacement run}.
We fix the task input, preceding conversation, and receiver model.
(i) In the \emph{original-message run}, the receiver reads the sender's original message, and we record its answer scores $F^{\rm msg}(o)$ and internal activations.
(ii) In the \emph{reference run}, we omit this message or shuffle its tokens and record the corresponding answer scores as $F^{\rm ref}(o)$.
By comparing these two runs, we characterize the effect of the original message against the reference message.

We use \emph{activation replacement} \citep{vig2020investigating,meng2022locating,zhang2024towards} to determine whether changing receiver activations alone can reproduce the message's effect on the answer scores.
For activation replacement, we select receiver layers at regular intervals and record the activation vectors of fixed tokens after the message.
Because the text of these tokens remains unchanged across runs, we can compare how the message changes their activation vectors.
The selected layers and token positions are provided in Appendix~\ref{app:protocol-replacement}.
(iii) In the \emph{activation-replacement run}, we keep the reference input unchanged and replace the selected receiver activation vectors with those recorded at the corresponding layers and tokens in the original-message run.
We record the receiver's answer scores as $F^{\rm act}(o)$.

\paragraph{Answer-score change.}
For a candidate answer $o\in\mathcal O$, we define the \emph{answer-score changes} caused by the original message and activation replacement as follows:
\begin{align}
\Delta F^{\rm msg}(o)
&=F^{\rm msg}(o)-F^{\rm ref}(o),\\
\Delta F^{\rm act}(o)
&=F^{\rm act}(o)-F^{\rm ref}(o).
\end{align}
In particular, $\Delta F^{\rm msg}(o)$ shows how much the original message affects the score of answer $o$.
$\Delta F^{\rm act}(o)$ shows how replacing the selected activations affects that score while keeping the reference message unchanged.
We compare $\Delta F^{\rm act}(o)$ with $\Delta F^{\rm msg}(o)$ to evaluate whether activation replacement reproduces the message's effect on the answer score.

\paragraph{Task-performance change.}
We measure changes in the receiver's task performance using the task utility $u(q)$ of its selected answer $q$.
For example, under the exact-match evaluation, $u(q)=1$ denotes a correct answer and $u(q)=0$ otherwise.
Let $q^{\rm ref}$, $q^{\rm msg}$, and $q^{\rm act}$ denote the answers selected in the reference, original-message, and activation-replacement runs, respectively.
We define the corresponding \emph{task-performance changes} as follows:
\begin{align}
\Delta u^{\rm msg}
&=u(q^{\rm msg})-u(q^{\rm ref}),\\
\Delta u^{\rm act}
&=u(q^{\rm act})-u(q^{\rm ref}).
\label{eq:decision-quality-change}
\end{align}
We compare $\Delta u^{\rm act}$ with $\Delta u^{\rm msg}$ to evaluate whether activation replacement reproduces the message's effect on the task performance.
Positive values indicate better performance than in the reference run, and negative values indicate worse performance.

\begin{figure}[t]
    \centering    \includegraphics[width=0.95\linewidth,trim=0bp 0bp 0bp 0bp,clip]{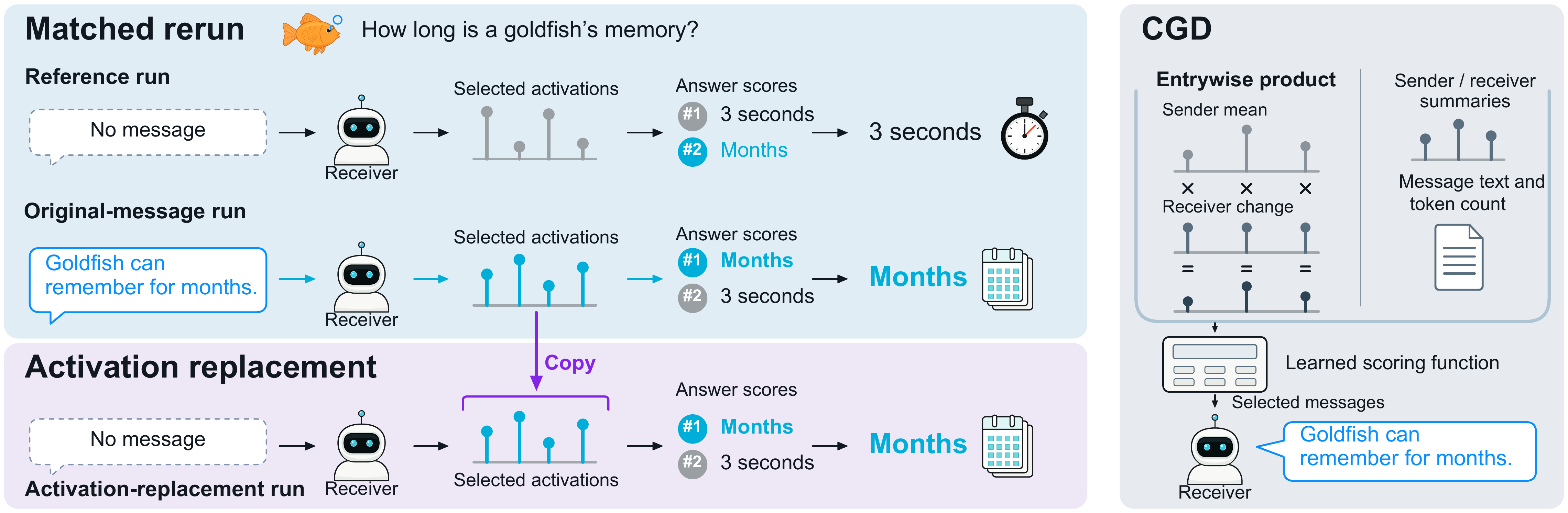}
    \caption{Illustration of \method{} and Circuit-Guided Deliberation (CGD). Matched reruns change the message, whereas activation replacement changes selected receiver activations while preserving the reference input. CGD uses sender summaries and receiver activation changes to select message combinations.}
    \label{fig:SC}
\end{figure}

\paragraph{Message-effect recovery.}
To measure whether a message supports the correct answer or a shared error, we compare the correct and incorrect answers specified by the task.
For correction measurements, $o_1\in\mathcal O$ denotes the correct answer and $o_2\in\mathcal O$ the incorrect answer.
For shared-error measurements, we reverse this order.
The same answers and order are used in all three runs.
We calculate the changes in their score difference as follows:
\begin{equation}
\begin{aligned}
\Delta^{\rm msg}
&=\Delta F^{\rm msg}(o_1)-\Delta F^{\rm msg}(o_2),\\
\Delta^{\rm act}
&=\Delta F^{\rm act}(o_1)-\Delta F^{\rm act}(o_2).
\end{aligned}
\label{eq:direct-path-effects}
\end{equation}
When $o_1$ is an incorrect answer which is favored by several agents, a positive value of $\Delta^{\rm msg}$ indicates reinforcement of their \emph{echo chamber}.
To quantify how much of the message effect is reproduced through receiver activations, we define the \emph{message-effect recovery ratio} as $\rho=
\frac{\mathbb E[\Delta^{\rm act}]}
{\mathbb E[\Delta^{\rm msg}]}$, where $\mathbb E$ averages over the same tasks and random seeds for message generation.

\subsection{Key Observations}
Based on the metrics we discussed above, we present the results in Fig.~\ref{fig:para_2} to show how messages affect receivers' answers and whether these effects can be traced through receiver activations.
We present two observations as follows.
\begin{figure}[t]
\centering
\begin{minipage}[t]{0.3\textwidth}
\centering
\includegraphics[width=\linewidth]{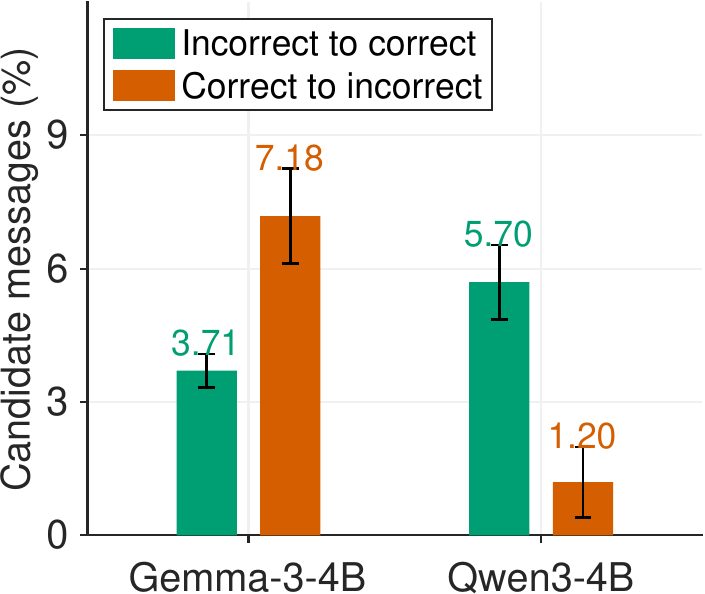}\\
{\footnotesize (a)}
\end{minipage}\hfill
\begin{minipage}[t]{0.3\textwidth}
\centering
\includegraphics[width=\linewidth]{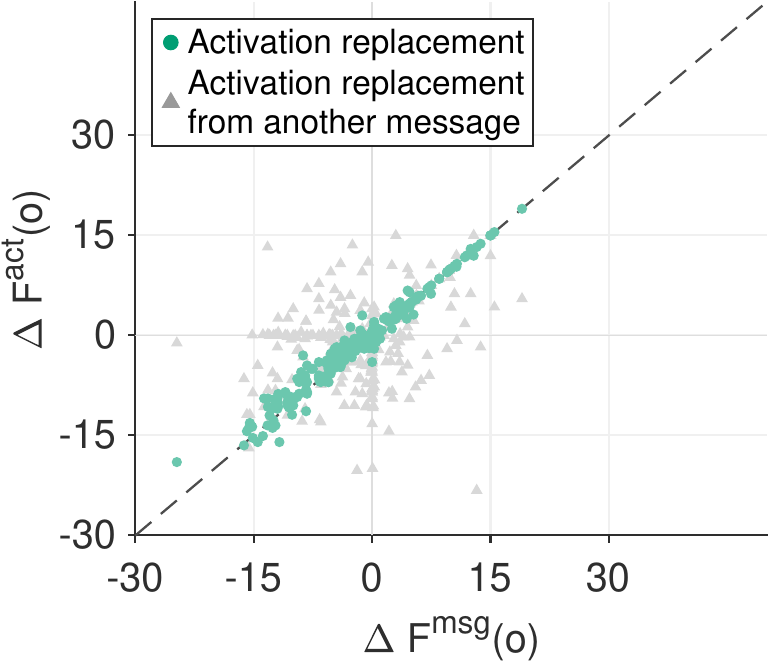}\\
{\footnotesize (b)}
\end{minipage}\hfill
\begin{minipage}[t]{0.3\textwidth}
\centering
\includegraphics[width=\linewidth]{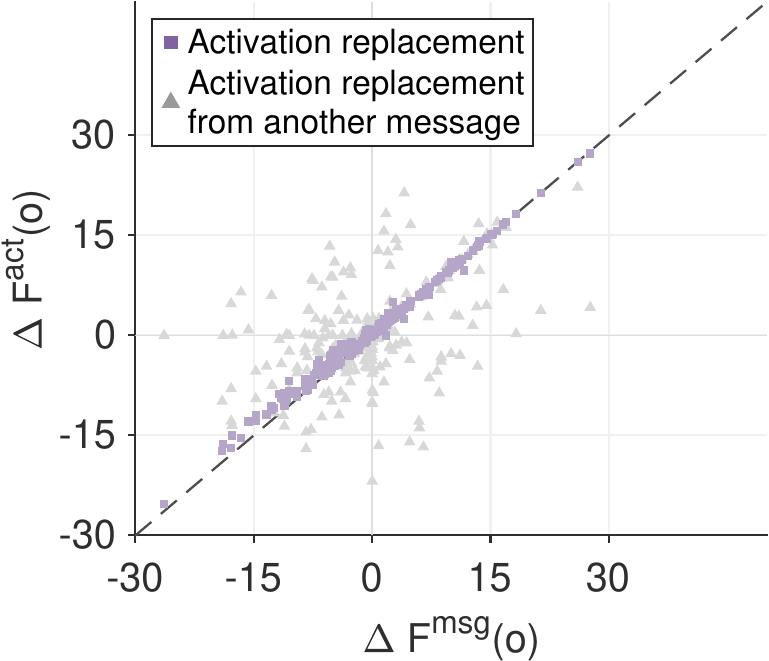}\\
{\footnotesize (c)}
\end{minipage}\\
\begin{minipage}[t]{0.3\textwidth}
\centering
\includegraphics[width=\linewidth]{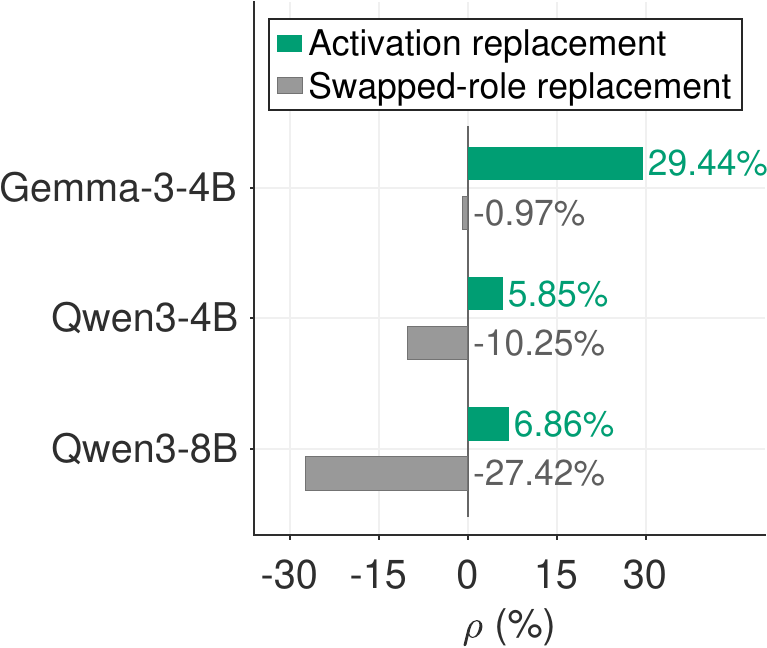}\\
{\footnotesize (d)}
\end{minipage}\hfill
\begin{minipage}[t]{0.3\textwidth}
\centering
\includegraphics[width=\linewidth]{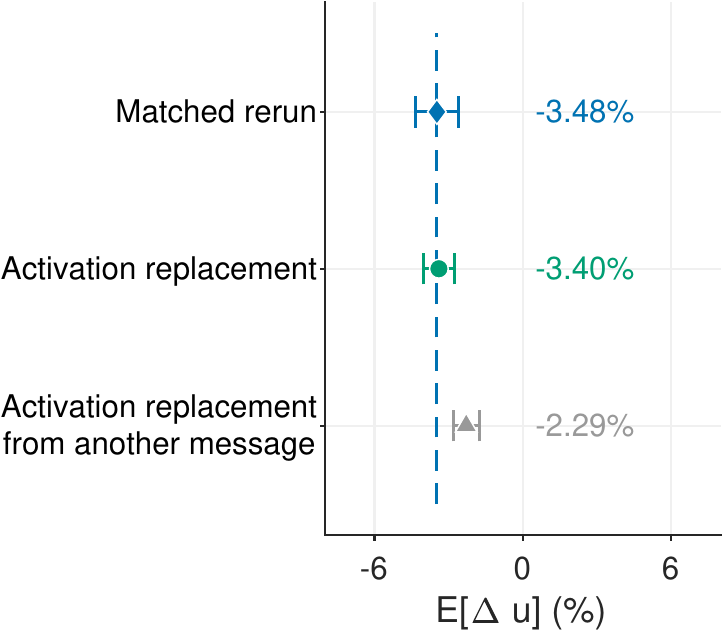}\\
{\footnotesize (e)}
\end{minipage}\hfill
\begin{minipage}[t]{0.3\textwidth}
\centering
\includegraphics[width=\linewidth]{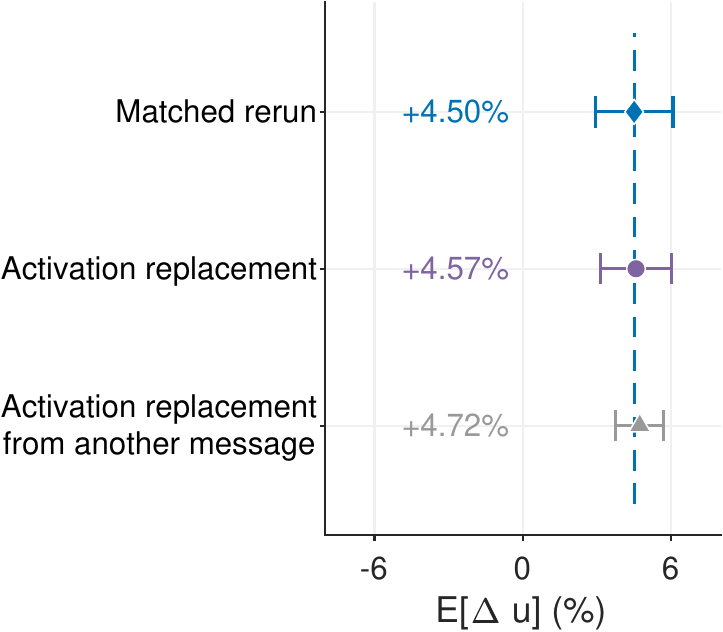}\\
{\footnotesize (f)}
\end{minipage}
\caption{{(a) Fractions of messages that correct or introduce errors for Gemma-3-4B and Qwen3-4B. (b) Answer-score changes for Gemma-3-4B. (c) Answer-score changes for Qwen3-4B. (d) Message-effect recovery $\rho$ across three models. (e) Average task-performance changes for Gemma-3-4B. (f) Average task-performance changes for Qwen3-4B. Diagonal dashed lines in (b) and (c) mark equal answer-score changes. 
Figs. (a)--(c), (e), and (f) use the same candidate messages within each model. Task performance is the accuracy of the higher-scoring answer, and error bars show standard deviations across random seeds.}}
\label{fig:para_2}
\end{figure}

\paragraph{Receiver activation changes are specific to each message.}
We present the fractions of messages that correct a receiver's answer or change a correct answer into an incorrect one in Fig.~\ref{fig:para_2}(a).
Results show that both changes occur in each model.
For example, in Qwen3-4B, 5.70\% of messages correct an answer, while 1.20\% introduce an error.
Thus, communication can improve average task performance even though some messages make the receiver's answer incorrect.
To trace these message effects through receiver activations, we present the answer-score changes in Figs.~\ref{fig:para_2}(b) and (c).
Results show that activation replacement closely reproduces the answer-score changes caused by the same message.
Using activation changes induced by another message increases the mean absolute error from 0.187 to 1.280 in Gemma-3-4B and from 0.121 to 1.039 in Qwen3-4B.
These results show that receiver activation changes help distinguish how different messages affect answer scores.

\paragraph{Activation replacement reproduces the average task performance changes.}
We further show that these receiver activation changes can reproduce message effects on receiver answers.
Fig.~\ref{fig:para_2}(d) presents the message-effect recovery across three models on a synthetic task with messages supporting correct answers or shared errors.
For comparison, we exchange sender and receiver roles in a separate run.
We apply the changes recorded in the original sender to the original receiver, which we call \emph{swapped-role replacement}.
Changes from the original sender or another message are rescaled to match the norm of the original receiver's changes.
Receiver activation replacement reproduces the direction of the average message effect across all three models, while the swapped-role replacement reverses this direction.
However, a change in answer scores does not always change the receiver's answer.
Therefore, we compare the task performance in Figs.~\ref{fig:para_2}(e) and (f).
Activation replacement closely matches the average performance decrease in Gemma-3-4B and the increase in Qwen3-4B.
For individual answers, Table~\ref{tab:activation-answer-recovery} in Appendix~\ref{sec:additional-activation} shows that activation replacement reproduces over 96\% of message-induced answer changes on 2Wiki in both models.
Using changes from another message recovers 41.07\% in Gemma-3-4B and 55.06\% in Qwen3-4B.
Thus, the receiver changes traced by \method{} can reproduce both message-induced error correction and error introduction.

\paragraph{Remark.}
The aforementioned observations demonstrate that \method{} uses receiver activation changes to trace how messages correct answers or introduce errors.
These changes distinguish the effects of different messages.
This motivates using receiver activation changes to learn which messages improve task performance.

\subsection{Theoretical Analysis}
\label{sec:sc-theory}

The aforementioned observations show that receiver activation replacement can reproduce answer changes caused by messages.
We establish sufficient conditions under which activation replacement preserves the receiver's answer and the message's effect on task performance.
Since the receiver selects the highest-scoring answer, we characterize these conditions through the score differences between candidate answers.
For a task with at least two candidate answers, we compare the score difference for each answer pair in the original-message and activation-replacement runs.
We define the \emph{recovery error} $\mathrm{err}$ as the largest absolute change in the answer score difference between the original-message and activation-replacement runs, as follows:
\begin{equation}
\mathrm{err}
=\max_{\substack{o_1,o_2\in\mathcal O\\o_1\ne o_2}}
\left|
\bigl[F^{\rm msg}(o_1)-F^{\rm msg}(o_2)\bigr]
-\bigl[F^{\rm act}(o_1)-F^{\rm act}(o_2)\bigr]
\right|.
\label{eq:activation-contrast-error}
\end{equation}
For activation replacement to change the receiver's answer from $q^{\rm msg}$, at least one answer $o\in\mathcal O\setminus\{q^{\rm msg}\}$ must satisfy $F^{\rm act}(o)\ge F^{\rm act}(q^{\rm msg})$.
We denote the smallest score difference between $q^{\rm msg}$ and the remaining answers in the original-message run by $\mathrm{gap}$, which is defined as follows:
\begin{equation}
\mathrm{gap}
=F^{\rm msg}(q^{\rm msg})
-\max_{o\in\mathcal O\setminus\{q^{\rm msg}\}}F^{\rm msg}(o).
\label{eq:message-answer-gap}
\end{equation}
The following theorem provides conditions for activation replacement to preserve the receiver's answer and the direction of the message's effect on the score difference between $o_1$ and $o_2$.

\begin{theorem}[Preserving answers and message effects]
\label{thm:sequential-social-circuit}
If $\mathrm{err}<\mathrm{gap}$, activation replacement preserves the answer and task-performance change of the original-message run, i.e., $q^{\rm act}=q^{\rm msg}$ and $\Delta u^{\rm act}=\Delta u^{\rm msg}$.
If $|\Delta^{\rm msg}|>\mathrm{err}$, activation replacement and the original message change the score difference between $o_1$ and $o_2$ in the same direction, i.e., $\Delta^{\rm act}\Delta^{\rm msg}>0$.
\end{theorem}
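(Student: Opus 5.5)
The plan is to handle the two claims separately. Both follow by comparing pairwise score differences between the original-message and activation-replacement runs, using only the definitions of $\mathrm{err}$ and $\mathrm{gap}$.

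For the first claim, fix any $o\in\mathcal O\setminus\{q^{\rm msg}\}$ and apply the definition of $\mathrm{err}$ to the ordered pair $(q^{\rm msg},o)$. This gives
\[
F^{\rm act}(q^{\rm msg})-F^{\rm act}(o)\ \ge\ \bigl[F^{\rm msg}(q^{\rm msg})-F^{\rm msg}(o)\bigr]-\mathrm{err}\ \ge\ \mathrm{gap}-\mathrm{err}\ >\ 0,
\]
where the middle inequality uses the definition of $\mathrm{gap}$ as a minimum over all $o\neq q^{\rm msg}$. So $q^{\rm msg}$ is the strict unique maximizer of $F^{\rm act}$, and hence $q^{\rm act}=q^{\rm msg}$. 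This strictness matters: it means no tie-breaking convention for $\argmax$ is needed. Because the reference run is common to both definitions, $\Delta u^{\rm act}=u(q^{\rm act})-u(q^{\rm ref})=u(q^{\rm msg})-u(q^{\rm ref})=\Delta u^{\rm msg}$. If $|\mathcal O|=1$, the $\max$ in $\mathrm{gap}$ is over an empty set, but then the answer is trivially preserved; I would note this briefly and otherwise assume $|\mathcal O|\ge 2$, as the section does.

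For the second claim, I would rewrite $\Delta^{\rm msg}-\Delta^{\rm act}$ so that the reference scores cancel:
\[
\Delta^{\rm msg}-\Delta^{\rm act}=\bigl[F^{\rm msg}(o_1)-F^{\rm msg}(o_2)\bigr]-\bigl[F^{\rm act}(o_1)-F^{\rm act}(o_2)\bigr].
\]
Here the terms $F^{\rm ref}(o_1)-F^{\rm ref}(o_2)$ appear in both $\Delta^{\rm msg}$ and $\Delta^{\rm act}$ with the same sign. Since $o_1\neq o_2$, the right-hand side is bounded in absolute value by $\mathrm{err}$, so $|\Delta^{\rm act}-\Delta^{\rm msg}|\le\mathrm{err}<|\Delta^{\rm msg}|$. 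This places $\Delta^{\rm act}$ in the open interval of radius $|\Delta^{\rm msg}|$ around $\Delta^{\rm msg}$, which excludes $0$ and lies entirely on the side of $\Delta^{\rm msg}$. The interval is open because the inequality $\mathrm{err}<|\Delta^{\rm msg}|$ is strict. Concretely, if $\Delta^{\rm msg}>0$ then $\Delta^{\rm act}\ge\Delta^{\rm msg}-\mathrm{err}>0$, and the case $\Delta^{\rm msg}<0$ is symmetric. Either way $\Delta^{\rm act}\Delta^{\rm msg}>0$.

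Both steps are elementary, and no step is a real obstacle. The only points that need care are the cancellation of the reference scores in the second claim, and the ordering of $o_1,o_2$. That ordering is irrelevant because $\mathrm{err}$ is taken over all ordered pairs with an absolute value, so the argument covers both correction and shared-error measurements.
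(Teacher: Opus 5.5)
Your proposal is correct and follows the paper's proof essentially step for step. For the first claim it uses the chain $F^{\rm act}(q^{\rm msg})-F^{\rm act}(o)\ge\mathrm{gap}-\mathrm{err}>0$, and for the second it cancels the reference scores to get $|\Delta^{\rm act}-\Delta^{\rm msg}|\le\mathrm{err}$ and then splits on the sign of $\Delta^{\rm msg}$; your remarks on strict maximality (no tie rule needed) and on the irrelevance of the order of $o_1,o_2$ are accurate but do not change the argument.
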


The proof is provided in Appendix~\ref{app:proof-sequential-social-circuit}.
Theorem~\ref{thm:sequential-social-circuit} shows that \method{} does not need to recover each answer score exactly to recover a message's effect on the task performance.
If activation replacement keeps the same highest-scoring answer, the receiver's task performance remains unchanged.
In addition, the original message and activation changes can both strengthen or both weaken a shared error.

\section{Circuit-Guided Deliberation}
\label{sec:method}
In this section, we propose \emph{Circuit-Guided Deliberation} (\cgd{}) to select messages for the receiver's final answer.
\method{} traces message effects through receiver activation changes.
We use these changes and sender information to learn which messages improve task performance.
We first describe the scoring function, followed by training, message selection, and theoretical guarantees.

\subsection{Scoring Function}
Each sender forms a candidate message.
Let $\mathcal C$ denote the set of these candidate messages.
Let $\mathcal A\subseteq\mathcal C$ denote a \emph{message combination}.
It specifies which messages the receiver uses for its final answer.
We denote the set of all combinations by $\mathcal S=\{\mathcal A\mid\mathcal A\subseteq\mathcal C\}$.
$\mathcal S$ includes the empty combination $\varnothing$, corresponding to a final answer without candidate messages.
For each sender, we record activations after it reads the input and its own evidence.
For the receiver, we record activations after it reads each message combination $\mathcal A\in\mathcal S$.
In both cases, we use the output vector at the last input token of each selected layer before output generation.

CGD uses the recorded activations to represent sender information and receiver changes for scoring messages.
In particular, to reduce the input dimension without learning additional parameters, we calculate $d$ weighted sums of each activation vector and average these $d$-dimensional vectors over the selected layers.
We refer to this average as an \emph{activation summary}.
The weights are randomly initialized, shared across agents using the same language model and its selected layers, and kept fixed during training.
Let $\mathbf{r}_{\mathcal A}$ denote the receiver's activation summary after reading $\mathcal A$, with $\mathbf{r}_{\varnothing}$ corresponding to no candidate messages.
For any nonempty set $\mathcal A$, let $\mathbf{s}_{\mathcal A}$ denote the average activation summary of the senders whose messages belong to $\mathcal A$.
We represent the receiver changes caused by the messages as $\mathbf r_{\mathcal A}-\mathbf r_{\varnothing}$.
We combine these changes with sender information using $\mathbf s_{\mathcal A}\odot(\mathbf r_{\mathcal A}-\mathbf r_{\varnothing})$, where $\odot$ denotes elementwise multiplication.
This product lets CGD score the same receiver change differently depending on sender information.
The scoring input $\mathbf x_{\mathcal A}$ includes the message text vector and token count, $\mathbf s_{\mathcal A}$, $\mathbf r_{\varnothing}$, and the aforementioned product.
To compare $\mathcal A$ with other message combinations, we also include the average receiver summary over the other nonempty combinations.
We introduce a shared scoring function $f_{\boldsymbol{\theta}}$ that calculates a weighted sum of $\mathbf{x}_{\mathcal A}$ with a bias.
We denote the learnable parameters as $\boldsymbol{\theta}$, which include weights and bias.
Adding the same bias to all nonempty combinations leaves their ranking unchanged.
It affects whether their highest score exceeds zero, which determines whether CGD uses messages.
The score for selecting a message combination $\mathcal A$ is defined as follows:
\begin{equation}
S(\mathcal A)=f_{\boldsymbol{\theta}}(\mathbf{x}_{\mathcal A})
\quad\text{for }\mathcal A\ne\varnothing,
\quad
S(\varnothing)=0.
\label{eq:cgd-ranking-score}
\end{equation}

\subsection{Message Selection}\label{sec:cgd-message-selection}
We train $\boldsymbol{\theta}$ to assign higher scores to message combinations that improve task performance.
For each training task, we run the receiver separately with each combination $\mathcal A\in\mathcal S$ and denote its answer as $q_{\mathcal A}$.
By using the task score $u$ defined in Section~\ref{sec:setup}, we define the gain from using $\mathcal A$ instead of using no messages as $V(\mathcal A)=u(q_{\mathcal A})-u(q_{\varnothing})$,
where $q_{\varnothing}$ is the receiver's answer without using any message.
To rank message combinations by their gains, we train $f_{\boldsymbol{\theta}}$ using the pairwise comparisons.
For each training task, we consider two combinations $\mathcal A,\mathcal B\in\mathcal S$ with $V(\mathcal A)>V(\mathcal B)$.
We denote its pairwise logistic loss \citep{burges2005learning} as
\begin{equation}
\ell(\mathcal A,\mathcal B)
=\bigl(V(\mathcal A)-V(\mathcal B)\bigr)
\log\!\left(1+\exp\!\left[S(\mathcal B)-S(\mathcal A)\right]\right).
\label{eq:cgd-pairwise-loss}
\end{equation}
If there is a larger difference in task scores between $\mathcal B$ and $\mathcal A$, this loss increases the penalty for ranking $\mathcal B$ above $\mathcal A$.
We update $\boldsymbol{\theta}$ by minimizing the summed pairwise loss and leaving the bias unpenalized.
During implementation, CGD keeps $\boldsymbol{\theta}$ fixed and selects the highest-scoring message combination as $\widehat{\mathcal A}=\argmax_{\mathcal A\in\mathcal S}S(\mathcal A)$.
The receiver then answers using the messages in $\widehat{\mathcal A}$.

\subsection{Message-Selection Guarantees}
\label{sec:cgd-correlated-selection}

Minimizing a pairwise loss does not always select the message combination with the highest expected task utility \citep{duchi2010consistency,dembczynski2012consistent}.
Hence, we bound CGD's loss in expected task utility and show when pairwise training selects an optimal combination.
In this analysis, we use $u(q)=1$ for task success and $0$ otherwise.
With the available messages, scoring inputs, and learned scoring function fixed, let $\mathcal P$ denote the joint distribution of receiver answers and task utilities across message combinations and activation-replacement runs.
We denote expectation and probability under this distribution by $\mathbb E_{\mathcal P}$ and $\Pr_{\mathcal P}$, respectively.
We compare CGD's selection $\widehat{\mathcal A}$ with an optimal combination $\mathcal A^\star\in\argmax_{\mathcal A\in\mathcal S}\mathbb E_{\mathcal P}[u(q_{\mathcal A})]$.
The expected task-utility difference between the optimal message
combination and CGD's selection is
$\mathbb E_{\mathcal P}[V(\mathcal A^\star)-V(\widehat{\mathcal A})]$.

If activation replacement preserves receiver answers, the task-utility differences between message combinations remain unchanged.
We replace the receiver's selected activations in the no-message run with those recorded after it reads $\mathcal A$.
We denote the receiver's answer after replacement by $q_{\mathcal A}^{\mathrm{act}}$.
The corresponding utility is $u^{\rm act}_{\mathcal A}=u(q^{\rm act}_{\mathcal A})$, with $q^{\rm act}_{\varnothing}=q_{\varnothing}$.
Thus, $\Pr_{\mathcal P}(q^{\rm act}_{\mathcal A}\ne q_{\mathcal A})$ is the probability that activation replacement and the run using $\mathcal A$ return different answers.
We denote the covariance between the task utilities of different message combinations by $\operatorname{Cov}_{\mathcal P}$.

Based on $\ell$ defined in Eq.~\eqref{eq:cgd-pairwise-loss}, we define the expected pairwise loss as $\mathcal L(S)=\mathbb E_{\mathcal P}[\sum_{\mathcal A,\mathcal B\in\mathcal S:\,V(\mathcal A)>V(\mathcal B)}\ell(\mathcal A,\mathcal B)]$.
Let $\mathcal L^\star$ denote the infimum of $\mathcal L(S)$ over all real-valued message-combination scores.
Then, we bound the expected task-utility difference between the optimal message combination and CGD's selection in the following theorem.

\begin{theorem}[CGD's Expected Task-Utility Difference] \label{thm:cgd-recovery-ranking} For $|\mathcal S|\ge2$, any joint distribution $\mathcal P$, and any finite learned scores, the expected task-utility difference between $\mathcal A^\star$ and $\widehat{\mathcal A}$ is bounded by $\mathbb E_{\mathcal P}[V(\mathcal A^\star)-V(\widehat{\mathcal A})] \le \sum_{\mathcal B\in\mathcal S\setminus\{\mathcal A^\star,\widehat{\mathcal A}\}} \max\{0,\operatorname{Cov}_{\mathcal P}(u^{\rm act}_{\widehat{\mathcal A}}-u^{\rm act}_{\mathcal A^\star},u^{\rm act}_{\mathcal B})\} + \sqrt{|\mathcal S|(\mathcal L(S)-\mathcal L^\star)} +4(|\mathcal S|-2)\max_{\mathcal A\in\mathcal S}\Pr_{\mathcal P}(q^{\rm act}_{\mathcal A}\ne q_{\mathcal A})$. \end{theorem}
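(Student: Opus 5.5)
We plan to split the regret into three pieces: one controlled by the ranking loss, one by correlations among the replacement utilities, and one by replacement failures. Write $\Delta:=\mathbb E_{\mathcal P}[V(\mathcal A^\star)-V(\widehat{\mathcal A})]=\mathbb E_{\mathcal P}[u(q_{\mathcal A^\star})-u(q_{\widehat{\mathcal A}})]$, since the $u(q_\varnothing)$ terms cancel. If $\widehat{\mathcal A}=\mathcal A^\star$ there is nothing to prove, so we assume they differ.

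\textbf{Step 1: pass to replacement utilities.} Because utilities are binary, $|u(q_{\mathcal A})-u^{\rm act}_{\mathcal A}|\le \mathbf 1\{q^{\rm act}_{\mathcal A}\ne q_{\mathcal A}\}$. Hence every expectation or pairwise probability involving $u$ changes by at most $\Pr_{\mathcal P}(q^{\rm act}_{\mathcal A}\ne q_{\mathcal A})$ per combination used when we swap in $u^{\rm act}$.

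The pairwise decomposition below compares $\mathcal A^\star$ and $\widehat{\mathcal A}$ against each of the $|\mathcal S|-2$ other combinations $\mathcal B$. Each such comparison involves a bounded number of swaps. We expect a constant of $4$ per $\mathcal B$: two swaps for the utilities of $\mathcal A^\star$ and $\widehat{\mathcal A}$ inside the product/covariance, and two for the $\mathcal B$ utility terms. This yields the term $4(|\mathcal S|-2)\max_{\mathcal A}\Pr_{\mathcal P}(q^{\rm act}_{\mathcal A}\ne q_{\mathcal A})$. Pair $(\mathcal A^\star,\widehat{\mathcal A})$ itself will be handled directly with the original utilities, so it contributes no swap cost.

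\textbf{Step 2: population-optimal scores and excess loss.} For binary $u$, the expected loss decomposes over unordered pairs $\{\mathcal A,\mathcal B\}$. With $p_{\mathcal{AB}}=\mathbb E[(V(\mathcal A)-V(\mathcal B))_+]$ and $t=S(\mathcal A)-S(\mathcal B)$, each pair contributes
\[
p_{\mathcal{AB}}\log(1+e^{-t})+p_{\mathcal{BA}}\log(1+e^{t}).
\]
Note that $p_{\mathcal{AB}}-p_{\mathcal{BA}}=\mathbb E[u(q_{\mathcal A})-u(q_{\mathcal B})]$. Taking the infimum pairwise (a lower bound for $\mathcal L^\star$, since it relaxes consistency of the scores), the excess loss on each pair is at least a Bregman/KL-type quantity. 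We then apply the Pinsker-style inequality for logistic loss: $\phi(t)-\inf\phi\ge \tfrac12(\text{weight})\,(\text{sign-misranking margin})^2$. This lower-bounds the excess loss by $\tfrac{(p_{\mathcal{AB}}-p_{\mathcal{BA}})^2}{2(p_{\mathcal{AB}}+p_{\mathcal{BA}})}$ whenever the sign of $t$ disagrees with the sign of $p_{\mathcal{AB}}-p_{\mathcal{BA}}$.

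Since $\widehat{\mathcal A}$ maximizes $S$, we have $S(\widehat{\mathcal A})\ge S(\mathcal A^\star)$. So pair $(\mathcal A^\star,\widehat{\mathcal A})$ is misranked whenever $\Delta>0$. Using $p_{\mathcal{AB}}+p_{\mathcal{BA}}\le 1$ gives $\Delta^2\le 2(\mathcal L(S)-\mathcal L^\star)$.

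\textbf{Step 3: recover the factor $|\mathcal S|$ and the covariance term.} The main obstacle is that the pairwise infimum is not attained jointly by a single score vector. The relaxation in Step 2 is therefore only valid up to a slack coming from the coupling between $\mathcal A^\star,\widehat{\mathcal A}$ and each third combination $\mathcal B$.

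First attempt: expand the pair weights $p$ for triples, using $\mathbb E[(X)_+]$ with $X=u_{\mathcal A}-u_{\mathcal B}$ and $\mathbb E[u_{\mathcal A}u_{\mathcal B}]=\mathbb E u_{\mathcal A}\,\mathbb E u_{\mathcal B}+\operatorname{Cov}$. The inconsistency between pairwise-optimal margins through $\mathcal B$ should then equal $\operatorname{Cov}(u^{\rm act}_{\widehat{\mathcal A}}-u^{\rm act}_{\mathcal A^\star},u^{\rm act}_{\mathcal B})$. Only its positive part can hurt, which produces the $\max\{0,\cdot\}$ sum.

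Finally, Cauchy--Schwarz over the $|\mathcal S|$ pairs involving $\widehat{\mathcal A}$ converts the sum of per-pair excess losses into $\sqrt{|\mathcal S|(\mathcal L(S)-\mathcal L^\star)}$. If the triple bookkeeping fails, the fallback is to accept a looser constant in the covariance term and verify the remaining pieces separately.
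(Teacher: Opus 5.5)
\paragraph{Verdict.} There is a genuine gap: Step~2 proves an inequality that is false, and Step~3 is not yet an argument.

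\paragraph{Where Step~2 fails.} Write $\mathcal L(S)=\sum_{\{\mathcal A,\mathcal B\}}\phi_{\mathcal A\mathcal B}\bigl(S(\mathcal A)-S(\mathcal B)\bigr)$. The pairwise relaxation gives $\sum\inf\phi_{\mathcal A\mathcal B}\le\mathcal L^\star$. So your per-pair Pinsker bound controls $\mathcal L(S)-\sum\inf\phi_{\mathcal A\mathcal B}$, which is \emph{at least} $\mathcal L(S)-\mathcal L^\star$. The inequality points the wrong way, and $\Delta^2\le2(\mathcal L(S)-\mathcal L^\star)$ is in fact false. In the paper's unequal-covariance example ($|\mathcal S|=4$), the unique finite minimizer has $\mathcal L(S)=\mathcal L^\star$. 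Yet it selects a combination whose expected utility is below the optimum by $\mathbb E[u(q_\varnothing)]>0$.

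In that example the entire regret is paid for by the covariance term. So that term must come out of the same inequality that produces the loss term; it is not a correction added to a pairwise bound. Keeping the slack $\mathcal L^\star-\sum\inf\phi_{\mathcal A\mathcal B}$ explicitly would not help either. It is a global quantity over all pairs and would enter under the square root, not linearly as the covariance sum does.

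\paragraph{Step~3.} It only asserts that the margin inconsistency ``should'' equal the covariance, and offers a fallback with unspecified constants. It also attributes the factor $|\mathcal S|$ to Cauchy--Schwarz, which is not where that factor comes from.

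\paragraph{The missing idea.} Work with the gradient of $\mathcal L$ at the learned scores, not with pairwise-optimal margins. Write $\partial_{\mathcal A}\mathcal L=\partial\mathcal L/\partial S(\mathcal A)$. Binary utilities give
$2\,\partial_{\mathcal A}\mathcal L=\sum_{\mathcal B\ne\mathcal A}\Pr\bigl(u(q_{\mathcal A})\ne u(q_{\mathcal B})\bigr)\tanh\bigl((S(\mathcal A)-S(\mathcal B))/2\bigr)-|\mathcal S|\,\mathbb E[u(q_{\mathcal A})]+\sum_{\mathcal B\in\mathcal S}\mathbb E[u(q_{\mathcal B})]$.
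Hence $2(\partial_{\widehat{\mathcal A}}\mathcal L-\partial_{\mathcal A^\star}\mathcal L)$ contains the term $|\mathcal S|\Delta$.

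Now raise $S(\mathcal A^\star)$ and lower $S(\widehat{\mathcal A})$ by equal amounts. Along this direction the curvature is at most $1+2(|\mathcal S|-2)/4=|\mathcal S|/2$. One descent step together with $\mathcal L\ge\mathcal L^\star$ then gives $|\partial_{\widehat{\mathcal A}}\mathcal L-\partial_{\mathcal A^\star}\mathcal L|\le\sqrt{|\mathcal S|(\mathcal L(S)-\mathcal L^\star)}$. This curvature bound is the source of $|\mathcal S|$.

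The selection rule $S(\widehat{\mathcal A})\ge S(\mathcal B)$ does the rest:
\begin{itemize}
\item The direct-pair $\tanh$ contribution becomes nonpositive.
\item Each third-party term is bounded by $\max\{0,\Pr(u(q_{\mathcal A^\star})\ne u(q_{\mathcal B}))-\Pr(u(q_{\widehat{\mathcal A}})\ne u(q_{\mathcal B}))\}$.
\end{itemize}
Expanding the disagreement probabilities, that difference equals $\Delta(1-2\mathbb E[u(q_{\mathcal B})])+2\operatorname{Cov}(u(q_{\widehat{\mathcal A}})-u(q_{\mathcal A^\star}),u(q_{\mathcal B}))$. Since $\Delta\ge0$, this is at most $\Delta+2\max\{0,\operatorname{Cov}(\cdot)\}$. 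Moving the resulting $(|\mathcal S|-2)\Delta$ to the left leaves $2\Delta$, which yields the bound with the original utilities.

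\paragraph{Swapping to $u^{\rm act}$.} Only after that should you swap to $u^{\rm act}$, and only inside the covariances. Use $|\operatorname{Cov}(u(q_{\mathcal A}),u(q_{\mathcal B}))-\operatorname{Cov}(u^{\rm act}_{\mathcal A},u^{\rm act}_{\mathcal B})|\le2\max_{\mathcal A'\in\mathcal S}\Pr(q^{\rm act}_{\mathcal A'}\ne q_{\mathcal A'})$ for $\mathcal A\in\{\widehat{\mathcal A},\mathcal A^\star\}$. Your Step~1 count of $4$ per $\mathcal B$ matches this last step.
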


Theorem~\ref{thm:cgd-recovery-ranking} bounds the expected
task-utility difference for CGD's learned scoring function.
If activation replacement preserves receiver answers and task utilities
after replacement have equal covariance across distinct combinations,
the first and third terms vanish.
Under these conditions, reducing $\mathcal L(S)-\mathcal L^\star$
tightens the bound on CGD's expected task-utility difference.
The proof is provided in Appendix~\ref{app:cgd-recovery-transfer}.

\section{Experiments}
\label{sec:experiments}

\begin{table}[t]
\centering
\caption{ Accuracy (\%, mean $\pm$ standard deviation over three seeds) and mean tokens per task for generated messages and final answers under the original generation settings. 
On 2Wiki A/B, CGD sends passages and responses, and ollaboration baselines send responses alone. }
\label{tab:four-dataset-accuracy-tokens}
\providecolor{CGDshade}{HTML}{EDF5EF}
\providecommand{\scms}[2]{}
\renewcommand{\scms}[2]{#1{\fontsize{5}{5}\selectfont$\pm#2$}}
\fontsize{7}{8}\selectfont
\setlength{\tabcolsep}{1pt}
\renewcommand{\arraystretch}{1.10}
\begin{tabular*}{\linewidth}{@{\extracolsep{\fill}}ll*{8}{c}>{\columncolor{CGDshade}[\tabcolsep][0pt]}c@{}}
\toprule
 & & \multicolumn{2}{c}{Single-agent} & \multicolumn{7}{c}{Multi-agent} \\
\cmidrule(lr){3-4}\cmidrule(lr){5-11}
Model & Metric & Single & SC6 & Majority & MAD & MAD-M2 & MADC & \shortstack{MOC} & SafeSieve & \textbf{CGD} \\
\midrule
\multicolumn{11}{l}{\textbf{2Wiki (A/B answers)}} \\
\multirow{2}{*}{Gemma-3-4B} & Accuracy $\uparrow$ & \scms{78.89}{0.48} & \scms{77.78}{1.46} & \scms{83.19}{0.64} & \scms{80.69}{1.05} & \scms{82.92}{1.50} & \scms{79.44}{1.27} & \scms{86.94}{0.96} & \scms{79.72}{3.07} & \scms{\textbf{89.72}}{1.46} \\
 & Tokens $\downarrow$ & 79 & 478 & 219 & 647 & 516 & 638 & 401 & 485 & 139 \\
\multirow{2}{*}{Qwen3-4B} & Accuracy $\uparrow$ & \scms{73.19}{1.73} & \scms{70.83}{2.32} & \scms{73.61}{1.68} & \scms{80.42}{2.08} & \scms{82.08}{1.44} & \scms{77.22}{1.27} & \scms{86.94}{0.64} & \scms{83.89}{1.05} & \scms{\textbf{89.86}}{1.68} \\
 & Tokens $\downarrow$ & 171 & 1025 & 468 & 978 & 1028 & 961 & 864 & 874 & 299 \\
\multirow{2}{*}{Qwen3-8B} & Accuracy $\uparrow$ & \scms{75.69}{0.48} & \scms{72.92}{0.42} & \scms{80.97}{2.55} & \scms{81.81}{2.71} & \scms{78.47}{2.77} & \scms{78.61}{0.24} & \scms{85.83}{0.42} & \scms{81.94}{1.97} & \scms{\textbf{87.36}}{0.64} \\
 & Tokens $\downarrow$ & 156 & 930 & 439 & 972 & 900 & 959 & 881 & 1004 & 286 \\
\midrule
\multicolumn{11}{l}{\textbf{GSM8K (free generation)}} \\
\multirow{2}{*}{Gemma-3-4B} & Accuracy $\uparrow$ & \scms{84.86}{1.20} & \scms{88.19}{1.05} & \scms{86.81}{0.64} & \scms{86.53}{1.73} & \scms{88.61}{1.34} & \scms{86.53}{1.73} & \scms{\textbf{88.75}}{1.50} & \scms{87.64}{0.24} & \scms{\textbf{88.75}}{0.48} \\
 & Tokens $\downarrow$ & 224 & 1329 & 678 & 1386 & 1248 & 1386 & 1211 & 1139 & 656 \\
\multirow{2}{*}{Qwen3-4B} & Accuracy $\uparrow$ & \scms{89.86}{0.87} & \scms{91.53}{0.24} & \scms{89.72}{0.64} & \scms{90.83}{0.72} & \scms{91.11}{1.68} & \scms{90.83}{0.72} & \scms{91.11}{0.64} & \scms{92.92}{0.42} & \scms{\textbf{93.33}}{1.10} \\
 & Tokens $\downarrow$ & 221 & 1320 & 656 & 1291 & 1226 & 1291 & 1144 & 1268 & 640 \\
\multirow{2}{*}{Qwen3-8B} & Accuracy $\uparrow$ & \scms{91.81}{1.46} & \scms{92.92}{0.83} & \scms{92.78}{1.05} & \scms{\textbf{93.89}}{1.73} & \scms{\textbf{93.89}}{1.05} & \scms{\textbf{93.89}}{1.73} & \scms{93.75}{0.42} & \scms{92.64}{0.48} & \scms{\textbf{93.89}}{0.64} \\
 & Tokens $\downarrow$ & 229 & 1383 & 688 & 1427 & 1325 & 1427 & 1212 & 1389 & 677 \\
\midrule
\multicolumn{11}{l}{\textbf{MATH-500 (A/B answers)}} \\
\multirow{2}{*}{Gemma-3-4B} & Accuracy $\uparrow$ & \scms{42.67}{0.76} & \scms{52.00}{1.00} & \scms{46.33}{1.04} & \scms{52.83}{0.58} & \scms{53.83}{2.02} & \scms{52.83}{0.58} & \scms{66.33}{0.58} & \scms{59.67}{0.58} & \scms{\textbf{68.83}}{1.76} \\
 & Tokens $\downarrow$ & 419 & 2507 & 1264 & 2429 & 2411 & 2429 & 2353 & 2203 & 897 \\
\multirow{2}{*}{Qwen3-4B} & Accuracy $\uparrow$ & \scms{55.83}{0.76} & \scms{60.00}{0.50} & \scms{57.50}{1.32} & \scms{63.50}{1.80} & \scms{62.67}{2.31} & \scms{63.50}{1.80} & \scms{69.50}{2.18} & \scms{65.33}{0.76} & \scms{\textbf{70.83}}{0.76} \\
 & Tokens $\downarrow$ & 385 & 2321 & 1168 & 2309 & 2279 & 2309 & 2143 & 2218 & 818 \\
\multirow{2}{*}{Qwen3-8B} & Accuracy $\uparrow$ & \scms{59.17}{0.76} & \scms{62.33}{0.58} & \scms{59.00}{1.32} & \scms{59.83}{1.04} & \scms{62.00}{1.32} & \scms{59.83}{1.26} & \scms{64.67}{1.26} & \scms{61.50}{0.50} & \scms{\textbf{65.83}}{1.61} \\
 & Tokens $\downarrow$ & 387 & 2333 & 1174 & 2363 & 2308 & 2365 & 2227 & 2318 & 912 \\
\midrule
\multicolumn{11}{l}{\textbf{TruthfulQA (A/B answers)}} \\
\multirow{2}{*}{Gemma-3-4B} & Accuracy $\uparrow$ & \scms{62.67}{1.76} & \scms{63.83}{1.26} & \scms{68.00}{1.32} & \scms{66.83}{2.75} & \scms{70.17}{0.29} & \scms{66.83}{2.75} & \scms{68.67}{1.04} & \scms{67.17}{1.26} & \scms{\textbf{70.67}}{1.04} \\
 & Tokens $\downarrow$ & 67 & 405 & 236 & 583 & 418 & 583 & 449 & 617 & 191 \\
\multirow{2}{*}{Qwen3-4B} & Accuracy $\uparrow$ & \scms{76.17}{1.53} & \scms{75.83}{0.76} & \scms{79.00}{1.80} & \scms{79.83}{1.44} & \scms{79.00}{1.00} & \scms{79.83}{1.44} & \scms{82.00}{1.32} & \scms{80.00}{1.50} & \scms{\textbf{83.33}}{1.76} \\
 & Tokens $\downarrow$ & 105 & 627 & 353 & 828 & 691 & 828 & 677 & 716 & 208 \\
\multirow{2}{*}{Qwen3-8B} & Accuracy $\uparrow$ & \scms{82.67}{0.58} & \scms{82.00}{0.50} & \scms{83.67}{1.04} & \scms{85.00}{2.29} & \scms{82.50}{1.50} & \scms{85.00}{2.29} & \scms{86.33}{2.08} & \scms{84.83}{0.58} & \scms{\textbf{87.00}}{1.50} \\
 & Tokens $\downarrow$ & 96 & 576 & 305 & 746 & 590 & 746 & 608 & 698 & 236 \\
\bottomrule
\end{tabular*}
\end{table}

\subsection{Experimental Setup}
\label{sec:experimental_setup}

\paragraph{Tasks and models.}
We use Gemma~3 4B-IT~\citep{gemmateam2025gemma3}, Qwen3-4B, and Qwen3-8B~\citep{qwenteam2025qwen3} on 2WikiMultiHopQA~\citep{ho2020twowiki}, GSM8K~\citep{cobbe2021training}, MATH-500~\citep{hendrycks2021math}, and TruthfulQA~\citep{lin2022truthfulqa}.
Each evaluation set uses 240 tasks for 2Wiki and GSM8K, and 200 for MATH-500 and TruthfulQA.
We compare accuracy in selecting between two supplied answers on 2Wiki, MATH-500, and TruthfulQA, and in freely generating answers on 2Wiki and GSM8K.

\paragraph{Message-selection baselines.}
\textbf{Text-only} uses message text, while \textbf{Additive} also uses sender and receiver summaries.
\textbf{Activation products} adds element-wise products of sender summaries with receiver summaries before and after communication.
For two-answer scoring, CGD and these baselines use the same messages.
Each approach selects a message combination, and we calculate its accuracy using the receiver's recorded answer for that combination.
For free generation, we train baselines with CGD’s loss. Each baseline selects the message combination with its highest score.

\paragraph{Collaboration baselines.}
We compare CGD with \textbf{Single}, \textbf{SC6}~\citep{wang2023selfconsistency}, \textbf{Majority vote}~\citep{choi2025debate}, \textbf{MAD}~\citep{du2024multiagent}, \textbf{MAD-M2}~\citep{tian2026memorymasking}, \textbf{MADC}~\citep{zhang2026madc}, \textbf{SafeSieve}~\citep{zhang2026safesieve}, and \textbf{MOC}~\citep{guan2026moc}.

\paragraph{Training and evaluation.}
We train CGD separately for two-answer scoring and free generation on 340 2Wiki tasks, with 100 validation tasks and separate evaluation sets.
We report the average accuracy and standard deviation over three seeds.
Generation cost counts all generated tokens, including the intermediate messages and final answer.
We provide training details in Appendix~\ref{app:protocol}.

\begingroup
\setlength{\intextsep}{6pt plus 2pt minus 2pt}
\setlength{\textfloatsep}{8pt plus 2pt minus 2pt}
\setlength{\floatsep}{6pt plus 2pt minus 2pt}
\subsection{Experimental Results}
\label{sec:result}

\paragraph{Accuracy and generation cost.}
We compare the accuracy and generated tokens in Table~\ref{tab:four-dataset-accuracy-tokens}.
Results show that CGD ranks first in all nine two-answer settings and ranks first or joint first in GSM8K free generation.
Across all settings, it generates fewer tokens than all multi-agent baselines.
For 2Wiki free generation, we limit each baseline’s total generated tokens to CGD’s recorded token count for the same task and seed.
Table~\ref{tab:cgd-efficiency} shows accuracy gains over the strongest baseline of 16.53\%, 11.25\%, and 30.69\% on Gemma-3-4B, Qwen3-4B, and Qwen3-8B, respectively.

\begin{table}[t]
\centering
\caption{{Free-generation accuracy (\%) on 240 2Wiki tasks under matched total generation budgets.}}
\label{tab:cgd-efficiency}
\renewcommand{\scms}[2]{#1{\fontsize{5}{5}\selectfont$\pm#2$}}
\fontsize{7}{8}\selectfont
\setlength{\tabcolsep}{2pt}
\renewcommand{\arraystretch}{1.12}
\begin{tabular*}{\linewidth}{@{\extracolsep{\fill}}l*{8}{c}>{\columncolor{CGDshade}[\tabcolsep][0pt]}c@{}}
\toprule
& \multicolumn{2}{c}{Single-agent} & \multicolumn{7}{c}{Multi-agent} \\
\cmidrule(lr){2-3}\cmidrule(lr){4-10}
Model & Single & SC6 & Majority & MAD & MAD-M2 & MADC & MOC & SafeSieve & \textbf{CGD} \\
\midrule
Gemma-3-4B & \scms{24.72}{1.05} & \scms{25.42}{1.67} & \scms{26.39}{2.77} & \scms{7.22}{1.88} & \scms{20.97}{2.10} & \scms{7.22}{1.88} & \scms{20.28}{0.48} & \scms{25.42}{4.23} & \scms{\textbf{42.92}}{0.83} \\
Qwen3-4B & \scms{21.94}{0.24} & \scms{22.22}{0.48} & \scms{25.97}{1.97} & \scms{23.33}{2.53} & \scms{25.14}{3.13} & \scms{23.33}{2.53} & \scms{29.31}{2.44} & \scms{40.56}{2.37} & \scms{\textbf{51.81}}{0.64} \\
Qwen3-8B & \scms{23.61}{2.10} & \scms{23.89}{1.05} & \scms{30.00}{1.67} & \scms{14.17}{0.83} & \scms{30.83}{0.83} & \scms{14.17}{0.83} & \scms{28.75}{5.42} & \scms{29.86}{5.50} & \scms{\textbf{61.53}}{0.48} \\
\bottomrule
\end{tabular*}
\end{table}

\paragraph{Effect of receiver activation changes.}
\begin{wrapfigure}{r}{0.4\textwidth} \centering \includegraphics[width=\linewidth]{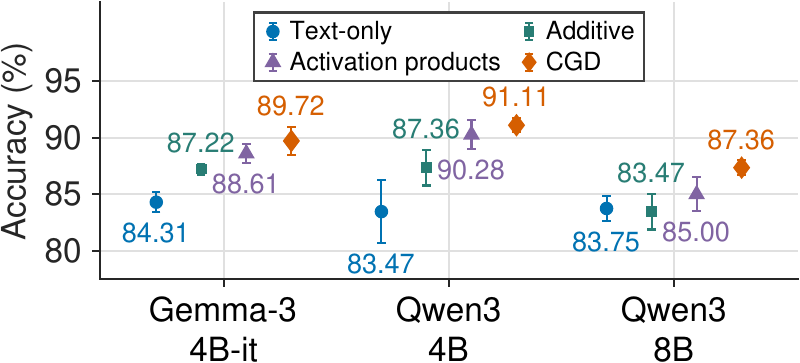} \caption{{Two-answer scoring accuracy on 2Wiki: 240 new tasks for the 4B models, and Table~1's tasks for Qwen3-8B.}} \label{fig:main_para3} \end{wrapfigure}
We present message selection comparisons in Fig.~\ref{fig:main_para3} using the same messages, recorded receiver answers, and training procedure. 
CGD ranks first across all three models and exceeds Activation products by 0.83\%--2.36\%. 
In addition, Table~\ref{tab:cgd-fixed-messages} in Appendix~\ref{sec:additional-selection} shows that CGD improves accuracy over sending both messages by 4.44\%--7.22\% across the three models. Table~\ref{tab:cgd-matched-products} in Appendix~\ref{sec:additional-selection} shows CGD's 0.42\%--1.11\% accuracy advantage over Activation products on Table~\ref{tab:four-dataset-accuracy-tokens}'s 2Wiki tasks with matched input scaling and weight penalty. These results support learning directly from receiver activation changes to select messages.

\paragraph{Effect of training set size.}
Figure~\ref{fig:additional_D3_3} shows that CGD exceeds Additive and Activation products across three models with 50, 100, 200, and 340 training tasks.
On Qwen3-8B, CGD reaches 85.00\% accuracy with 100 tasks, matching Activation products trained on 340 tasks.
It shows that CGD can achieve the same accuracy with fewer training tasks.

\begin{figure}[!h]
\centering
\begin{minipage}[t]{0.32\textwidth}
\centering
\includegraphics[width=\linewidth]{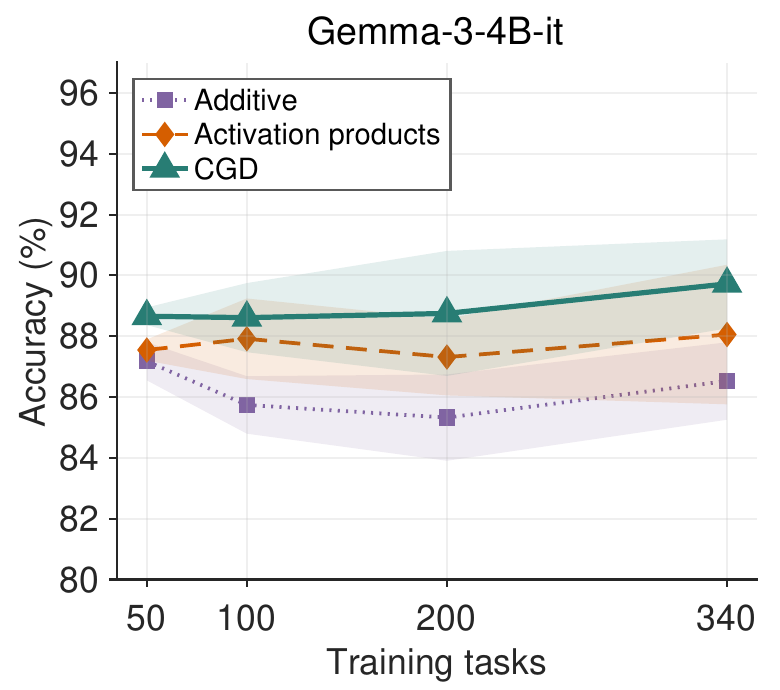}\\
{\footnotesize (a)}
\end{minipage}\hfill
\begin{minipage}[t]{0.32\textwidth}
\centering
\includegraphics[width=\linewidth]{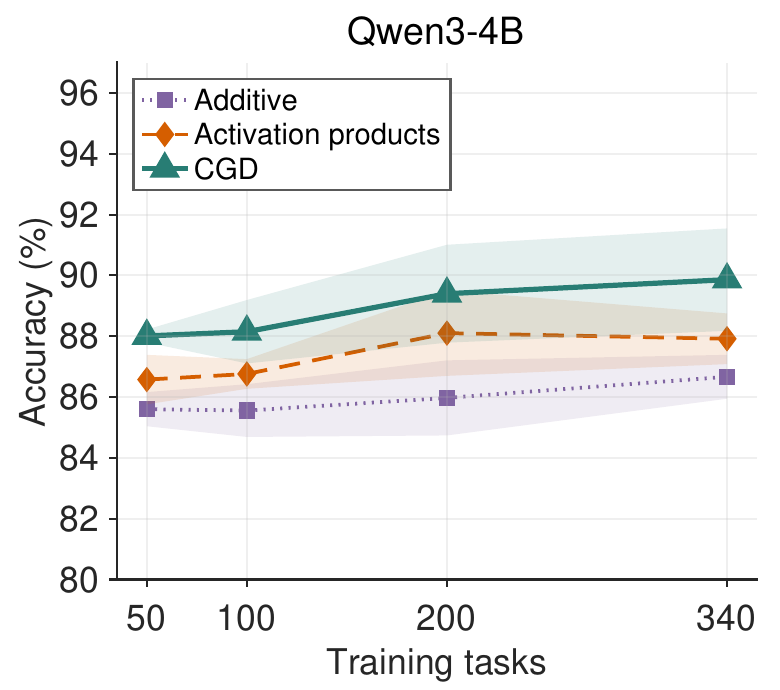}\\
{\footnotesize (b)}
\end{minipage}\hfill
\begin{minipage}[t]{0.32\textwidth}
\centering
\includegraphics[width=\linewidth]{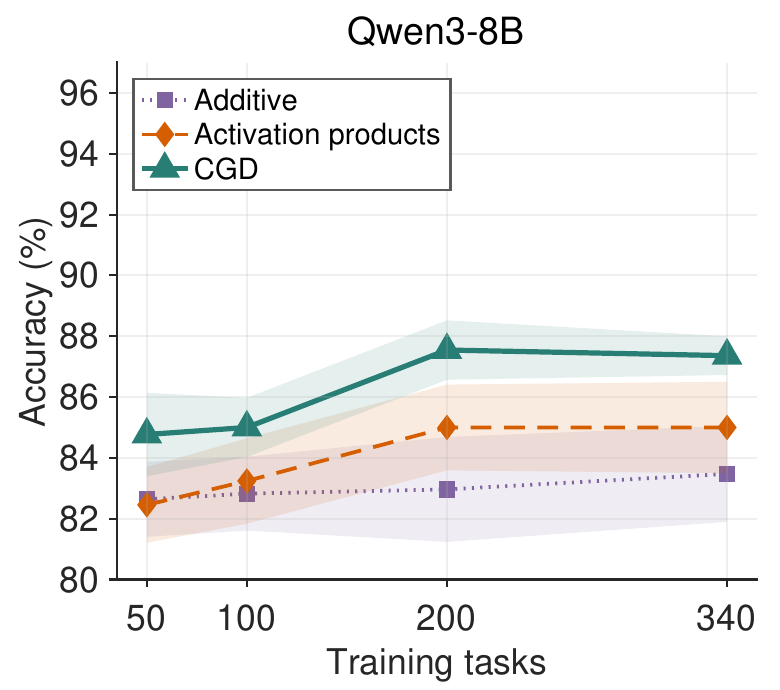}\\
{\footnotesize (c)}
\end{minipage}
\caption{{2Wiki learning curves. Lines and bands show mean accuracy and standard deviation.}}
\label{fig:additional_D3_3}
\end{figure}

\endgroup

\section{Conclusion} \label{sec:conclusion} In this work, we introduce \method{} to trace how messages correct or reinforce shared errors through receiver activation changes. Based on this framework, we propose \emph{Circuit-Guided Deliberation} to use sender information and activation changes to select messages that improve task performance. Our theory shows when activation replacement preserves receiver decisions and bounds CGD's loss in expected task utility. Across three models and four datasets, CGD achieves the highest or joint-highest average accuracy with fewer generated tokens than each multi-agent baseline, and improves the free-generation accuracy under the same generation limits. 

\section*{AI use statement}
We use generative AI tools to assist with the literature search, grammar correction, experiment deployments, and manuscript polishing. The authors 
reviewed these outputs against the cited sources and stored experiment records and take full responsibility for the paper.

\section*{Ethics statement}
This work studies how communication changes errors in collaborative AI systems.
Synthetic tasks use synthetic conflicting evidence, and the real-task
transformations are derived from existing benchmarks. No human subjects are
involved. Released artifacts follow the licenses of the models and datasets.

\section*{Reproducibility statement}
The anonymous repository of this project is \url{https://anonymous.4open.science/r/social-circuits-4416}. Appendix~\ref{app:algorithms} gives the
pseudocode, and Appendix~\ref{app:protocol} describes the models, data splits,
target quantities, baselines, and statistical analyses.
The code is also included in the supplementary material.

\bibliography{references}

\begin{thebibliography}{41}
\providecommand{\natexlab}[1]{#1}
\providecommand{\url}[1]{\texttt{#1}}
\expandafter\ifx\csname urlstyle\endcsname\relax
  \providecommand{\doi}[1]{doi: #1}\else
  \providecommand{\doi}{doi: \begingroup \urlstyle{rm}\Url}\fi

\bibitem[Ai et~al.(2026)Ai, Pan, Simchi-Levi, Tambe, and Xu]{ai2026beyond}
Rui Ai, Yuqi Pan, David Simchi-Levi, Milind Tambe, and Haifeng Xu.
\newblock Beyond majority voting: {LLM} aggregation by leveraging higher-order
  information.
\newblock In \emph{Proceedings of the International Conference on Machine
  Learning}, 2026.
\newblock URL \url{https://arxiv.org/abs/2510.01499}.

\bibitem[Burges et~al.(2005)Burges, Shaked, Renshaw, Lazier, Deeds, Hamilton,
  and Hullender]{burges2005learning}
Chris Burges, Tal Shaked, Erin Renshaw, Ari Lazier, Matt Deeds, Nicole
  Hamilton, and Greg Hullender.
\newblock Learning to rank using gradient descent.
\newblock In \emph{Proceedings of the International Conference on Machine
  Learning}, pp.\  89--96, 2005.
\newblock URL \url{https://doi.org/10.1145/1102351.1102363}.

\bibitem[Choi et~al.(2025)Choi, Zhu, and Li]{choi2025debate}
Hyeong~Kyu Choi, Xiaojin Zhu, and Sharon Li.
\newblock Debate or vote: Which yields better decisions in multi-agent large
  language models?
\newblock In \emph{Proceedings of the Conference on Neural Information
  Processing Systems}, 2025.
\newblock URL
  \url{https://proceedings.neurips.cc/paper_files/paper/2025/hash/934252acd87f254d5d4672fbde283bd2-Abstract-Conference.html}.

\bibitem[Cobbe et~al.(2021)Cobbe, Kosaraju, Bavarian, Chen, Jun, Kaiser,
  Plappert, Tworek, Hilton, Nakano, Hesse, and Schulman]{cobbe2021training}
Karl Cobbe, Vineet Kosaraju, Mohammad Bavarian, Mark Chen, Heewoo Jun, Lukasz
  Kaiser, Matthias Plappert, Jerry Tworek, Jacob Hilton, Reiichiro Nakano,
  Christopher Hesse, and John Schulman.
\newblock Training verifiers to solve math word problems.
\newblock arXiv preprint, 2021.
\newblock URL \url{https://arxiv.org/abs/2110.14168}.

\bibitem[Dembczy{\'n}ski et~al.(2012)Dembczy{\'n}ski, Kot{\l}owski, and
  H{\"u}llermeier]{dembczynski2012consistent}
Krzysztof Dembczy{\'n}ski, Wojciech Kot{\l}owski, and Eyke H{\"u}llermeier.
\newblock Consistent multilabel ranking through univariate loss minimization.
\newblock In \emph{Proceedings of the International Conference on Machine
  Learning}, 2012.
\newblock URL \url{https://icml.cc/2012/papers/661.pdf}.

\bibitem[Du et~al.(2024)Du, Li, Torralba, Tenenbaum, and
  Mordatch]{du2024multiagent}
Yilun Du, Shuang Li, Antonio Torralba, Joshua~B. Tenenbaum, and Igor Mordatch.
\newblock Improving factuality and reasoning in language models through
  multiagent debate.
\newblock In \emph{Proceedings of the International Conference on Machine
  Learning}, pp.\  11733--11763, 2024.
\newblock URL \url{https://proceedings.mlr.press/v235/du24e.html}.

\bibitem[Duchi et~al.(2010)Duchi, Mackey, and Jordan]{duchi2010consistency}
John~C. Duchi, Lester~W. Mackey, and Michael~I. Jordan.
\newblock On the consistency of ranking algorithms.
\newblock In \emph{Proceedings of the International Conference on Machine
  Learning}, pp.\  327--334, 2010.
\newblock URL \url{https://icml.cc/Conferences/2010/papers/421.pdf}.

\bibitem[Estornell \& Liu(2024)Estornell and Liu]{estornell2024multillm}
Andrew Estornell and Yang Liu.
\newblock Multi-{LLM} debate: Framework, principals, and interventions.
\newblock In \emph{Proceedings of the Conference on Neural Information
  Processing Systems}, 2024.
\newblock URL
  \url{https://proceedings.neurips.cc/paper_files/paper/2024/hash/32e07a110c6c6acf1afbf2bf82b614ad-Abstract-Conference.html}.

\bibitem[Geiger et~al.(2021)Geiger, Lu, Icard, and Potts]{geiger2021causal}
Atticus Geiger, Hanson Lu, Thomas Icard, and Christopher Potts.
\newblock Causal abstractions of neural networks.
\newblock In \emph{Proceedings of the Conference on Neural Information
  Processing Systems}, pp.\  9574--9586, 2021.
\newblock URL
  \url{https://proceedings.neurips.cc/paper/2021/hash/4f5c422f4d49a5a807eda27434231040-Abstract.html}.

\bibitem[Geiger et~al.(2025)Geiger, Ibeling, Zur, Chaudhary, Chauhan, Huang,
  Arora, Wu, Goodman, Potts, and Icard]{geiger2025causal}
Atticus Geiger, Duligur Ibeling, Amir Zur, Maheep Chaudhary, Sonakshi Chauhan,
  Jing Huang, Aryaman Arora, Zhengxuan Wu, Noah Goodman, Christopher Potts, and
  Thomas Icard.
\newblock Causal abstraction: A theoretical foundation for mechanistic
  interpretability.
\newblock \emph{Journal of Machine Learning Research}, 26\penalty0
  (83):\penalty0 1--64, 2025.
\newblock URL \url{https://jmlr.org/papers/v26/23-0058.html}.

\bibitem[{Gemma Team}(2025)]{gemmateam2025gemma3}
{Gemma Team}.
\newblock Gemma 3 technical report.
\newblock arXiv preprint, 2025.
\newblock URL \url{https://arxiv.org/abs/2503.19786}.

\bibitem[{Gemma Team}(2026)]{gemmateam2026gemma4}
{Gemma Team}.
\newblock Gemma 4 technical report.
\newblock arXiv preprint, 2026.
\newblock URL \url{https://arxiv.org/abs/2607.02770}.

\bibitem[Guan et~al.(2026)Guan, Wang, Lu, Wang, Yan, and Duan]{guan2026moc}
Yao Guan, Lin Wang, Zhihu Lu, Ziyi Wang, Wenzhu Yan, and Qiang Duan.
\newblock {MOC}: Multi-order communication in {LLM}-based multi-agent systems.
\newblock In \emph{Proceedings of the International Conference on Machine
  Learning}, 2026.
\newblock URL \url{https://arxiv.org/abs/2606.02359}.

\bibitem[Gultchin et~al.(2021)Gultchin, Watson, Kusner, and
  Silva]{gultchin2021complex}
Limor Gultchin, David Watson, Matt Kusner, and Ricardo Silva.
\newblock Operationalizing complex causes: A pragmatic view of mediation.
\newblock In \emph{Proceedings of the International Conference on Machine
  Learning}, pp.\  3875--3885, 2021.
\newblock URL \url{https://proceedings.mlr.press/v139/gultchin21a.html}.

\bibitem[He et~al.(2026)He, Chen, Yang, Qiao, Ju, Liu, Wen, and
  Liu]{he2026minority}
Chuan He, Zebin Chen, Zhengyi Yang, Shaobo Qiao, Mingchen Ju, Jiate Liu, Dong
  Wen, and Guanfeng Liu.
\newblock Minority sentinel: When to overturn majority voting in multi-agent
  {LLM} debates.
\newblock AgentSearch Workshop at the International ACM SIGIR Conference on
  Research and Development in Information Retrieval, 2026.
\newblock URL \url{https://arxiv.org/abs/2606.29270}.

\bibitem[Hendrycks et~al.(2021)Hendrycks, Burns, Kadavath, Arora, Basart, Tang,
  Song, and Steinhardt]{hendrycks2021math}
Dan Hendrycks, Collin Burns, Saurav Kadavath, Akul Arora, Steven Basart, Eric
  Tang, Dawn Song, and Jacob Steinhardt.
\newblock Measuring mathematical problem solving with the {MATH} dataset.
\newblock In \emph{Proceedings of the Neural Information Processing Systems
  Track on Datasets and Benchmarks}, 2021.
\newblock URL
  \url{https://datasets-benchmarks-proceedings.neurips.cc/paper/2021/hash/be83ab3ecd0db773eb2dc1b0a17836a1-Abstract-round2.html}.

\bibitem[Ho et~al.(2020)Ho, Duong~Nguyen, Sugawara, and Aizawa]{ho2020twowiki}
Xanh Ho, Anh-Khoa Duong~Nguyen, Saku Sugawara, and Akiko Aizawa.
\newblock Constructing a multi-hop {QA} dataset for comprehensive evaluation of
  reasoning steps.
\newblock In \emph{Proceedings of the International Conference on Computational
  Linguistics}, pp.\  6609--6625, 2020.
\newblock URL \url{https://aclanthology.org/2020.coling-main.580/}.

\bibitem[Huang et~al.(2026)Huang, Li, Li, Kwon, Yu, and
  Zhang]{huang2026cagecal}
Jiatan Huang, Mingchen Li, Ziming Li, Sunjae Kwon, Hong Yu, and Chuxu Zhang.
\newblock Counterfactual graph for multi-agent {LLM} calibration.
\newblock arXiv preprint, 2026.
\newblock URL \url{https://arxiv.org/abs/2605.30653}.

\bibitem[Huang et~al.(2025)Huang, Tao, Icard, Yang, and
  Potts]{huang2025internal}
Jing Huang, Junyi Tao, Thomas Icard, Diyi Yang, and Christopher Potts.
\newblock Internal causal mechanisms robustly predict language model
  out-of-distribution behaviors.
\newblock In \emph{Proceedings of the International Conference on Machine
  Learning}, pp.\  25791--25812, 2025.
\newblock URL \url{https://proceedings.mlr.press/v267/huang25af.html}.

\bibitem[Kim et~al.(2025)Kim, Garg, Peng, and Garg]{kim2025correlated}
Elliot~Myunghoon Kim, Avi Garg, Kenny Peng, and Nikhil Garg.
\newblock Correlated errors in large language models.
\newblock In \emph{Proceedings of the International Conference on Machine
  Learning}, pp.\  30038--30066, 2025.
\newblock URL \url{https://proceedings.mlr.press/v267/kim25e.html}.

\bibitem[Li et~al.(2026)Li, He, Ji, Wang, Liu, and Sun]{li2026e2explainer}
Junzhi Li, Peng He, Qirui Ji, Wei Wang, Lixiang Liu, and Chuxiong Sun.
\newblock Discovering efficient and explainable communication topologies for
  {LLM}-based multi-agent systems via causal inference.
\newblock arXiv preprint, 2026.
\newblock URL \url{https://arxiv.org/abs/2608.12921}.

\bibitem[Lin et~al.(2022)Lin, Hilton, and Evans]{lin2022truthfulqa}
Stephanie Lin, Jacob Hilton, and Owain Evans.
\newblock {TruthfulQA}: Measuring how models mimic human falsehoods.
\newblock In \emph{Proceedings of the Annual Meeting of the Association for
  Computational Linguistics}, pp.\  3214--3252, 2022.
\newblock URL \url{https://aclanthology.org/2022.acl-long.229/}.

\bibitem[Lin et~al.(2025)Lin, Morency, and Ben-Michael]{lin2025isolated}
Victoria Lin, Louis-Philippe Morency, and Eli Ben-Michael.
\newblock Isolated causal effects of natural language.
\newblock In \emph{Proceedings of the International Conference on Machine
  Learning}, pp.\  37919--37941, 2025.
\newblock URL \url{https://proceedings.mlr.press/v267/lin25k.html}.

\bibitem[Lowe et~al.(2019)Lowe, Foerster, Boureau, Pineau, and
  Dauphin]{lowe2019pitfalls}
Ryan Lowe, Jakob Foerster, Y-Lan Boureau, Joelle Pineau, and Yann Dauphin.
\newblock On the pitfalls of measuring emergent communication.
\newblock In \emph{Proceedings of the International Conference on Autonomous
  Agents and Multiagent Systems}, pp.\  693--701, 2019.
\newblock URL \url{https://ifaamas.org/Proceedings/aamas2019/pdfs/p693.pdf}.

\bibitem[Marks et~al.(2025)Marks, Rager, Michaud, Belinkov, Bau, and
  Mueller]{marks2025sparse}
Samuel Marks, Can Rager, Eric~J. Michaud, Yonatan Belinkov, David Bau, and
  Aaron Mueller.
\newblock Sparse feature circuits: Discovering and editing interpretable causal
  graphs in language models.
\newblock In \emph{Proceedings of the International Conference on Learning
  Representations}, 2025.
\newblock URL
  \url{https://proceedings.iclr.cc/paper_files/paper/2025/hash/3ba4d47a83e498c2b1a0868cba20f6de-Abstract-Conference.html}.

\bibitem[Meng et~al.(2022)Meng, Bau, Andonian, and Belinkov]{meng2022locating}
Kevin Meng, David Bau, Alex Andonian, and Yonatan Belinkov.
\newblock Locating and editing factual associations in {GPT}.
\newblock In \emph{Proceedings of the Conference on Neural Information
  Processing Systems}, 2022.
\newblock URL
  \url{https://papers.neurips.cc/paper_files/paper/2022/hash/6f1d43d5a82a37e89b0665b33bf3a182-Abstract-Conference.html}.

\bibitem[{Qwen Team}(2025)]{qwenteam2025qwen3}
{Qwen Team}.
\newblock Qwen3 technical report.
\newblock arXiv preprint, 2025.
\newblock URL \url{https://arxiv.org/abs/2505.09388}.

\bibitem[Ramesh \& Li(2025)Ramesh and Li]{ramesh2025activations}
Vignav Ramesh and Kenneth Li.
\newblock Communicating activations between language model agents.
\newblock In \emph{Proceedings of the International Conference on Machine
  Learning}, pp.\  51094--51116, 2025.
\newblock URL \url{https://proceedings.mlr.press/v267/ramesh25a.html}.

\bibitem[Shen et~al.(2025)Shen, Liu, Dai, Wang, Miao, Tan, Pan, and
  Wang]{shen2025propagation}
Xu~Shen, Yixin Liu, Yiwei Dai, Yili Wang, Rui Miao, Yue Tan, Shirui Pan, and
  Xin Wang.
\newblock Understanding the information propagation effects of communication
  topologies in {LLM}-based multi-agent systems.
\newblock In \emph{Proceedings of the Conference on Empirical Methods in
  Natural Language Processing}, pp.\  12347--12361, 2025.
\newblock URL \url{https://aclanthology.org/2025.emnlp-main.623/}.

\bibitem[Smit et~al.(2024)Smit, Grinsztajn, Duckworth, Barrett, and
  Pretorius]{smit2024mad}
Andries~Petrus Smit, Nathan Grinsztajn, Paul Duckworth, Thomas~D. Barrett, and
  Arnu Pretorius.
\newblock Should we be going {MAD}? a look at multi-agent debate strategies for
  {LLM}s.
\newblock In \emph{Proceedings of the International Conference on Machine
  Learning}, pp.\  45883--45905, 2024.
\newblock URL \url{https://proceedings.mlr.press/v235/smit24a.html}.

\bibitem[Tang et~al.(2026)Tang, Meng, Costa, Zhang, Ye, and
  Xi]{tang2026variance}
Luoxi Tang, Yuqiao Meng, Joseph Costa, Yingxue Zhang, Muchao Ye, and Zhaohan
  Xi.
\newblock The value of variance: Mitigating debate collapse in multi-agent
  systems via uncertainty-driven policy optimization.
\newblock In \emph{Proceedings of the International Conference on Machine
  Learning}, 2026.
\newblock URL \url{https://openreview.net/forum?id=Bc6c2OVWRh}.

\bibitem[Tang et~al.(2025)Tang, Su, Zhou, Liu, Zhang, Ma, and
  Ai]{tang2025statedelta}
Yichen Tang, Weihang Su, Yujia Zhou, Yiqun Liu, Min Zhang, Shaoping Ma, and
  Qingyao Ai.
\newblock Augmenting multi-agent communication with state delta trajectory.
\newblock In \emph{Proceedings of the Conference on Empirical Methods in
  Natural Language Processing}, pp.\  10219--10240, 2025.
\newblock URL \url{https://aclanthology.org/2025.emnlp-main.518/}.

\bibitem[Tian et~al.(2026)Tian, Feng, Zhao, Zhu, Yan, and
  Han]{tian2026memorymasking}
Hongduan Tian, Xiao Feng, Ziyuan Zhao, Xiangyu Zhu, Rolan Yan, and Bo~Han.
\newblock Multi-agent debate with memory masking.
\newblock In \emph{Proceedings of the International Conference on Learning
  Representations}, 2026.
\newblock URL \url{https://openreview.net/forum?id=EdTt8nMAMA}.

\bibitem[Vig et~al.(2020)Vig, Gehrmann, Belinkov, Qian, Nevo, Singer, and
  Shieber]{vig2020investigating}
Jesse Vig, Sebastian Gehrmann, Yonatan Belinkov, Sharon Qian, Daniel Nevo,
  Yaron Singer, and Stuart Shieber.
\newblock Investigating gender bias in language models using causal mediation
  analysis.
\newblock In \emph{Proceedings of the Conference on Neural Information
  Processing Systems}, 2020.
\newblock URL
  \url{https://papers.nips.cc/paper/2020/hash/92650b2e92217715fe312e6fa7b90d82-Abstract.html}.

\bibitem[Wang et~al.(2023)Wang, Wei, Schuurmans, Le, Chi, Narang, Chowdhery,
  and Zhou]{wang2023selfconsistency}
Xuezhi Wang, Jason Wei, Dale Schuurmans, Quoc~V. Le, Ed~H. Chi, Sharan Narang,
  Aakanksha Chowdhery, and Denny Zhou.
\newblock Self-consistency improves chain of thought reasoning in language
  models.
\newblock In \emph{Proceedings of the International Conference on Learning
  Representations}, 2023.
\newblock URL \url{https://openreview.net/forum?id=1PL1NIMMrw}.

\bibitem[Xue et~al.(2022)Xue, Yuan, Zhang, and Yu]{ijcai2022p82}
Di~Xue, Lei Yuan, Zongzhang Zhang, and Yang Yu.
\newblock Efficient multi-agent communication via {Shapley} message value.
\newblock In \emph{Proceedings of the International Joint Conference on
  Artificial Intelligence}, pp.\  578--584, 2022.
\newblock URL \url{https://doi.org/10.24963/ijcai.2022/82}.

\bibitem[Zhang \& Nanda(2024)Zhang and Nanda]{zhang2024towards}
Fred Zhang and Neel Nanda.
\newblock Towards best practices of activation patching in language models:
  Metrics and methods.
\newblock In \emph{Proceedings of the International Conference on Learning
  Representations}, 2024.
\newblock URL
  \url{https://proceedings.iclr.cc/paper_files/paper/2024/hash/06a52a54c8ee03cd86771136bc91eb1f-Abstract-Conference.html}.

\bibitem[Zhang et~al.(2025)Zhang, Yue, Sun, Wan, Yu, Fang, Wang, Chen, and
  Cheng]{zhang2025gdesigner}
Guibin Zhang, Yanwei Yue, Xiangguo Sun, Guancheng Wan, Miao Yu, Junfeng Fang,
  Kun Wang, Tianlong Chen, and Dawei Cheng.
\newblock {G-Designer}: Architecting multi-agent communication topologies via
  graph neural networks.
\newblock In \emph{Proceedings of the International Conference on Machine
  Learning}, pp.\  76678--76692, 2025.
\newblock URL \url{https://proceedings.mlr.press/v267/zhang25cu.html}.

\bibitem[Zhang \& Emu(2026)Zhang and Emu]{zhang2026latent}
Huixiang Zhang and Mahzabeen Emu.
\newblock Do latent channels actually communicate? a causal audit of latent
  multi-agent {LLM}.
\newblock arXiv preprint, 2026.
\newblock URL \url{https://arxiv.org/abs/2607.26773}.

\bibitem[Zhang et~al.(2026{\natexlab{a}})Zhang, Liu, Zheng, Liang, and
  Wang]{zhang2026madc}
Qian Zhang, Jinyi Liu, Yan Zheng, Hebin Liang, and Lanjun Wang.
\newblock Key decision-makers in multi-agent debates: Who holds the power?
\newblock In \emph{Proceedings of the AAAI Conference on Artificial
  Intelligence}, pp.\  29883--29891, 2026{\natexlab{a}}.
\newblock URL \url{https://ojs.aaai.org/index.php/AAAI/article/view/40235}.

\bibitem[Zhang et~al.(2026{\natexlab{b}})Zhang, Zhao, Wang, Chen, Zhang, Zhang,
  Wang, and Wen]{zhang2026safesieve}
Ruijia Zhang, Xinyan Zhao, Ruixiang Wang, Sigen Chen, Guibin Zhang, An~Zhang,
  Kun Wang, and Qingsong Wen.
\newblock {SafeSieve}: From heuristics to experience in progressive pruning for
  {LLM}-based multi-agent communication.
\newblock In \emph{Proceedings of the AAAI Conference on Artificial
  Intelligence}, pp.\  29892--29900, 2026{\natexlab{b}}.
\newblock URL \url{https://ojs.aaai.org/index.php/AAAI/article/view/40236}.

\end{thebibliography}
\bibliographystyle{iclr2027_conference}

\clearpage
\appendix
\startcontents[appendices]
\section*{Appendix Contents}
{\small
\printcontents[appendices]{}{1}[2]{}
}
\clearpage

\section{Notations}
\label{app:notation}

{
\renewcommand{\arraystretch}{1.12}
\begin{longtable}{@{}p{0.29\linewidth}>{\raggedright\arraybackslash}p{0.67\linewidth}@{}}
\caption{List of key notations.}\label{tab:notation}\\
\toprule
Symbol & Meaning\\
\midrule
\endfirsthead
\multicolumn{2}{l}{\tablename~\thetable\ (continued)}\\
\toprule
Symbol & Meaning\\
\midrule
\endhead
\bottomrule
\endfoot
$\mathcal A$ & Message combination, a subset of $\mathcal C$.\\
$\widehat{\mathcal A},\mathcal A^\star$ & Combinations with the highest CGD score and expected task utility, respectively.\\
$\mathcal B$ & Another message combination in $\mathcal S$.\\
$\mathcal C$ & Candidate messages for one receiver.\\
$d$ & Activation-summary dimension.\\
$e$ & Incorrect answer favored by multiple agents before delivery.\\
$\mathrm{err}$ & Largest absolute error in an answer-pair score difference between original-message and replacement runs.\\
$f_{\boldsymbol{\theta}},F(o)$ & CGD scoring function and receiver's score for answer $o$.\\
$g$ & Candidate answer satisfying $u(g)>u(e)$.\\
$\mathrm{gap}$ & Original-message score of the selected answer minus the highest other answer score.\\
$\ell(\mathcal A,\mathcal B)$ & Pairwise training loss.\\
$\mathcal L(S),\mathcal L^\star$ & Expected summed pairwise loss without the weight penalty, and its lowest value approached over free scores with $S(\varnothing)=0$.\\
$o,\mathcal O$ & Candidate answer and the set of candidate answers.\\
$q,q_{\mathcal A}$ & Receiver's output, and its output after reading $\mathcal A$.\\
$q^{\rm act}_{\mathcal A}$ & Output after replacing selected no-message activations with those from the run of $\mathcal A$.\\
$\mathbf r_{\mathcal A}$ & Receiver activation summary in $\mathbb R^d$ after reading $\mathcal A$.\\
$\mathbf s_{\mathcal A}$ & Mean sender activation summary in $\mathbb R^d$ for nonempty $\mathcal A$.\\
$S(\mathcal A),\mathcal S$ & CGD score and the set of all message combinations, including $\varnothing$; $S(\varnothing)=0$.\\
$u(q),u^{\rm act}_{\mathcal A}$ & Task utility of $q$ and of $q^{\rm act}_{\mathcal A}$; binary task success in the CGD theory.\\
$V(\mathcal A)$ & Utility gain over sending no message: $u(q_{\mathcal A})-u(q_{\varnothing})$.\\
$\mathbf x_{\mathcal A}$ & CGD scoring input for nonempty $\mathcal A$.\\
$z$ & Group of matched reference, original-message, and replacement runs.\\
$\Delta F^{\rm msg}(o),\Delta F^{\rm act}(o)$ & Message and replacement effects on the score of $o$.\\
$\Delta^{\rm msg}(z),\Delta^{\rm act}(z)$ & Message and replacement effects on an answer-pair score difference.\\
$\Delta u^{\rm msg},\Delta u^{\rm act}$ & Message and replacement effects on task utility.\\
$\boldsymbol{\theta}$ & Learned weights and bias of the scoring function.\\
$\rho$ & Recovery ratio $\mathbb E[\Delta^{\rm act}]/\mathbb E[\Delta^{\rm msg}]$, with a nonzero denominator.\\
\end{longtable}
}

The $\Delta$ terms measure changes from the same reference run. In the CGD theory, expectations,
probabilities, and covariances are based on the candidate set and all scoring inputs.


\section{Proofs}
\label{sec:proofs}

We first prove Proposition~\ref{prop:sc-full-recovery} and
Theorem~\ref{thm:sequential-social-circuit} for Social Circuits.
We then bound the expected task-utility difference under pairwise training
and use this bound to prove Theorem~\ref{thm:cgd-recovery-ranking}.

\subsection{Proofs for Social Circuits}
\label{app:proof-sequential-social-circuit}

\begin{proposition}[Complete recovery through receiver activations]
\label{prop:sc-full-recovery}
We consider original and reference messages that occupy the same token
positions. Each layer takes the preceding layer's activations as input,
and each token uses only its own and earlier positions.
When scoring each $o\in\mathcal O$, restoring the original-message
activations at one layer's output for all message tokens and all later
tokens used to score $o$ leads to $F^{\rm act}(o)=F^{\rm msg}(o)$.
\end{proposition}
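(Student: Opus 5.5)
The plan is to argue by a layer-by-layer computation on the receiver's forward pass. We treat the receiver as a composition of layers $h^{(0)},h^{(1)},\dots,h^{(L)}$, where $h^{(\ell)}_t$ is the layer-$\ell$ output at token position $t$. We take two structural facts from the statement. First, $h^{(\ell)}_t$ is a deterministic function of $h^{(\ell-1)}_{1},\dots,h^{(\ell-1)}_{t}$ (layer recurrence plus causality). Second, the log-probability defining $F(o)$ in Appendix~\ref{app:protocol-scoring} is a sum of log-softmax readouts, each taken from final-layer activations at the positions used to score $o$.

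Let $\ell^\ast$ be the layer where replacement happens. Let $M$ be the set of message positions and $T_o$ the set of later positions used to score $o$. The tokens at positions $t\notin M$ are identical in the two runs: the task input, the preceding conversation and the fixed post-message text agree, and the candidate answer tokens for $o$ are appended identically. This relies on the assumption that the original and reference messages occupy the same positions. Hence prefix positions $t<\min M$ carry identical activations at every layer in both runs, by causality and induction on $\ell$.

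\textbf{Key step: induction over layers $\ell\ge\ell^\ast$.} The claim is that in the replacement run, for every $t\in P:=\{t<\min M\}\cup M\cup T_o$, the activation $h^{(\ell)}_t$ equals its original-message counterpart.
\begin{itemize}
\item \emph{Base case $\ell=\ell^\ast$.} Positions in $M\cup T_o$ hold original values by construction of the replacement. Prefix positions agree by the observation above.
\item \emph{Inductive step.} For $\ell>\ell^\ast$ and $t\in P$, the layer map at $t$ uses only positions $s\le t$ at layer $\ell-1$.
\end{itemize}

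The main obstacle is checking that every such $s$ lies in $P$. This requires $T_o$ to contain all positions between the end of the message and the last scoring position, so that the restored set is downward closed up to each scoring token. I would read ``all later tokens used to score $o$'' as exactly this contiguous block, and state the interpretation explicitly. If the restored set had holes, an unrestored intermediate position could leak the reference message through attention, and the induction would break. I would note that this is precisely why the statement restores every message token and every later token rather than a sparse selection.

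Given the induction, the final-layer activations at all scoring positions coincide between the replacement run and the original-message run. The readout and softmax are fixed functions, so each token log-probability coincides and their sum gives $F^{\rm act}(o)=F^{\rm msg}(o)$. Repeating the argument for each $o\in\mathcal O$, with its own $T_o$, yields the result for all candidates.
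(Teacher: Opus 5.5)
Your proof is correct and follows the same argument as the paper. Causality keeps the pre-message positions unchanged, so the restored layer output at the message and later scoring positions matches the original-message layer output, and every later layer and the readout then reproduce $F^{\rm msg}(o)$. Your explicit induction over layers, together with your remark that the restored positions must form a block with no gaps up to the last scoring position, spells out what the paper's phrase ``these activations form the same layer output'' assumes implicitly.
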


\begin{proof}[Proof of Proposition~\ref{prop:sc-full-recovery}]
We score the same answer $o\in\mathcal O$ at the same token positions in
both runs. We keep the tokens before the message unchanged. Since these
tokens cannot use later positions, their activations remain unchanged
at every layer.

At the selected layer, we restore the original-message activations at
all message positions and all later positions used to score $o$.
With the unchanged earlier positions, these activations form
the same layer output as in the original-message run. We therefore
obtain the same outputs from each remaining layer and the answer-scoring
function. Hence,
\begin{equation}
F^{\rm act}(o)=F^{\rm msg}(o)
\qquad\text{for every }o\in\mathcal O.
\label{eq:sc-full-score-recovery}
\end{equation}
This completes the proof of Proposition~\ref{prop:sc-full-recovery}.
\end{proof}

By subtracting the same reference scores, we obtain
$\Delta F^{\rm act}(o)=\Delta F^{\rm msg}(o)$.
With the same rule for ties, we also have $q^{\rm act}=q^{\rm msg}$ and
$\Delta u^{\rm act}=\Delta u^{\rm msg}$.
We next allow errors in the recovered score differences.

\begin{proof}[Proof of Theorem~\ref{thm:sequential-social-circuit}]
We first assume $\mathrm{err}<\mathrm{gap}$.
For every $o\in\mathcal O\setminus\{q^{\rm msg}\}$, we have
\begin{equation}
F^{\rm act}(q^{\rm msg})-F^{\rm act}(o)
\overset{(\text{a})}{\geq}
F^{\rm msg}(q^{\rm msg})-F^{\rm msg}(o)-\mathrm{err}
\overset{(\text{b})}{\geq}
\mathrm{gap}-\mathrm{err}>0,
\label{eq:sc-recovered-answer-order}
\end{equation}
where inequality (a) follows from the definition of $\mathrm{err}$,
and inequality (b) follows from the definition of $\mathrm{gap}$.
Thus $q^{\rm msg}$ remains the highest-scoring answer, so
$q^{\rm act}=q^{\rm msg}$. By substituting into the task-performance
change, we have
\begin{equation}
\Delta u^{\rm act}
=u(q^{\rm act})-u(q^{\rm ref})
=u(q^{\rm msg})-u(q^{\rm ref})
=\Delta u^{\rm msg}.
\label{eq:sc-recovered-utility}
\end{equation}

We next consider the message and activation effects for the same
answer pair $o_1,o_2$. By cancelling the common reference scores, we have
\begin{equation}
\left|\Delta^{\rm act}-\Delta^{\rm msg}\right|
=\left|
\left[F^{\rm act}(o_1)-F^{\rm act}(o_2)\right]
-\left[F^{\rm msg}(o_1)-F^{\rm msg}(o_2)\right]
\right|\leq\mathrm{err}.
\label{eq:sc-effect-recovery-error}
\end{equation}
We obtain the inequality from the definition of $\mathrm{err}$.
We now consider the two possible signs of $\Delta^{\rm msg}$:
\begin{align}
\Delta^{\rm msg}>\mathrm{err}
&\quad\Longrightarrow\quad
\Delta^{\rm act}\geq\Delta^{\rm msg}-\mathrm{err}>0,
\nonumber\\
\Delta^{\rm msg}<-\mathrm{err}
&\quad\Longrightarrow\quad
\Delta^{\rm act}\leq\Delta^{\rm msg}+\mathrm{err}<0.
\label{eq:sc-effect-sign}
\end{align}
Thus $|\Delta^{\rm msg}|>\mathrm{err}$ implies
$\Delta^{\rm act}\Delta^{\rm msg}>0$.
This completes the proof of Theorem~\ref{thm:sequential-social-circuit}.
\end{proof}

By Theorem~\ref{thm:sequential-social-circuit}, different answers require
$\mathrm{err}\geq\mathrm{gap}$. Taking probabilities over matched
reruns, we have
\begin{equation}
\Pr(q^{\rm act}\ne q^{\rm msg})
\leq\Pr(\mathrm{err}\geq\mathrm{gap}).
\label{eq:activation-decision-bound}
\end{equation}

\paragraph{Average recovery and individual decisions.}
We compare the message-effect recovery ratio $\rho$ with answer agreement
using two equally likely tasks with $\mathcal O=\{o_1,o_2\}$.
The entries below are the score of $o_1$ minus the score of $o_2$:
\begin{center}
\begin{tabular}{lrrr}
 &Reference message&Original message&Activation replacement\\ \hline
First task&$-2$&$-1$&$1$\\
Second task&$-2$&$1$&$-1$
\end{tabular}
\end{center}
In this example, the score difference equals the reference value $-2$
plus two terms determined by receiver activations. Both terms are zero in the reference
run. In the original-message run, they are $(3,-2)$ for the first
task and $(1,2)$ for the second. After restoring the activations that determine the first term, we obtain the activation-replacement scores shown in the table.
Thus the message effects are $1,3$ and the activation effects are $3,1$, so
\begin{equation}
\rho=\frac{(3+1)/2}{(1+3)/2}=1,
\qquad
\Pr(q^{\rm act}\ne q^{\rm msg})=1.
\label{eq:sc-average-recovery-counterexample}
\end{equation}
We therefore recover the average score effect without recovering either answer.

\subsection{Expected Task-Utility Difference under Pairwise Training}
\label{app:cgd-correlated-selection}

We use binary task utilities as shown in the main text.
We fix the candidate messages, scoring inputs, and learned scores.
All expectations, probabilities, and covariances in this subsection
and the next are under the resulting distribution $\mathcal P$.
We omit their subscript $\mathcal P$ to keep the formulas compact.
We analyze the expected pairwise loss $\mathcal L(S)$ defined in the
main text. Its infimum $\mathcal L^\star$ is taken over all real-valued
combination scores with $S(\varnothing)=0$.
Subtracting a common constant from all scores leaves the loss unchanged,
so this normalization does not change the infimum.
Minimizing the shared scoring function's training objective does not by
itself imply $\mathcal L(S)=\mathcal L^\star$.

\begin{lemma}[Expected task-utility difference under dependent task utilities]
\label{thm:cgd-correlated-selection}
For $|\mathcal S|\geq2$, any joint distribution of binary task utilities,
and any finite scores, we have
\begin{align}
\mathbb E\left[V(\mathcal A^\star)-V(\widehat{\mathcal A})\right]
\leq&\:\sum_{\mathcal B\in\mathcal S\setminus
\{\mathcal A^\star,\widehat{\mathcal A}\}}
\max\left\{0,\operatorname{Cov}\left(
u(q_{\widehat{\mathcal A}})-u(q_{\mathcal A^\star}),
u(q_{\mathcal B})\right)\right\}
\nonumber\\
&+\sqrt{|\mathcal S|\left(\mathcal L(S)-\mathcal L^\star\right)}.
\label{eq:cgd-dependent-utility-bound}
\end{align}
\end{lemma}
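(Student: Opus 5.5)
The plan is to reduce everything to pairwise probabilities and then compare the learned scores $S$ with a better competitor built by moving the scores of $\widehat{\mathcal A}$ and $\mathcal A^\star$. Throughout I write $u_{\mathcal A}=u(q_{\mathcal A})$, $\mu_{\mathcal A}=\mathbb E[u_{\mathcal A}]$, $p_{\mathcal A\mathcal B}=\Pr(u_{\mathcal A}=1,u_{\mathcal B}=0)$ and $\phi(t)=\log(1+e^{-t})$.

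\textbf{Step 1: rewrite the loss and the target.} Because utilities are binary, $V(\mathcal A)-V(\mathcal B)=u_{\mathcal A}-u_{\mathcal B}$, and this is positive exactly when $u_{\mathcal A}=1$ and $u_{\mathcal B}=0$. Taking expectations term by term therefore gives
\begin{equation*}
\mathcal L(S)=\sum_{\mathcal A\ne\mathcal B}p_{\mathcal A\mathcal B}\,\phi\bigl(S(\mathcal A)-S(\mathcal B)\bigr).
\end{equation*}
Two identities follow directly. First, $p_{\mathcal A\mathcal B}-p_{\mathcal B\mathcal A}=\mu_{\mathcal A}-\mu_{\mathcal B}$. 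Second, the quantity to bound is
\begin{equation*}
\mathbb E[V(\mathcal A^\star)-V(\widehat{\mathcal A})]=\mu_{\mathcal A^\star}-\mu_{\widehat{\mathcal A}}=:\Delta\ge0.
\end{equation*}
The covariance terms enter through the identity
\begin{equation*}
p_{\widehat{\mathcal A}\mathcal B}-p_{\mathcal A^\star\mathcal B}=-\Delta(1-\mu_{\mathcal B})+\operatorname{Cov}\bigl(u_{\widehat{\mathcal A}}-u_{\mathcal A^\star},u_{\mathcal B}\bigr),
\end{equation*}
which comes from $p_{\mathcal A\mathcal B}=\mu_{\mathcal A}-\mathbb E[u_{\mathcal A}u_{\mathcal B}]$. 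The companion expression for $p_{\mathcal B\widehat{\mathcal A}}-p_{\mathcal B\mathcal A^\star}$ is obtained the same way.

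\textbf{Step 2: build a competitor.} Let $\delta=S(\widehat{\mathcal A})-S(\mathcal A^\star)$, which is nonnegative because $\widehat{\mathcal A}$ maximizes $S$. For $t\in[0,1]$, define $S_t$ by lowering $S(\widehat{\mathcal A})$ and raising $S(\mathcal A^\star)$ toward each other, with $t=1$ giving the full swap. All other scores stay fixed; this is valid even if one of the two sets is $\varnothing$, after re-normalizing by a common shift, which leaves the loss unchanged. Then $\mathcal L(S)-\mathcal L^\star\ge\mathcal L(S)-\mathcal L(S_t)$ for every $t$.

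The difference splits into two parts. The first is the pair $\{\widehat{\mathcal A},\mathcal A^\star\}$, whose contribution is $p_{\mathcal A^\star\widehat{\mathcal A}}[\phi(-\delta)-\phi(\cdot)]+p_{\widehat{\mathcal A}\mathcal A^\star}[\phi(\delta)-\phi(\cdot)]$. The second is the pairs involving a third $\mathcal B$, whose contributions are linear in the differences from Step 1.

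For each third $\mathcal B$, the $-\Delta(1-\mu_{\mathcal B})$ parts have the favorable sign, since they make $\mathcal L(S)-\mathcal L(S_t)$ larger, so I drop them. The covariance parts can have either sign; I bound them above by $\max\{0,\operatorname{Cov}(\cdot)\}$ times a factor that I will try to normalize to at most one. The Lipschitz constant of $\phi$ is at most one, and I would scale the perturbation so that the $\mathcal B$-terms are charged at most $\Delta$ times the covariance, in units matching the final inequality.

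\textbf{Step 3: get a quadratic lower bound (the main obstacle).} With the $\mathcal B$-terms handled, the remaining gain from the pair $\{\widehat{\mathcal A},\mathcal A^\star\}$ should be at least of order $(\Delta-\text{covariance terms})^2$ divided by a normalizer. I would first try a second-order (strong-convexity/Pinsker-type) argument. Along the path, the derivative at $t=0$ is governed by $p_{\mathcal A^\star\widehat{\mathcal A}}\sigma(\delta)-p_{\widehat{\mathcal A}\mathcal A^\star}\sigma(-\delta)\ge\Delta\,\sigma(-\delta)$, where $\sigma$ is the logistic function. The curvature $\phi''\le1/4$ is weighted by total mass $\sum p\le|\mathcal S|$ once all pairs touching the two moved scores are counted. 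Optimizing $t$ should then give
\begin{equation*}
\mathcal L(S)-\mathcal L^\star\gtrsim\frac{\bigl(\Delta-\sum_{\mathcal B}\max\{0,\operatorname{Cov}\}\bigr)_+^2}{|\mathcal S|}.
\end{equation*}
Rearranging yields the claimed bound.

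The delicate point is the regime where $\delta$ is large, so that $\sigma(-\delta)$ is small. There I would instead use the exact swap $t=1$, whose gain is at least $\Delta\,\delta$, and combine the two regimes by a case split on $\delta\lessgtr1$. Getting the constants to land exactly on $\sqrt{|\mathcal S|(\mathcal L(S)-\mathcal L^\star)}$ with coefficient one on the covariance sum is where I expect the real work.
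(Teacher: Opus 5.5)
Your skeleton is the paper's: move $S(\mathcal A^\star)$ up and $S(\widehat{\mathcal A})$ down, bound the curvature along that direction by $O(|\mathcal S|)$, and turn a lower bound $g$ on the initial rate of decrease of $\mathcal L$ into $g^2\le|\mathcal S|(\mathcal L(S)-\mathcal L^\star)$. But the lower bound on $g$, which is the whole content of the proof, is set up incorrectly.

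\textbf{The third-party terms.} In Step 2 you treat the pairs with a third $\mathcal B$ as linear in $p_{\widehat{\mathcal A}\mathcal B}-p_{\mathcal A^\star\mathcal B}$ and $p_{\mathcal B\widehat{\mathcal A}}-p_{\mathcal B\mathcal A^\star}$, and you drop the $\Delta$-parts as favorable. That pairing only occurs at the full swap $t=1$, whereas the second-order argument needs the derivative at $t=0$. Let $S(\mathcal A^\star)$ increase by $\eta$ and $S(\widehat{\mathcal A})$ decrease by $\eta$. Per unit $\eta$, a third $\mathcal B$ contributes $\tfrac12\bigl[\Delta+b_{\mathcal B}\tanh(x_{\mathcal B}/2)-a_{\mathcal B}\tanh(y_{\mathcal B}/2)\bigr]$ to the decrease rate. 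Here $a_{\mathcal B}=\Pr(u_{\mathcal A^\star}\ne u_{\mathcal B})$, $b_{\mathcal B}=\Pr(u_{\widehat{\mathcal A}}\ne u_{\mathcal B})$, $x_{\mathcal B}=S(\widehat{\mathcal A})-S(\mathcal B)$ and $y_{\mathcal B}=S(\mathcal A^\star)-S(\mathcal B)$.

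The $\tfrac12\Delta$ cannot be dropped. You use the selection rule $S(\widehat{\mathcal A})\ge S(\mathcal B)$ only to get $\delta\ge0$, but applied to every third $\mathcal B$ it gives $0\le\tanh(x_{\mathcal B}/2)$ and $\tanh(y_{\mathcal B}/2)\le\tanh(x_{\mathcal B}/2)$. Hence the last two bracket terms are at least $-\max\{0,a_{\mathcal B}-b_{\mathcal B}\}$. Moreover $a_{\mathcal B}-b_{\mathcal B}=\Delta(1-2\mu_{\mathcal B})+2\operatorname{Cov}(u_{\widehat{\mathcal A}}-u_{\mathcal A^\star},u_{\mathcal B})$ carries up to a full $\Delta$. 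Only the kept $\tfrac12\Delta$ cancels it, leaving $-\max\{0,\operatorname{Cov}(u_{\widehat{\mathcal A}}-u_{\mathcal A^\star},u_{\mathcal B})\}$ per $\mathcal B$. If you drop it, a $\mathcal B$ with $u_{\mathcal B}\equiv0$ already costs $\tfrac12\Delta$, and $|\mathcal S|-2$ such terms swamp the direct pair. Your covariance identity also has the wrong sign: in fact $p_{\widehat{\mathcal A}\mathcal B}-p_{\mathcal A^\star\mathcal B}=-\Delta(1-\mu_{\mathcal B})-\operatorname{Cov}(u_{\widehat{\mathcal A}}-u_{\mathcal A^\star},u_{\mathcal B})$.

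\textbf{The large-$\delta$ obstacle and the case split.} The large-$\delta$ ``obstacle'' comes from an unnecessarily weak estimate. The direct pair's decrease rate per unit $\eta$ is $2\bigl[p_{\mathcal A^\star\widehat{\mathcal A}}\sigma(\delta)-p_{\widehat{\mathcal A}\mathcal A^\star}\sigma(-\delta)\bigr]=\Delta+\Pr(u_{\widehat{\mathcal A}}\ne u_{\mathcal A^\star})\tanh(\delta/2)\ge\Delta$, uniformly in $\delta\ge0$. Your bound $\Delta\sigma(-\delta)$ discards the fact that the larger weight multiplies $\sigma(\delta)$. So no case split is needed, and the one you propose would not deliver the statement anyway. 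At the full swap, the covariance part of a third-party contribution has coefficient $h(x_{\mathcal B})-h(y_{\mathcal B})$, with $h(t)=\phi(t)+\phi(-t)$. This coefficient changes sign when $S(\mathcal B)>\tfrac12\bigl(S(\widehat{\mathcal A})+S(\mathcal A^\star)\bigr)$, so the swap can only control $|\operatorname{Cov}|$, not $\max\{0,\operatorname{Cov}\}$.

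\textbf{How the corrected argument closes.} With the corrected first-order terms, $g\ge\Delta-\sum_{\mathcal B}\max\{0,\operatorname{Cov}(u_{\widehat{\mathcal A}}-u_{\mathcal A^\star},u_{\mathcal B})\}$. The curvature along the direction is at most $1+\tfrac{|\mathcal S|-2}{2}=\tfrac{|\mathcal S|}{2}$. The descent lemma then gives $g^2\le|\mathcal S|(\mathcal L(S)-\mathcal L^\star)$, which is the lemma with exactly the stated constants. This is the paper's argument; it organizes the same computation through the closed-form gradient $2\,\partial\mathcal L/\partial S(\mathcal A)$ written in $\tanh$ form.
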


\begin{proof}
We consider $\mathcal A^\star\ne\widehat{\mathcal A}$. Otherwise, the expected task-utility difference is zero.
By cancelling the common no-message utility in $V$, we have
\begin{equation}
V(\mathcal A)-V(\mathcal B)
=u(q_{\mathcal A})-u(q_{\mathcal B}).
\label{eq:cgd-utility-cancellation}
\end{equation}
With binary utilities, each pair contributes to the loss only when
its two utilities differ. By differentiating these logistic terms, we have
\begin{align}
2\frac{\partial\mathcal L(S)}{\partial S(\mathcal A)}
=&\:\sum_{\mathcal B\in\mathcal S\setminus\{\mathcal A\}}
\Pr\left(u(q_{\mathcal A})\ne u(q_{\mathcal B})\right)
\frac{\exp\left(S(\mathcal A)-S(\mathcal B)\right)-1}
{\exp\left(S(\mathcal A)-S(\mathcal B)\right)+1}\nonumber\\
&-|\mathcal S|\,\mathbb E\left[u(q_{\mathcal A})\right]
+\sum_{\mathcal B\in\mathcal S}\mathbb E\left[u(q_{\mathcal B})\right].
\label{eq:cgd-pairwise-gradient}
\end{align}
Here we differentiate with respect to each combination score, temporarily
allowing $S(\varnothing)$ to vary. We can restore $S(\varnothing)=0$
by subtracting its value from all scores without changing the loss.

We add the same amount to $S(\mathcal A^\star)$ as we subtract from $S(\widehat{\mathcal A})$, keeping all other scores fixed.
The difference $S(\mathcal A^\star)-S(\widehat{\mathcal A})$ changes by twice the added amount.
For each remaining combination $\mathcal B$, the differences $S(\mathcal A^\star)-S(\mathcal B)$ and $S(\widehat{\mathcal A})-S(\mathcal B)$ change by equal amounts in opposite directions.
Each logistic loss term has second derivative at most $1/4$ with respect to its score difference, and each pair's probability of different task utilities is at most one.
Hence, the second derivative of $\mathcal L(S)$ with respect to the amount added to $S(\mathcal A^\star)$ is at most
\begin{equation}
\frac{2^2}{4}+\frac{2(|\mathcal S|-2)}4
=\frac{|\mathcal S|}{2}.
\label{eq:cgd-direction-curvature}
\end{equation}
We choose the amount added to $S(\mathcal A^\star)$ and subtracted from $S(\widehat{\mathcal A})$ as
\begin{equation}
\frac{2}{|\mathcal S|}
\left(\frac{\partial\mathcal L(S)}{\partial S(\widehat{\mathcal A})}
-\frac{\partial\mathcal L(S)}{\partial S(\mathcal A^\star)}\right).
\label{eq:cgd-score-adjustment}
\end{equation}
By applying the second-order Taylor bound and Eq.~\eqref{eq:cgd-direction-curvature} to these two score adjustments, we reduce the loss by at least the squared difference between the derivatives in Eq.~\eqref{eq:cgd-score-adjustment}, divided by $|\mathcal S|$.
Since the loss cannot decrease below $\mathcal L^\star$, we obtain
\begin{align}
\frac{1}{|\mathcal S|}
\left(\frac{\partial\mathcal L(S)}{\partial S(\widehat{\mathcal A})}
-\frac{\partial\mathcal L(S)}{\partial S(\mathcal A^\star)}\right)^2
\leq&\:\mathcal L(S)-\mathcal L^\star,
\label{eq:cgd-gradient-square-bound}\\
\left|\frac{\partial\mathcal L(S)}{\partial S(\widehat{\mathcal A})}
-\frac{\partial\mathcal L(S)}{\partial S(\mathcal A^\star)}\right|
\leq&\:\sqrt{|\mathcal S|\left(\mathcal L(S)-\mathcal L^\star\right)}.
\label{eq:cgd-gradient-difference-bound}
\end{align}

We now use the selection rule
$S(\widehat{\mathcal A})\geq S(\mathcal B)$ for all $\mathcal B\in\mathcal S$.
The fraction in Eq.~\eqref{eq:cgd-pairwise-gradient} increases with
the score difference and lies in $[-1,1]$.
For each $\mathcal B\notin\{\mathcal A^\star,\widehat{\mathcal A}\}$,
the fraction for $\widehat{\mathcal A}$ is nonnegative and no smaller
than the fraction for $\mathcal A^\star$.
We therefore have
\begin{align}
&\Pr\left(u(q_{\mathcal A^\star})\ne u(q_{\mathcal B})\right)
\frac{\exp\left(S(\mathcal A^\star)-S(\mathcal B)\right)-1}
{\exp\left(S(\mathcal A^\star)-S(\mathcal B)\right)+1}\nonumber\\
&-\Pr\left(u(q_{\widehat{\mathcal A}})\ne u(q_{\mathcal B})\right)
\frac{\exp\left(S(\widehat{\mathcal A})-S(\mathcal B)\right)-1}
{\exp\left(S(\widehat{\mathcal A})-S(\mathcal B)\right)+1}\nonumber\\
\leq&\:\max\left\{0,
\Pr\left(u(q_{\mathcal A^\star})\ne u(q_{\mathcal B})\right)
-\Pr\left(u(q_{\widehat{\mathcal A}})\ne u(q_{\mathcal B})\right)
\right\}.
\label{eq:cgd-pair-term-comparison}
\end{align}
For the direct comparison between $\mathcal A^\star$ and
$\widehat{\mathcal A}$, the corresponding difference is nonpositive
because $S(\mathcal A^\star)\leq S(\widehat{\mathcal A})$.
By subtracting their derivatives in Eq.~\eqref{eq:cgd-pairwise-gradient}
and applying Eqs.~\eqref{eq:cgd-gradient-difference-bound}
and~\eqref{eq:cgd-pair-term-comparison}, we obtain
\begin{align}
|\mathcal S|\,\mathbb E\left[V(\mathcal A^\star)-V(\widehat{\mathcal A})\right]
\leq&\:\sum_{\mathcal B\in\mathcal S\setminus
\{\mathcal A^\star,\widehat{\mathcal A}\}}
\max\left\{0,
\Pr\left(u(q_{\mathcal A^\star})\ne u(q_{\mathcal B})\right)
-\Pr\left(u(q_{\widehat{\mathcal A}})\ne u(q_{\mathcal B})\right)
\right\}\nonumber\\
&+2\sqrt{|\mathcal S|\left(\mathcal L(S)-\mathcal L^\star\right)}.
\label{eq:cgd-pairwise-comparison}
\end{align}

Finally, we express the probabilities of different task utilities using covariances.
For binary utilities, we have
\begin{align}
\Pr\left(u(q_{\mathcal A})\ne u(q_{\mathcal B})\right)
=&\:\mathbb E\left[u(q_{\mathcal A})\right]+\mathbb E\left[u(q_{\mathcal B})\right]
-2\mathbb E\left[u(q_{\mathcal A})u(q_{\mathcal B})\right]\nonumber\\
=&\:\mathbb E\left[u(q_{\mathcal A})\right]+\mathbb E\left[u(q_{\mathcal B})\right]
-2\mathbb E\left[u(q_{\mathcal A})\right]\mathbb E\left[u(q_{\mathcal B})\right]
-2\operatorname{Cov}\left(u(q_{\mathcal A}),u(q_{\mathcal B})\right).
\label{eq:cgd-disagreement-covariance}
\end{align}
By applying Eq.~\eqref{eq:cgd-disagreement-covariance} to
$\mathcal A^\star$ and $\widehat{\mathcal A}$ and subtracting, we have
\begin{align}
&\Pr\left(u(q_{\mathcal A^\star})\ne u(q_{\mathcal B})\right)
-\Pr\left(u(q_{\widehat{\mathcal A}})\ne u(q_{\mathcal B})\right)
\nonumber\\
=&\:\mathbb E\left[V(\mathcal A^\star)-V(\widehat{\mathcal A})\right]
\left(1-2\mathbb E\left[u(q_{\mathcal B})\right]\right)
+2\operatorname{Cov}\left(
u(q_{\widehat{\mathcal A}})-u(q_{\mathcal A^\star}),u(q_{\mathcal B})\right)
\nonumber\\
\leq&\:\mathbb E\left[V(\mathcal A^\star)-V(\widehat{\mathcal A})\right]
+2\max\left\{0,\operatorname{Cov}\left(
u(q_{\widehat{\mathcal A}})-u(q_{\mathcal A^\star}),u(q_{\mathcal B})\right)\right\}.
\label{eq:cgd-covariance-comparison}
\end{align}
We obtain the inequality because $\mathcal A^\star$ maximizes expected
utility and $1-2\mathbb E\left[u(q_{\mathcal B})\right]\leq1$.
Its right-hand side is nonnegative, so it also bounds the maximum
with zero in Eq.~\eqref{eq:cgd-pairwise-comparison}.
By substitution and moving the $|\mathcal S|-2$ copies of the expected
utility difference to the left, we have
\begin{align}
2\mathbb E\left[V(\mathcal A^\star)-V(\widehat{\mathcal A})\right]
\leq&\:2\sum_{\mathcal B\in\mathcal S\setminus
\{\mathcal A^\star,\widehat{\mathcal A}\}}
\max\left\{0,\operatorname{Cov}\left(
u(q_{\widehat{\mathcal A}})-u(q_{\mathcal A^\star}),u(q_{\mathcal B})\right)\right\}
\nonumber\\
&+2\sqrt{|\mathcal S|\left(\mathcal L(S)-\mathcal L^\star\right)}.
\label{eq:cgd-dependent-bound-final-step}
\end{align}
We divide both sides by two to complete the proof of
Lemma~\ref{thm:cgd-correlated-selection}.
\end{proof}

\paragraph{The comparison used in the bound.}
\label{app:cgd-selection-reference}
With the scoring inputs fixed, $\mathcal A^\star$ maximizes expected utility. Selecting the best combination after observing all task utilities defines a different comparison.
By adding and subtracting $V(\mathcal A^\star)$, we have
\begin{align}
&\mathbb E\left[\max_{\mathcal A\in\mathcal S}V(\mathcal A)
-V(\widehat{\mathcal A})\right]\nonumber
=\:\mathbb E\left[\max_{\mathcal A\in\mathcal S}V(\mathcal A)
-V(\mathcal A^\star)\right]
+\mathbb E\left[V(\mathcal A^\star)-V(\widehat{\mathcal A})\right].
\label{eq:cgd-selection-reference-decomposition}
\end{align}
The lemma bounds the second term. The first term also contributes when
we compare with the best combination after observing all utilities.

\subsection{CGD's Expected Task-Utility Difference}
\label{app:cgd-recovery-transfer}

\begin{proof}[Proof of Theorem~\ref{thm:cgd-recovery-ranking}]
We bound the change in the covariances in
Lemma~\ref{thm:cgd-correlated-selection} when we replace receiver
activations. We keep the scoring inputs, scores, and expected pairwise
loss $\mathcal L(S)$ unchanged. Since equal answers have equal binary utilities,
we have
\begin{equation}
\mathbb E\left|u(q_{\mathcal A})-u^{\rm act}_{\mathcal A}\right|
\leq\Pr(q^{\rm act}_{\mathcal A}\ne q_{\mathcal A})
\leq\max_{\mathcal B\in\mathcal S}
\Pr(q^{\rm act}_{\mathcal B}\ne q_{\mathcal B}).
\label{eq:cgd-utility-disagreement}
\end{equation}
By adding and subtracting
$\operatorname{Cov}(u^{\rm act}_{\mathcal A},u(q_{\mathcal B}))$, we have
\begin{align}
&\operatorname{Cov}\left(u(q_{\mathcal A}),u(q_{\mathcal B})\right)
-\operatorname{Cov}\left(u^{\rm act}_{\mathcal A},u^{\rm act}_{\mathcal B}\right)
\nonumber\\
=&\:\operatorname{Cov}\left(
u(q_{\mathcal A})-u^{\rm act}_{\mathcal A},u(q_{\mathcal B})\right)
+\operatorname{Cov}\left(
u^{\rm act}_{\mathcal A},u(q_{\mathcal B})-u^{\rm act}_{\mathcal B}\right)
\nonumber\\
=&\:\mathbb E\left[
\left(u(q_{\mathcal A})-u^{\rm act}_{\mathcal A}\right)
\left(u(q_{\mathcal B})-\mathbb E\left[u(q_{\mathcal B})\right]\right)\right]
+\mathbb E\left[
\left(u^{\rm act}_{\mathcal A}-\mathbb E\left[u^{\rm act}_{\mathcal A}\right]\right)
\left(u(q_{\mathcal B})-u^{\rm act}_{\mathcal B}\right)\right].
\label{eq:cgd-covariance-expansion}
\end{align}
The absolute difference between each binary utility and its expectation is at most one.
By applying the triangle inequality and
Eq.~\eqref{eq:cgd-utility-disagreement}, we obtain
\begin{align}
\left|\operatorname{Cov}\left(u(q_{\mathcal A}),u(q_{\mathcal B})\right)
-\operatorname{Cov}\left(u^{\rm act}_{\mathcal A},u^{\rm act}_{\mathcal B}\right)\right|
\nonumber
\leq&\:\mathbb E\left|u(q_{\mathcal A})-u^{\rm act}_{\mathcal A}\right|
+\mathbb E\left|u(q_{\mathcal B})-u^{\rm act}_{\mathcal B}\right|\nonumber\\
\leq&\:2\max_{\mathcal A\in\mathcal S}
\Pr(q^{\rm act}_{\mathcal A}\ne q_{\mathcal A}).
\label{eq:cgd-covariance-recovery-bound}
\end{align}
We apply Eq.~\eqref{eq:cgd-covariance-recovery-bound} separately with $\mathcal A=\widehat{\mathcal A}$ and $\mathcal A=\mathcal A^\star$, while keeping $\mathcal B$ fixed.
By adding these two bounds and applying the triangle inequality, we obtain
\begin{align}
&\max\left\{0,\operatorname{Cov}\left(
u(q_{\widehat{\mathcal A}})-u(q_{\mathcal A^\star}),u(q_{\mathcal B})\right)\right\}
\nonumber\\
\leq&\:\max\left\{0,\operatorname{Cov}\left(
u^{\rm act}_{\widehat{\mathcal A}}-u^{\rm act}_{\mathcal A^\star},
u^{\rm act}_{\mathcal B}\right)\right\}
+4\max_{\mathcal A\in\mathcal S}
\Pr(q^{\rm act}_{\mathcal A}\ne q_{\mathcal A}).
\label{eq:cgd-recovered-covariance-comparison}
\end{align}
For $\mathcal A^\star\ne\widehat{\mathcal A}$, we sum over the
$|\mathcal S|-2$ remaining combinations and substitute into
Lemma~\ref{thm:cgd-correlated-selection}:
\begin{align}
\mathbb E\left[V(\mathcal A^\star)-V(\widehat{\mathcal A})\right]
\leq&\:\sum_{\mathcal B\in\mathcal S\setminus
\{\mathcal A^\star,\widehat{\mathcal A}\}}
\max\left\{0,\operatorname{Cov}\left(
u^{\rm act}_{\widehat{\mathcal A}}-u^{\rm act}_{\mathcal A^\star},
u^{\rm act}_{\mathcal B}\right)\right\}
\nonumber\\
&+\sqrt{|\mathcal S|\left(\mathcal L(S)-\mathcal L^\star\right)}
+4(|\mathcal S|-2)\max_{\mathcal A\in\mathcal S}
\Pr(q^{\rm act}_{\mathcal A}\ne q_{\mathcal A}).
\label{eq:cgd-recovery-transfer-final}
\end{align}
For $\mathcal A^\star=\widehat{\mathcal A}$, the left-hand side is zero
and the inequality holds directly.
This completes the proof of Theorem~\ref{thm:cgd-recovery-ranking}.
\end{proof}

When the receiver selects the highest-scoring candidate answer, we apply
Theorem~\ref{thm:sequential-social-circuit} to each message combination.
The probability that activation replacement and the run using $\mathcal A$ return different answers is at most
$\Pr(\mathrm{err}\geq\mathrm{gap})$, using the no-message run as the
reference. If replacement preserves every answer, the last term in
Eq.~\eqref{eq:cgd-recovery-transfer-final} is zero.
Proposition~\ref{prop:sc-full-recovery} ensures this property when its
conditions on token positions and restored activations hold.

\begin{corollary}[Optimal message selection under equal covariance]
\label{cor:cgd-local-characterization}
If the task utilities of different message combinations have equal
covariance and finite scores satisfy $\mathcal L(S)=\mathcal L^\star$, then
$\widehat{\mathcal A}\in\argmax_{\mathcal A\in\mathcal S}
\mathbb E_{\mathcal P}[u(q_{\mathcal A})]$.
\end{corollary}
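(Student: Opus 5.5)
The plan is to read the corollary off Lemma~\ref{thm:cgd-correlated-selection}, whose bound concerns the actual task utilities $u(q_{\mathcal A})$ rather than the replaced ones, so the activation-recovery term of Theorem~\ref{thm:cgd-recovery-ranking} never enters. If $\widehat{\mathcal A}=\mathcal A^\star$ there is nothing to prove. Otherwise I will show that both terms on the right-hand side of Eq.~\eqref{eq:cgd-dependent-utility-bound} vanish. Then $\mathbb E_{\mathcal P}[V(\mathcal A^\star)-V(\widehat{\mathcal A})]\le 0$.

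\textbf{Step 1: the loss term.} By hypothesis $\mathcal L(S)=\mathcal L^\star$, so $\sqrt{|\mathcal S|(\mathcal L(S)-\mathcal L^\star)}=0$. The lemma explicitly allows any finite scores. We therefore need no claim that the infimum is attained elsewhere: the given finite $S$ attains it, and the lemma applies directly.

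\textbf{Step 2: the covariance term.} I interpret "equal covariance" as saying that $\operatorname{Cov}_{\mathcal P}(u(q_{\mathcal A}),u(q_{\mathcal B}))=c$ for every pair of distinct $\mathcal A,\mathcal B\in\mathcal S$. For each $\mathcal B\notin\{\mathcal A^\star,\widehat{\mathcal A}\}$, bilinearity gives
\[
\operatorname{Cov}_{\mathcal P}\bigl(u(q_{\widehat{\mathcal A}})-u(q_{\mathcal A^\star}),u(q_{\mathcal B})\bigr)=c-c=0 .
\]
This holds because $\mathcal B$ is distinct from both $\widehat{\mathcal A}$ and $\mathcal A^\star$, so each positive part is $0$. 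This is the one place where care is needed. The hypothesis must cover only distinct pairs, since variances need not be equal, and the sum in the lemma only ranges over $\mathcal B$ distinct from both. So the argument does not require equal variances.

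\textbf{Step 3: conclude.} With both terms zero we get $\mathbb E_{\mathcal P}[V(\mathcal A^\star)]\le\mathbb E_{\mathcal P}[V(\widehat{\mathcal A})]$. Because $V(\mathcal A)=u(q_{\mathcal A})-u(q_\varnothing)$ and the common term cancels, this reads $\mathbb E_{\mathcal P}[u(q_{\widehat{\mathcal A}})]\ge\mathbb E_{\mathcal P}[u(q_{\mathcal A^\star})]=\max_{\mathcal A}\mathbb E_{\mathcal P}[u(q_{\mathcal A})]$. Hence $\widehat{\mathcal A}$ lies in the argmax.

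The main obstacle is only interpretive: making precise which covariances are assumed equal so that Step 2 applies. The remaining steps are substitutions into the lemma.
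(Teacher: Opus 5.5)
Your proposal is correct and is essentially the paper's own proof. It substitutes $\mathcal L(S)=\mathcal L^\star$ and the bilinearity cancellation $\operatorname{Cov}_{\mathcal P}(u(q_{\widehat{\mathcal A}})-u(q_{\mathcal A^\star}),u(q_{\mathcal B}))=0$ for each $\mathcal B\notin\{\mathcal A^\star,\widehat{\mathcal A}\}$ into Lemma~\ref{thm:cgd-correlated-selection}, yielding $0\le\mathbb E_{\mathcal P}[V(\mathcal A^\star)-V(\widehat{\mathcal A})]\le 0$; as you note, this needs equal covariances only between distinct combinations, not equal variances, which is all the paper's argument uses.
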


\begin{proof}[Proof of Corollary~\ref{cor:cgd-local-characterization}]
For each $\mathcal B\notin\{\mathcal A^\star,\widehat{\mathcal A}\}$,
equal covariance implies
\begin{align}
&\operatorname{Cov}\left(
u(q_{\widehat{\mathcal A}})-u(q_{\mathcal A^\star}),u(q_{\mathcal B})\right)
\nonumber\\
=&\:\operatorname{Cov}\left(u(q_{\widehat{\mathcal A}}),u(q_{\mathcal B})\right)
-\operatorname{Cov}\left(u(q_{\mathcal A^\star}),u(q_{\mathcal B})\right)=0.
\label{eq:cgd-equal-covariance-cancellation}
\end{align}
By substituting this equality and $\mathcal L(S)=\mathcal L^\star$
into Lemma~\ref{thm:cgd-correlated-selection}, we have
\begin{equation}
0\leq\mathbb E\left[V(\mathcal A^\star)-V(\widehat{\mathcal A})\right]
\leq\sqrt{|\mathcal S|\left(\mathcal L(S)-\mathcal L^\star\right)}=0.
\label{eq:cgd-optimal-selection}
\end{equation}
Thus CGD selects a combination with the highest expected utility.
\end{proof}

\paragraph{Unequal covariances.}
We construct an example with unequal covariances in which minimizing the pairwise loss does not maximize the expected utility.
We take two candidate messages, with $\mathcal A$ and $\mathcal B$
each containing one message. Thus $\mathcal C=\mathcal A\cup\mathcal B$
and $\mathcal S=\{\varnothing,\mathcal A,\mathcal B,\mathcal C\}$.
We consider that the binary utilities satisfy
\begin{equation}
\begin{gathered}
u(q_{\varnothing})=u(q_{\mathcal A})=1-u(q_{\mathcal B})
\leq u(q_{\mathcal C}),\\
0<\mathbb E\left[u(q_{\varnothing})\right]<\frac29,
\qquad
\mathbb E\left[u(q_{\mathcal C})\right]=1-2\mathbb E\left[u(q_{\varnothing})\right].
\end{gathered}
\label{eq:cgd-unequal-covariance-family}
\end{equation}
We assign probabilities $\mathbb E\left[u(q_{\varnothing})\right]$,
$2\mathbb E\left[u(q_{\varnothing})\right]$, and
$1-3\mathbb E\left[u(q_{\varnothing})\right]$ to the utility pairs
$(u(q_{\varnothing}),u(q_{\mathcal C}))=(1,1)$, $(0,0)$, and $(0,1)$,
respectively. These probabilities are positive and sum to one.
By Eq.~\eqref{eq:cgd-unequal-covariance-family}, we have
\begin{equation}
\mathbb E\left[u(q_{\mathcal B})\right]
=1-\mathbb E\left[u(q_{\varnothing})\right]
>\mathbb E\left[u(q_{\mathcal C})\right]
>\mathbb E\left[u(q_{\mathcal A})\right]
=\mathbb E\left[u(q_{\varnothing})\right].
\label{eq:cgd-unequal-utility-order}
\end{equation}
Thus $\mathcal B$ has the highest expected utility throughout this family.

With $S(\varnothing)=0$, exactly one set of finite scores minimizes the loss.
For each pair involving $\mathcal B$, either utility can exceed the other
with positive probability. Each pair loss is strictly convex in its score
difference and tends to infinity when this difference tends to either
infinity. With $S(\varnothing)=0$, any nonzero score change affects at
least one of these three differences, and any unbounded score sequence
makes at least one difference unbounded. Hence the continuous total loss
is minimized by a unique set of finite scores.

Since $u(q_{\mathcal A})=u(q_{\varnothing})$, exchanging their scores
leaves the loss unchanged. We can then subtract the new no-message
score from every score to restore $S(\varnothing)=0$.
By uniqueness of the minimum, we obtain
\begin{equation}
S(\mathcal A)=S(\varnothing)=0.
\label{eq:cgd-equal-zero-scores}
\end{equation}
At this minimum, we set the derivatives with respect to
$S(\mathcal B)$ and $S(\mathcal C)$ to zero.
By substituting the joint probabilities and rearranging, we have
\begin{align}
\frac{2}{1+\exp S(\mathcal B)}
=&\:\frac{3\mathbb E\left[u(q_{\varnothing})\right]}
{1+\exp\left(S(\mathcal C)-S(\mathcal B)\right)},
\label{eq:cgd-minimum-score-b}\\
\frac{2\left(1-3\mathbb E\left[u(q_{\varnothing})\right]\right)}
{1+\exp S(\mathcal C)}
=&\:\frac{3\mathbb E\left[u(q_{\varnothing})\right]}
{1+\exp\left(S(\mathcal B)-S(\mathcal C)\right)}
-\mathbb E\left[u(q_{\varnothing})\right].
\label{eq:cgd-minimum-score-c}
\end{align}
The right-hand side of Eq.~\eqref{eq:cgd-minimum-score-b} is less than
$3\mathbb E\left[u(q_{\varnothing})\right]<1$, so $S(\mathcal B)>0$.

We assume $S(\mathcal C)\leq S(\mathcal B)$ and derive a contradiction.
By substituting $\exp\left(S(\mathcal C)-S(\mathcal B)\right)\leq1$ into
Eq.~\eqref{eq:cgd-minimum-score-b}, we have
\begin{equation}
\frac{1}{1+\exp S(\mathcal B)}
\geq\frac{3\mathbb E\left[u(q_{\varnothing})\right]}4.
\label{eq:cgd-score-b-lower-bound}
\end{equation}
We combine this inequality with Eq.~\eqref{eq:cgd-minimum-score-c}:
\begin{align}
\frac{3\mathbb E\left[u(q_{\varnothing})\right]}2
\left(1-3\mathbb E\left[u(q_{\varnothing})\right]\right)
\overset{(\text{a})}{\leq}&\:
\frac{2\left(1-3\mathbb E\left[u(q_{\varnothing})\right]\right)}
{1+\exp S(\mathcal C)}\nonumber\\
=&\:\frac{3\mathbb E\left[u(q_{\varnothing})\right]}
{1+\exp\left(S(\mathcal B)-S(\mathcal C)\right)}
-\mathbb E\left[u(q_{\varnothing})\right]\nonumber\\
\overset{(\text{b})}{\leq}&\:
\frac{\mathbb E\left[u(q_{\varnothing})\right]}2.
\label{eq:cgd-score-order-contradiction}
\end{align}
Inequality (a) follows from Eq.~\eqref{eq:cgd-score-b-lower-bound}
and $1+\exp S(\mathcal C)\leq1+\exp S(\mathcal B)$.
Inequality (b) follows from
$\exp\left(S(\mathcal B)-S(\mathcal C)\right)\geq1$.
By dividing by $\mathbb E\left[u(q_{\varnothing})\right]/2>0$ and rearranging,
we obtain $\mathbb E\left[u(q_{\varnothing})\right]\geq2/9$, which contradicts
Eq.~\eqref{eq:cgd-unequal-covariance-family}.
Hence,
\begin{equation}
S(\mathcal C)>S(\mathcal B)>S(\mathcal A)=S(\varnothing)=0.
\label{eq:cgd-unequal-score-order}
\end{equation}
CGD therefore selects $\mathcal C$ at the loss minimum, although
$\mathcal B$ has higher expected utility. The difference is
\begin{equation}
\mathbb E\left[V(\mathcal B)-V(\mathcal C)\right]
=\mathbb E\left[u(q_{\varnothing})\right]>0.
\label{eq:cgd-unequal-selection-loss}
\end{equation}
Finally, the covariances in this family satisfy
\begin{equation}
\operatorname{Cov}\left(u(q_{\varnothing}),u(q_{\mathcal A})\right)
=-\operatorname{Cov}\left(u(q_{\varnothing}),u(q_{\mathcal B})\right)
=\mathbb E\left[u(q_{\varnothing})\right]
\left(1-\mathbb E\left[u(q_{\varnothing})\right]\right)>0.
\label{eq:cgd-unequal-covariances}
\end{equation}
Thus, unequal covariance can lead to a positive expected task-utility
difference even when finite scores minimize the pairwise loss.

\section{Algorithm}
\label{app:algorithms}
\begingroup
\raggedbottom

Algorithm~\ref{alg:social-circuit-confirmation} compares a message's
effect with the effect reproduced by receiver activation replacement.
Algorithm~\ref{alg:cgd-summary} uses the sender information and
receiver changes to learn which message combination improves
task performance.

\begin{algorithm}[h]
\caption{\method{}: message and activation effects}
\label{alg:social-circuit-confirmation}
\renewcommand{\algorithmicrequire}{\textbf{Input:}}
\renewcommand{\algorithmicensure}{\textbf{Output:}}
\setlength{\fboxsep}{2pt}
\newcommand{\scphase}[2]{%
\par\noindent\colorbox[gray]{0.95}{%
\begin{minipage}{\dimexpr\linewidth-2\fboxsep\relax}
\begin{algorithmic}[1]
\setcounter{ALG@line}{#1}
#2
\end{algorithmic}
\end{minipage}}\par\vspace{3pt}}

\begin{algorithmic}[1]
\Require Task and receiver; original and reference messages; selected layers and post-message prompt positions; candidate answers $\mathcal O$; task utility $u$.
\end{algorithmic}
\scphase{0}{
\Statex \textbf{Matched reruns}
\State Fix the task, earlier messages, receiver model, and generation seed.
\State Run the receiver with the reference message; record $F^{\rm ref}(o)$ for each $o\in\mathcal O$.
\State Run the receiver with the original message; record $F^{\rm msg}(o)$ and the activations at the selected layers and positions.
}
\scphase{3}{
\Statex \textbf{Activation replacement}
\State Repeat the reference run, replacing the selected activations with their original-message values; record $F^{\rm act}(o)$.
}
\scphase{4}{
\Statex \textbf{Effect calculation}
\State Select $q^{\rm ref}$, $q^{\rm msg}$, and $q^{\rm act}$ by maximizing their respective answer scores, using the same tie rule.
\State Calculate the message and activation effects on each answer score:
\Statex \hspace{\algorithmicindent}$\begin{aligned}
\Delta F^{\rm msg}(o)&=F^{\rm msg}(o)-F^{\rm ref}(o),\\
\Delta F^{\rm act}(o)&=F^{\rm act}(o)-F^{\rm ref}(o).
\end{aligned}$
\State Calculate the corresponding changes in task utility:
\Statex \hspace{\algorithmicindent}$\begin{aligned}
\Delta u^{\rm msg}&=u(q^{\rm msg})-u(q^{\rm ref}),\\
\Delta u^{\rm act}&=u(q^{\rm act})-u(q^{\rm ref}).
\end{aligned}$
}
\begin{algorithmic}[1]
\Ensure $\Delta F^{\rm msg}$, $\Delta F^{\rm act}$, $\Delta u^{\rm msg}$, and $\Delta u^{\rm act}$.
\end{algorithmic}
\end{algorithm}

\begin{algorithm}[h]
\caption{Circuit-Guided Deliberation (\cgd{})}
\label{alg:cgd-summary}
\renewcommand{\algorithmicrequire}{\textbf{Input:}}
\renewcommand{\algorithmicensure}{\textbf{Output:}}
\setlength{\fboxsep}{2pt}
\newcommand{\scphase}[2]{%
\par\noindent\colorbox[gray]{0.95}{%
\begin{minipage}{\dimexpr\linewidth-2\fboxsep\relax}
\begin{algorithmic}[1]
\setcounter{ALG@line}{#1}
#2
\end{algorithmic}
\end{minipage}}\par\vspace{3pt}}
\begin{algorithmic}[1]
\Require Training tasks with task utility $u$; a new task; sending and receiving agents.
\end{algorithmic}
\scphase{0}{
\Statex \textbf{Training}
\ForAll{training tasks}
  \State \label{alg:cgd-input-start} Record sender activation summaries; generate candidate messages $\mathcal C$ once.
  \State Form $\mathcal S=\{\mathcal A\mid\mathcal A\subseteq\mathcal C\}$, including $\varnothing$.
  \ForAll{$\mathcal A\in\mathcal S$}
    \State Run the receiver with $\mathcal A$, keeping the task and earlier messages fixed; record $\mathbf r_{\mathcal A}$ before answering and save $q_{\mathcal A}$.
  \EndFor
  \State \label{alg:cgd-input-end} Construct $\mathbf s_{\mathcal A}$ and the scoring input $\mathbf x_{\mathcal A}$ for nonempty combinations.
  \State Set $V(\mathcal A)=u(q_{\mathcal A})-u(q_{\varnothing})$ for every $\mathcal A\in\mathcal S$.
\EndFor
\State Keep the language models fixed; fit the shared $f_{\boldsymbol\theta}$ using the gain-weighted pairwise loss in Eq.~\eqref{eq:cgd-pairwise-loss}, with $S(\varnothing)=0$ and a squared-weight penalty that leaves the bias unpenalized.
}
\scphase{10}{
\Statex \textbf{Selection for a new task}
\State Record sender activation summaries; generate candidate messages $\mathcal C$ once.
\State Form $\mathcal S=\{\mathcal A\mid\mathcal A\subseteq\mathcal C\}$, including $\varnothing$.
\ForAll{$\mathcal A\in\mathcal S$}
  \State Keep the task and earlier messages fixed. Read $\mathcal A$ with the receiver and record $\mathbf r_{\mathcal A}$ before generating any answer.
\EndFor
\State Construct $\mathbf s_{\mathcal A}$ and the scoring input $\mathbf x_{\mathcal A}$ for nonempty combinations.
\State Set $S(\varnothing)=0$ and $S(\mathcal A)=f_{\boldsymbol\theta}(\mathbf x_{\mathcal A})$ for nonempty $\mathcal A\in\mathcal S$.
\State Select $\widehat{\mathcal A}\in\arg\max_{\mathcal A\in\mathcal S}S(\mathcal A)$.
\State If scores tie, select $\varnothing$ when available; otherwise use the fixed combination order.
\State Obtain the receiver's answer $q_{\widehat{\mathcal A}}$ using $\widehat{\mathcal A}$, as specified in Appendix~\ref{app:protocol}.
}
\begin{algorithmic}[1]
\Ensure Selected combination $\widehat{\mathcal A}$ and receiver answer $q_{\widehat{\mathcal A}}$.
\end{algorithmic}
\end{algorithm}

\FloatBarrier
\endgroup

\section{Experimental protocol details}
\label{app:protocol}

\subsection{Tasks and settings}
\label{app:protocol-settings}
We use the tasks and models in Section~\ref{sec:experimental_setup} and the synthetic Correlated-Trap task, where several agents share a passage supporting an incorrect answer and another receives evidence for the correct answer.
Correlated-Trap comparisons also include Gemma~4 E4B-IT~\citep{gemmateam2026gemma4}.
On this task, \textbf{Fixed sender} selects the agent given different evidence.
Table~\ref{tab:protocol-data} distinguishes the evaluation sets.
For matched interventions, we keep the task input, model, earlier messages,
and generation seed unchanged. We use seeds 0/1/2 for CGD,
10/11/12 for joint replacements, and
20/21/22 for mean changes.

\begin{table}[t]
\centering
\caption{Evaluation sets. Counts are per experiment.}
\label{tab:protocol-data}
\small
\begin{tabular}{@{}>{\raggedright\arraybackslash}p{0.48\linewidth}>{\raggedright\arraybackslash}p{0.23\linewidth}>{\raggedright\arraybackslash}p{0.23\linewidth}@{}}
\toprule
Comparison & Task & Questions\\
\midrule
Collaboration approaches & Correlated-Trap & 600\\
Message content and repetition & Correlated-Trap & 240 per direction\\
Activation replacement & Correlated-Trap & 240 per direction\\
Message-specific replacement & 2Wiki & 240, two candidates each\\
Two-option message selection & 2Wiki & 240 per test set\\
Two-answer scoring & MATH-500 & 200\\
Two-answer scoring & TruthfulQA & 200\\
Free-generation performance & 2Wiki, GSM8K & 240 per dataset\\
\bottomrule
\end{tabular}
\end{table}
\paragraph{Agent inputs.}
All agents receive the same task description. On 2Wiki, each agent reads its assigned passages.
For two-answer scoring, we assign supporting passages to the three agents in turn.
Each remaining passage goes to the agent with the fewest passages. Ties follow the agent order.
For free generation, we assign all passages in their original order, cycling through the agents.
GSM8K uses the same problem without extra passages.
MATH-500 and TruthfulQA use the same question and answer options.
Agent instructions specify independent reasoning, verification, or final answering.

\begin{table}[t]
\centering
\caption{Information used by each approach.}
\label{tab:agent-inputs}
\small
\begin{tabular}{@{}>{\raggedright\arraybackslash}p{0.19\linewidth}>{\raggedright\arraybackslash}p{0.77\linewidth}@{}}
\toprule
Approach & Information\\
\midrule
Single, SC6 & The designated receiver's assigned input, without messages from other agents.\\
Majority vote & Independent answers from three agents, each using its own assigned input.\\
MAD, MADC & Each agent's assigned input and the responses exchanged according to the communication procedure.\\
MAD-M2 & Each agent's assigned input and the responses retained by memory masking.\\
MOC & The receiver's assigned input, a summary of the two senders' messages, and an updated sender response.\\
SafeSieve & Each agent's assigned input and messages received through the learned communication graph.\\
CGD & The receiver's assigned input and the selected sender messages. Message-selection baselines use the same messages and recorded receiver answers.\\
\bottomrule
\end{tabular}
\end{table}

In the 2Wiki two-answer experiments, CGD messages include the sender's passages and response.
The collaboration baselines exchange generated responses.
In free generation, both include the sender's passages in their initial messages.

\paragraph{Answer options.}
For two-answer scoring, each task has one correct answer and one incorrect answer.
For 2Wiki, we use tasks whose correct answer is a passage title. 
We select the incorrect answer alphabetically from the other titles, excluding those of passages used as supporting evidence.
For MATH-500, we use tasks with integer answers. 
We construct the incorrect answer by adding 1 to a nonnegative correct answer or subtracting 1 from a negative correct answer.
For TruthfulQA, we use the dataset's correct answer and its first listed incorrect answer.
We alternate the correct answer between answer A and B across tasks.

\paragraph{MOC implementation.} We adapt MOC~\citep{guan2026moc} to six model calls with one summary and a fixed communication order. After three initial responses, we summarize the two senders' messages. One sender updates its answer using this summary, and the receiver reads the summary and updated answer before generating its final answer. This fixed sequence replaces the original five-summary merger, graph-based execution, and separate final decision node. \subsection{Message and activation comparisons}
\label{app:protocol-replacement}
For message-content comparisons on Gemma-3-4B, we use \textbf{No message}, \textbf{Original message}, \textbf{Facts and reasoning only}, \textbf{Suggested answer only}, \textbf{Unrelated message}, and \textbf{Opposite-answer message}.
For repetition, we deliver one to seven copies, paraphrases, or independently
stated pieces of evidence. We evaluate correction and echo chambers separately.

For joint replacement, we shuffle message positions in a single cycle,
preserving tokens and length. In the shuffled-message run, we restore the
original-message activations at fixed prompt tokens after the message,
including the chat footer but excluding the message and answer prefix.
We apply these replacements at six layers together, at 15\%, 30\%, 45\%, 60\%, 75\%, and 90\%
of decoder depth. For Swapped-role change, we use the opposite communication
direction on the same question and seed, match the change's norm at each layer,
and add it to the shuffled-message activations of the receiver being studied.

For layerwise comparisons, we edit activations at the last prompt token at one
layer per run. We add the Original-minus-Empty change to Empty and subtract
it from Original. We pair questions with the same template, correct label, and
independent-agent position. The control uses the paired question's change from
the same role, seed, and layer, matched in norm. For mean changes, we estimate directions and
magnitudes on 120 separate questions and apply them across six layers
on 240 new questions. Mean swapped-role change uses the reversed communication direction on these separate questions.

For 2Wiki replacement, both senders and the receiver read the same summary
of the initial answers. We add each candidate separately and replace activations
at the last token of the common answer prefix across six layers, using no
candidate message as the reference. Other-message change uses the other candidate's
change, matched in norm at each layer. We calculate both MAEs where the
candidate messages are nonempty and distinct: 422/428/418 samples for Gemma
and 444/446/444 for Qwen across seeds. Effects use the correct answer's log
probability normalized over the two options.

\subsection{CGD training and comparisons}
\label{app:protocol-cgd}
We use 340 2Wiki training questions and 100 validation questions, with seed 0.
Two-option and free-generation experiments collect separate outputs and train
separate scoring functions. We exclude evaluation questions from both sets.
Two-option evaluation uses the original 240 questions for all three models
and another 240 for the two 4B models, without retraining.
Free generation uses separate sets of 240 questions each from 2Wiki and
GSM8K, without GSM8K training.

We set $d=64$ and use zero-based layers 5, 10, 15, 20, 25, and 30 for Gemma,
and 5, 10, 16, 21, 26, and 32 for Qwen, with projection seed 2026090301.
We read the last common answer-prefix token for two-option tasks in CGD and the last
prompt token for free generation.
For Text-only, we convert the message tokens and their positions into a 64-dimensional vector.
We normalize this vector to unit length and use the message token count as an additional input.
CGD and the learned message-selection baselines use the same training
tasks and task utilities within each experiment.
We fit their scoring functions using the pairwise loss in
Section~\ref{sec:cgd-message-selection} and L-BFGS-B. We standardize inputs using the training data.
The two-option experiment uses weight penalty 1.
Free generation uses exact-match success on development answers and selects
the penalty from $\{0.01,0.1,1,10,100\}$ by validation accuracy;
all three CGD models select 100.

We keep CGD unchanged and adjust Activation products to use the same scaling and penalty.
Both approaches use the same messages and training data.
We report the comparison in Table~\ref{tab:cgd-matched-products}.

Learning curves keep penalty 1 and use the same nested subsets of 50, 100,
200, and 340 questions across selectors, with three draws below 340.

\subsection{Scoring and cost}
\label{app:protocol-scoring}

\paragraph{Answer scores.}
For complete-answer scoring, we calculate
\begin{equation}
F(o)=\sum_{\ell=1}^{L}\log p(y_\ell\mid P,y_{<\ell}),
\label{eq:complete-answer-score}
\end{equation}
where $P$ is the prompt, $o=(y_1,\ldots,y_L)$ is the answer,
$L$ is its token count, $y_{<\ell}$ denotes the preceding answer tokens,
and $p$ is the receiver's probability for each token.
For 2Wiki activation comparisons, we calculate the ratio of the
correct answer's probability to the sum of both answer probabilities
and use its log as the answer score.
Single-token scoring uses the first tokens at which the candidate
answers are different.
For $\rho$, the message and activation effects use the same reference
input, as specified in Appendix~\ref{app:protocol-replacement}.

\paragraph{Task performance.}
All approaches use the same tasks within each comparison.
Accuracy changes use No message as the reference.
Two-answer scoring selects between supplied answers.
Free generation uses no supplied candidate answers.
For free generation, we use exact match with any provided correct
answer for 2Wiki and compare the final answer number with the correct
number for GSM8K.
The code provides the generation settings.
We use limits of 256 generated tokens per model call for 2Wiki
two-answer scoring and 512 for 2Wiki free generation, GSM8K,
MATH-500, and TruthfulQA.

\paragraph{Two-answer comparisons.}
On 2Wiki, CGD selects the higher-scoring answer for its selected
message combination.
SC6 and Majority vote use majority voting.
If voting does not select a single answer, we select A.
The other collaboration baselines use the receiver's final A/B answer.
If the receiver does not provide A or B, we select the higher-scoring
answer.
In Table~\ref{tab:2wiki-answer-scoring}, we also use the higher-scoring
A/B answer for all 6 baselines that do not use voting.
We keep each task, message, and prompt unchanged.
If the scores are equal, we select A.

\paragraph{Generation cost.}
For free generation and the MATH-500 and TruthfulQA A/B comparisons,
the receiver generates one final answer using the selected message
combination.
Generation cost counts generated tokens in the sender messages
and the final answer when generated.
We generate answers for the other combinations after message selection
and record these outputs separately for message-selection comparisons.
CGD also scores answers and records activations in 2 sender and
4 receiver runs.
Generated token counts do not include the calculation for these
scores and activations.

\paragraph{Token limits.}
For the 2Wiki comparison and the separate 555-run GSM8K comparison,
each baseline uses CGD's generated token count as its total limit
for the same task and seed.
Single, SC6, and Majority vote generate answers in order.
After each answer, we subtract its token count from the remaining limit.

For the other baselines, before each message-generation run,
we subtract the tokens already generated and CGD's final-answer
 token count from the total limit.
We use the ratio of this remaining count to the number of remaining
message-generation runs as the limit for that run.
The final answer can use all tokens remaining under the total limit.
We use a limit of 512 tokens per model call and include all generated
answers in the accuracy calculation, complete or not.

\clearpage
\makeatletter
\setlength{\@fptop}{0pt}
\makeatother
\section{Additional Results} 
We provide additional results on message effects and receiver activation replacement. We compare CGD with message-selection baselines using the same candidate messages and report learning curves. We then compare collaboration approaches in task performance and generated-token use. Examples illustrate how CGD combines useful messages and excludes misleading information.\par  \subsection{How additional messages correct or reinforce errors}\label{sec:additional-messages}

\begin{figure}[!t]
    \centering
    \begin{minipage}[b]{0.48\textwidth}
        \centering
        \includegraphics[width=\textwidth]{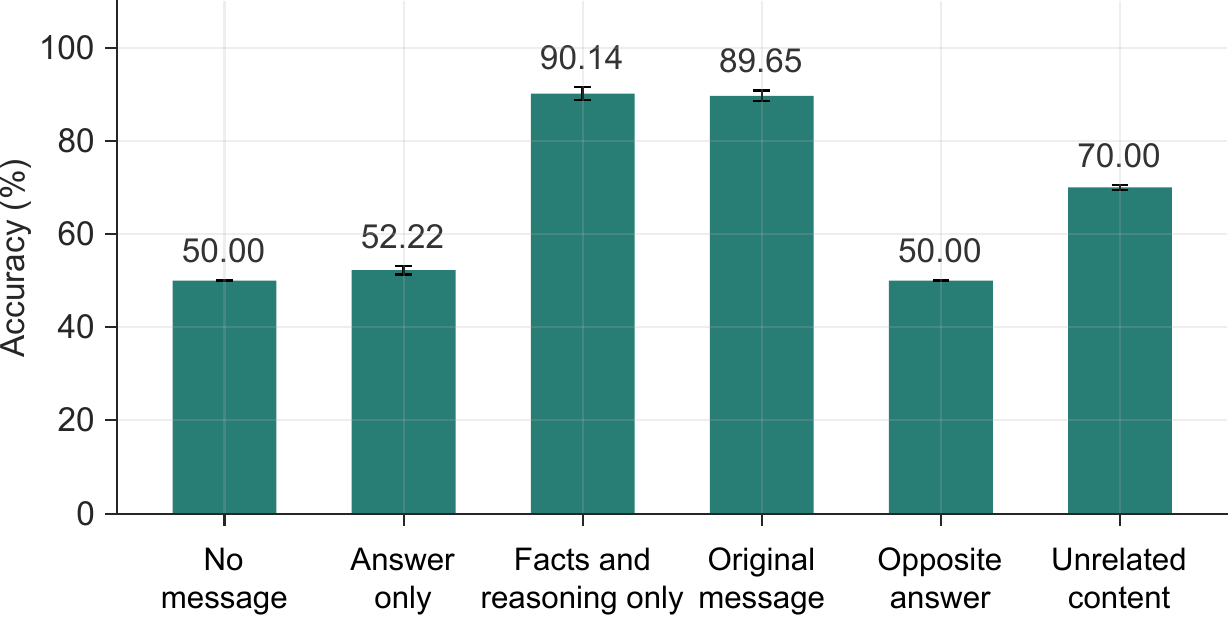} 
        \caption{Accuracy across six message-content conditions on Gemma-3-4B-it. Bars and error bars show the mean and standard deviation.}
        \label{fig:additional-content-accuracy}
    \end{minipage}
    \hfill 
    \begin{minipage}[b]{0.48\textwidth}
        \centering
        \includegraphics[width=\textwidth]{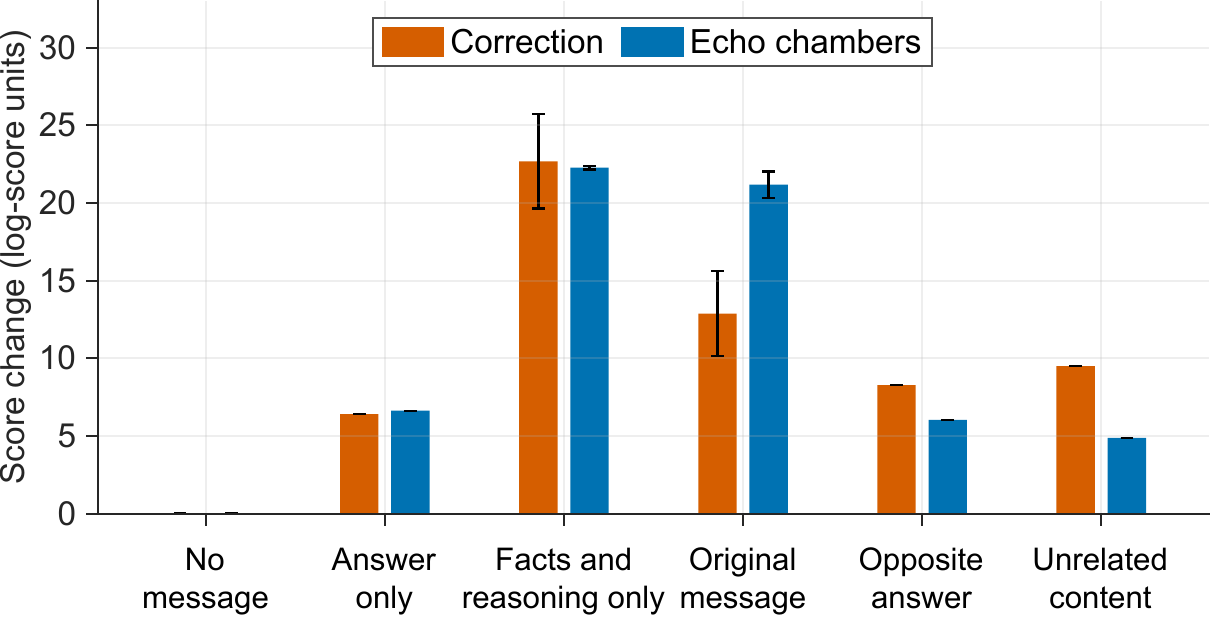} 
        \caption{Message effects on Gemma-3-4B-it. Error bars show the standard deviation across seeds.}
        \label{fig:additional-content-effects}
    \end{minipage}
\end{figure}

\paragraph{Message content.}
To identify which parts of a message affect the receiver, we present six message-content conditions in Fig.~\ref{fig:additional-content-accuracy}.
Facts and reasoning alone achieve an accuracy of $90.14\pm1.42\%$, close to the original message's $89.65\pm1.15\%$ and above $52.22\pm0.87\%$ for the suggested answer alone.
We further separate runs by whether the receiver answers correctly before receiving a message in Fig.~\ref{fig:para_1}.
When its initial answer is incorrect, facts and reasoning preserve most of the original message's corrective effect, whereas suggested answers alone have little effect and unrelated messages correct fewer errors.
When its initial answer is correct, the original message and its facts and reasoning introduce errors, whereas suggested answers and unrelated messages leave accuracy unchanged.
Fig.~\ref{fig:additional-content-effects} shows larger changes in answer scores with facts and reasoning than with the suggested answer alone, both when correcting mistakes and when reinforcing a shared error.
These comparisons show that both correction and the reinforcement of shared errors depend mainly on the supplied facts and reasoning, rather than on the suggested answer alone.

\begin{figure}[!t] \centering \begin{minipage}[t]{0.45\textwidth} \centering \includegraphics[width=\linewidth,trim=8bp 40bp 10bp 78bp,clip]{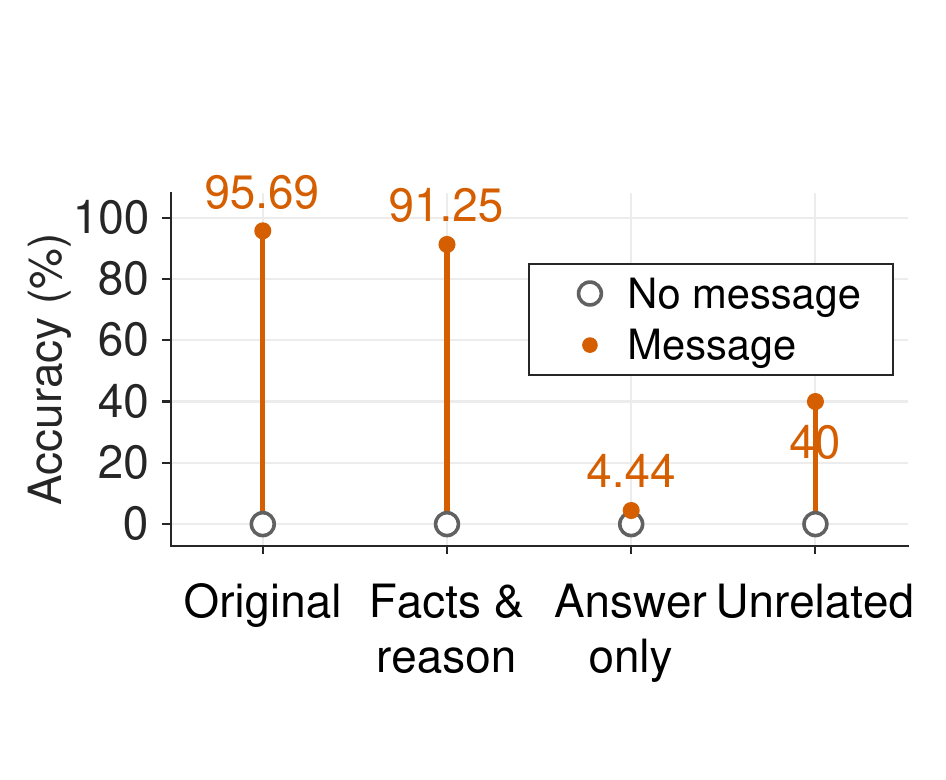}\\ {\footnotesize (a)} \end{minipage}\hfill \begin{minipage}[t]{0.45\textwidth} \centering \includegraphics[width=\linewidth,trim=8bp 40bp 10bp 78bp,clip]{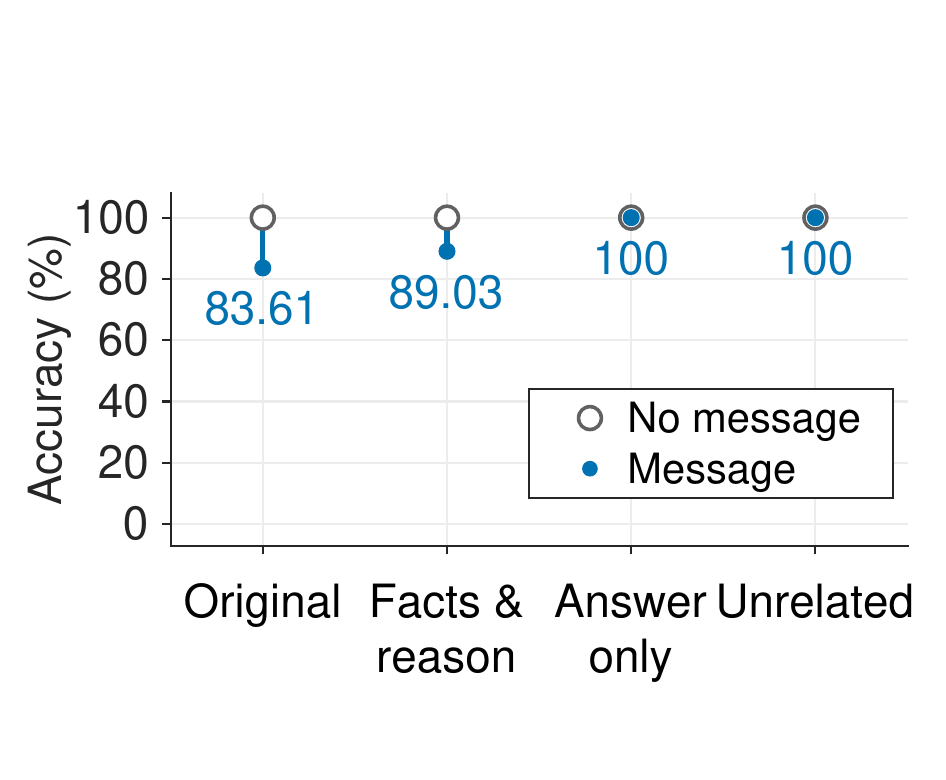}\\ {\footnotesize (b)} \end{minipage} \setlength{\abovecaptionskip}{4pt} \caption{Receiver accuracy when the answer without a message is (a) incorrect or (b) correct.} \label{fig:para_1} \end{figure}

\FloatBarrier

\paragraph{Additional messages.}
We study how message effects change as one receiver receives
more messages.
Fig.~\ref{fig:additional_para1} compares repeating the same
message, rephrasing its content, and providing different evidence
from different senders.
At seven messages, all three conditions correct more than 99\%
of initially incorrect answers.
Their effects on initially correct answers differ:
repeated messages introduce errors in 6.25\%,
compared with 73.19\% for rephrased messages and 95.14\% for different evidence.
Thus, similar correction rates may have different rates of switching to a shared incorrect answer.
These results show how additional communication can reinforce a shared error even when the same communication strategy is
effective at correcting mistakes.

\begin{figure}[!t]
    \centering
    \begin{minipage}[t]{0.4\textwidth}
        \centering
        \includegraphics[width=\linewidth]{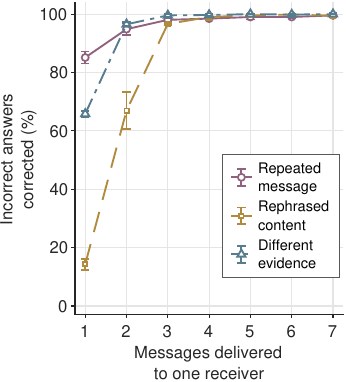}\par
        {\footnotesize (a)\par}
    \end{minipage}
    \begin{minipage}[t]{0.4\textwidth}
        \centering
        \includegraphics[width=\linewidth]{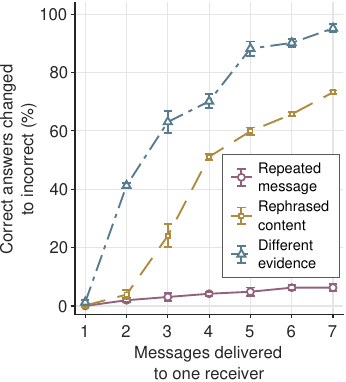}\par
        {\footnotesize (b)\par}
    \end{minipage}
    \caption{Effects of additional messages: (a) correcting an incorrect answer; (b) switching from a correct answer to a shared incorrect answer. }
    \label{fig:additional_para1}
\end{figure}

\FloatBarrier
\begin{figure}[!t]
\centering
\begin{minipage}[t]{0.49\linewidth}
\centering
\includegraphics[width=\linewidth]{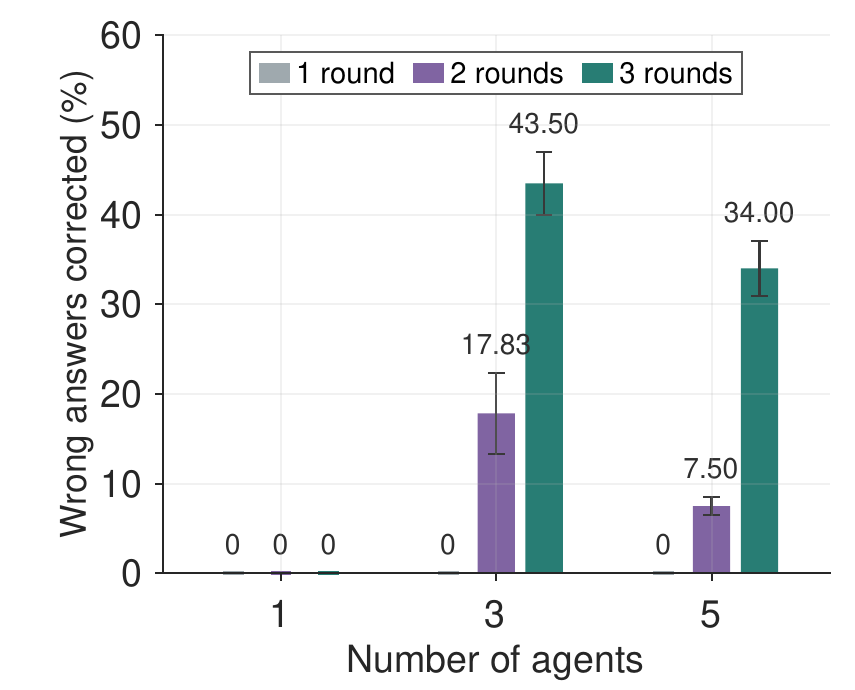}\\
{\footnotesize (a)}
\end{minipage}\hfill
\begin{minipage}[t]{0.49\linewidth}
\centering
\includegraphics[width=\linewidth]{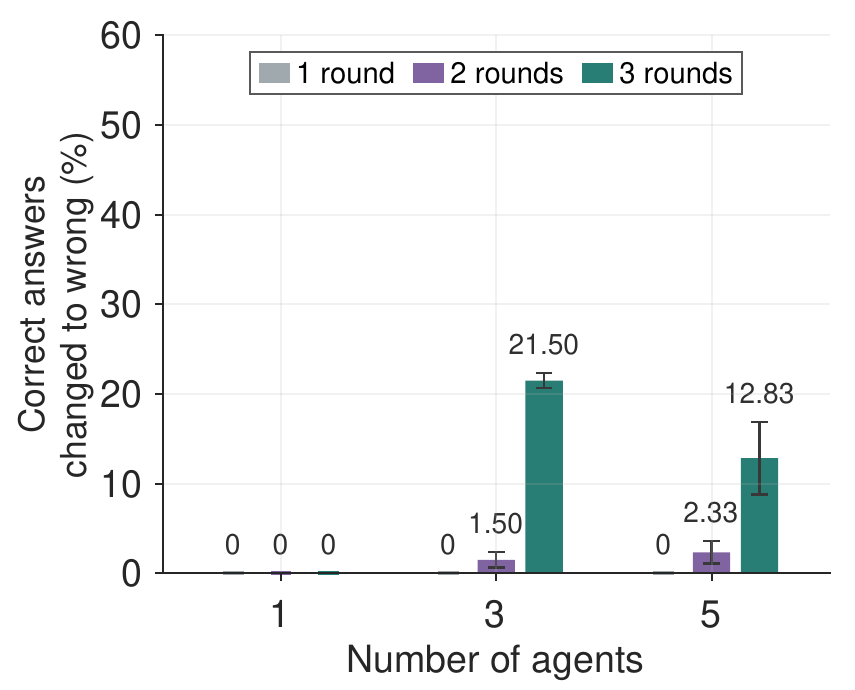}\\
{\footnotesize (b)}
\end{minipage}
\caption{Gemma-3-4B with different agent counts and rounds. (a) Wrong answers corrected. (b) Correct answers changed to wrong answers. The first round uses independent answers, and the one-agent scenario uses self-review in later rounds.}
\label{fig:additional_team_scale}
\end{figure}

\subsection{Agent number and communication structure}\label{sec:additional-team}
\paragraph{Agent number and communication rounds.}
To study how agent number and communication rounds affect error correction and echo chambers, we present the Gemma-3-4B results on Correlated-Trap in Fig.~\ref{fig:additional_team_scale}.
We use the same 200 tasks per direction under the MAD protocol.
For the three- and five-agent groups, one agent receives independent evidence, while the remaining agents share correlated evidence.
The first round consists of independent answers, followed by message exchange in later rounds.
For three agents, increasing the number of rounds from two to three raises error correction from 17.83\% to 43.50\%, but also raises the fraction of correct answers changed to wrong answers from 1.50\% to 21.50\%.
Within two rounds, increasing the number of agents from three to five reduces error correction to 7.50\%.
These results show that additional rounds can improve correction while strengthening echo chambers, and adding agents that share correlated evidence can reduce correction.
This supports measuring the message effects separately in \method{}.

\paragraph{Communication Topology.}
We compare three communication topologies with Qwen3-8B and two rounds in Fig.~\ref{fig:additional_team_topology}.
In the full topology, each agent receives messages from all other agents.
In a ring topology, each agent receives its predecessor's message.
In a star topology, a central agent exchanges messages with the other agents.
With three agents, the full topology achieves the highest accuracy of 73.25\%, compared with 67.17\% for the ring and 69.00\% for the star.
With five agents, the ring and star achieve 58.33\% and 57.00\%, exceeding the 51.58\% of the full topology.
These results show that higher communication density does not consistently improve task performance.
The effect of increasing the number of agents depends on which messages reach each receiver.
\begin{figure}[!t]
\centering
\includegraphics[width=0.5\linewidth]{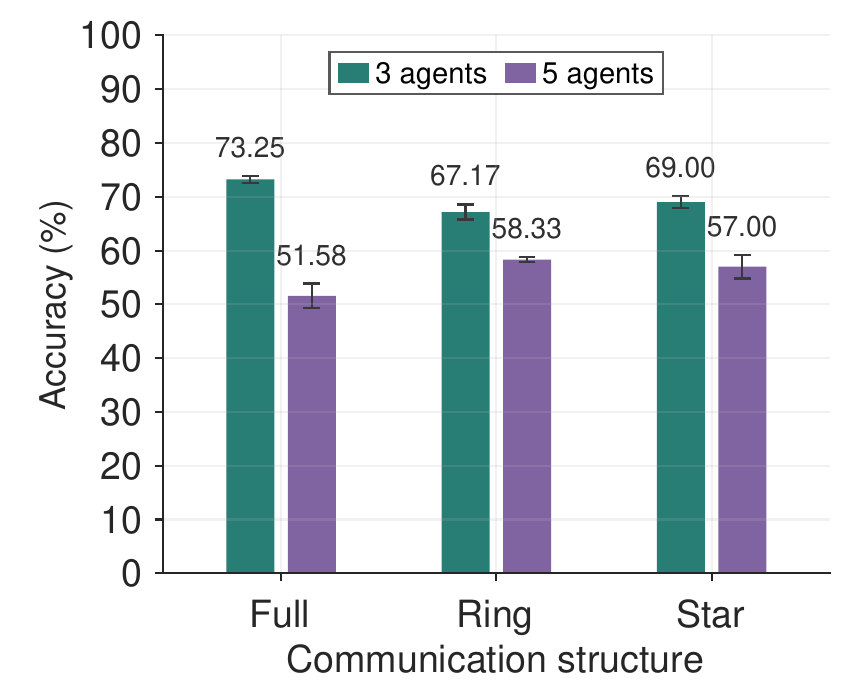}
\caption{Qwen3-8B receiver accuracy with different communication topologies and agent numbers. }
\label{fig:additional_team_topology}
\end{figure}
\FloatBarrier
\subsection{Receiver activation replacement}\label{sec:additional-activation} 
\paragraph{Activation replacement across models.}
We present message-effect recovery across three models in Fig.~\ref{fig:additional_D2_1}.
The single-token score uses the first token where the candidate answers differ, while the complete-answer score sums the token log probabilities of each answer.
Results show that receiver activation changes achieve higher recovery than Swapped-role changes across all three models under both scoring approaches.
Recovery is positive for both correction and echo chambers in Gemma-3-4B and Qwen3-8B, but only for echo chambers in Qwen3-4B.
These comparisons show that the sender--receiver direction matters when tracing message effects, and that recovery differs between correcting errors and reinforcing shared errors.

\begin{figure}[!t]
    \centering
    \includegraphics[width=0.8\linewidth]{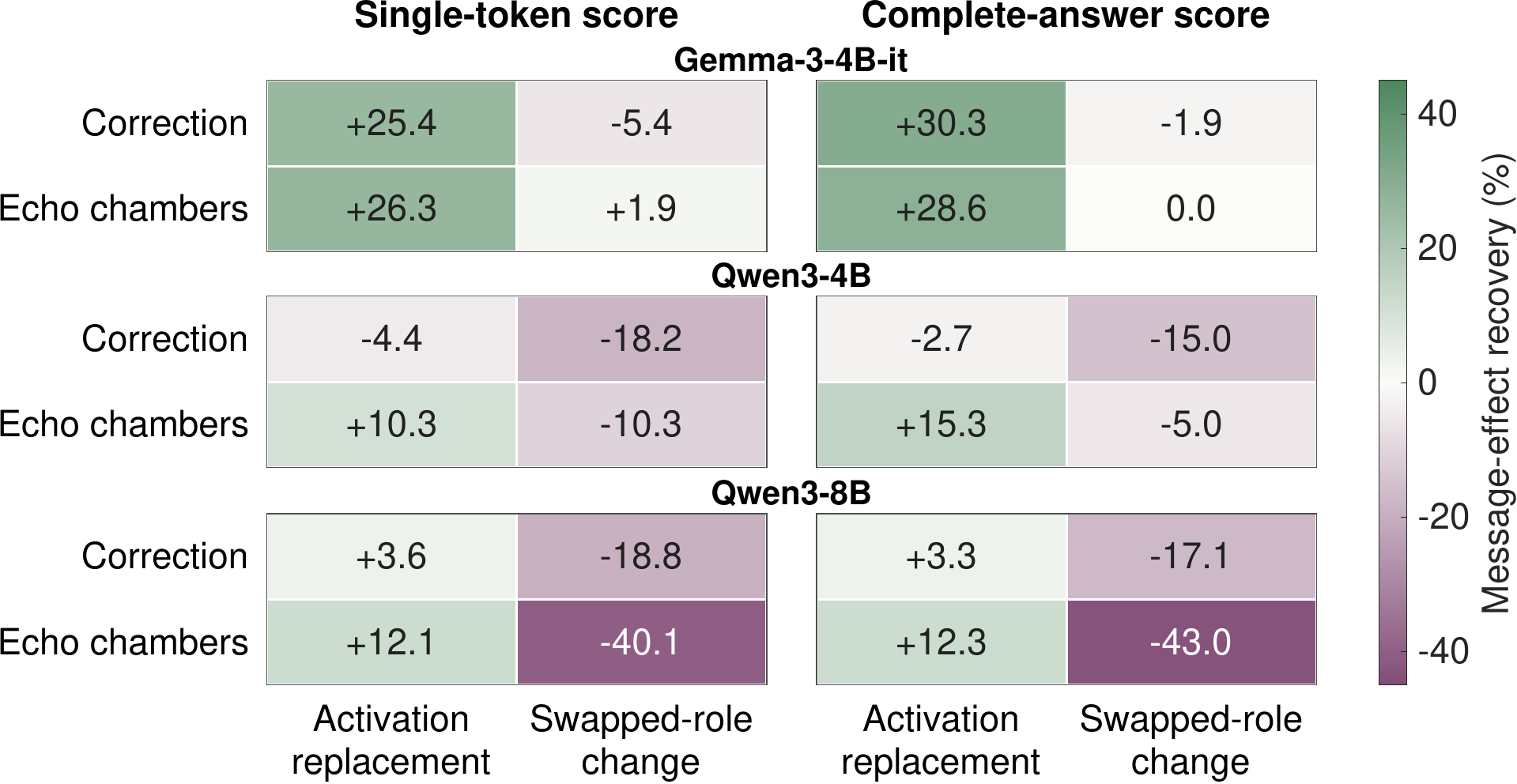}
    \caption{Message-effect recovery across three models. 
    Values are 100 times the mean replacement effect divided by the mean message effect, using 240 tasks per direction, averaged across seeds. 
    Positive values follow the original message effect, while negative values reverse it.}
    \label{fig:additional_D2_1}
\end{figure}

\paragraph{Activation changes from other tasks.}
We compare receiver activation changes from the current task with those from another task on Gemma-3-4B-it in Fig.~\ref{fig:additional_D2_2}(a), replacing activations at one layer per run.
Results show that changes from the current task recover more of the message effect on average, with the largest difference at layer 10.
Both replacements achieve the highest recovery at layer 15 and reverse the message effect at some other layers.

We further apply activation changes formed from the average directions and magnitudes on other tasks to new tasks in Fig.~\ref{fig:additional_D2_2}(b).
Both the original and reversed communication directions produce positive effects in correction and echo chambers.
The original direction produces a larger effect for echo chambers, whereas the reversed direction produces a larger effect for correction.
\begin{figure}[!t]
\centering
\begin{minipage}[t]{0.48\textwidth}
\centering
\includegraphics[width=\linewidth]{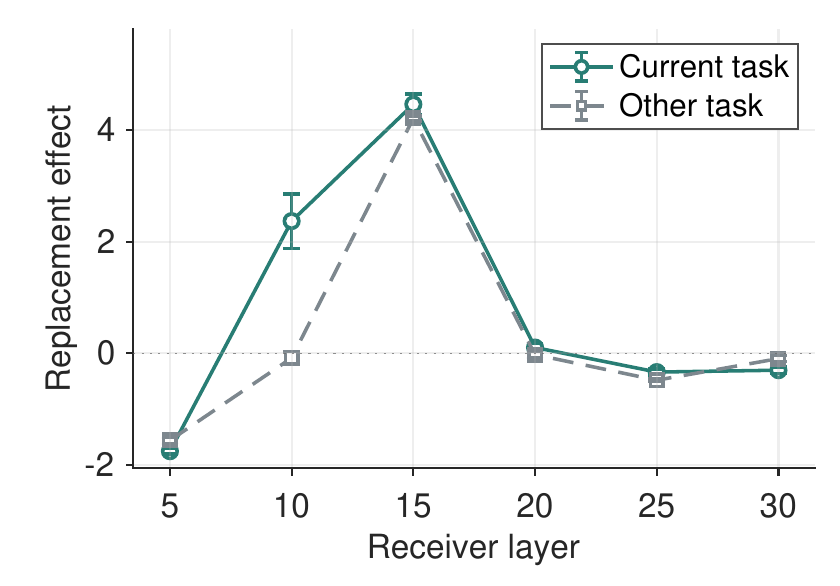}\par
{\footnotesize (a)\par}
\end{minipage}\hfill
\begin{minipage}[t]{0.48\textwidth}
\centering
\includegraphics[width=\linewidth]{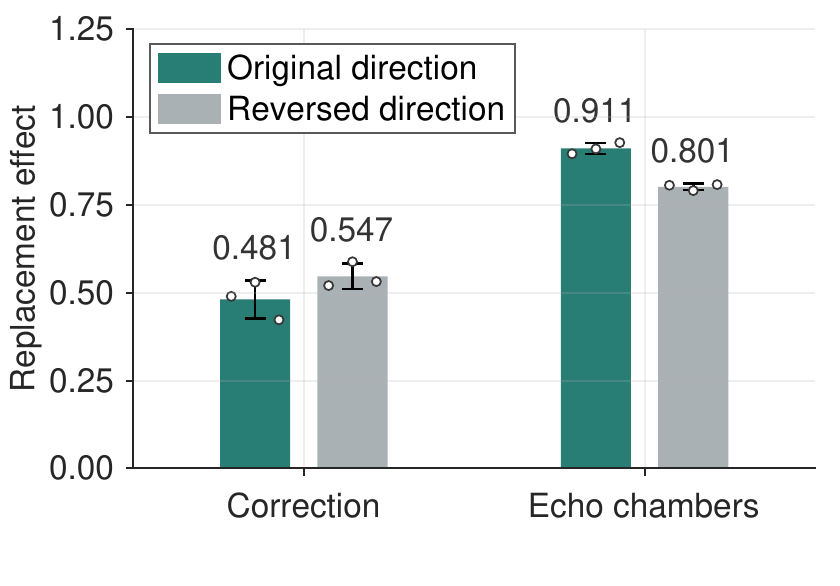}\par
{\footnotesize (b)\par}
\end{minipage}
\caption{Receiver activation replacement on Gemma-3-4B-it: (a) current-task and other-task changes, with one layer replaced per run; (b) average changes from other tasks, applied across six layers on new tasks. Values use complete-answer scores.
Circles in (b) show results of different seeds.}
\label{fig:additional_D2_2}
\end{figure}

\paragraph{Message-specific replacement.}
We compare receiver activation changes induced by the original message with those induced by another candidate message for the same 2Wiki task in Fig.~\ref{fig:additional_D2_3}.
We calculate the mean absolute error between the activation-replacement effect and the original message effect on the correct answer's score, using the same samples for both comparisons.
The original message's activation changes achieve lower error on both models: 0.187 versus 1.280 on Gemma-3-4B and 0.121 versus 1.039 on Qwen3-4B.
These results show that receiver activation changes distinguish the effects of different messages on the same task.

\begin{figure}[!t]
\centering
\includegraphics[width=0.5\linewidth]{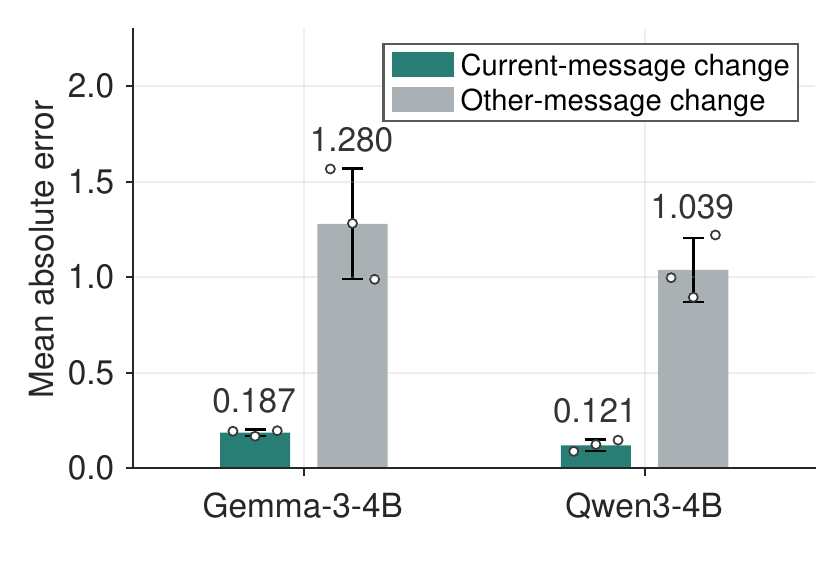}
\caption{Message-specific activation replacement on 2Wiki. Mean absolute error between message and replacement effects on the correct answer's score, evaluated on matched samples.}
\label{fig:additional_D2_3}
\end{figure}

\paragraph{Receiver answers after activation replacement.}
We present answer agreement and recovery in Table~\ref{tab:activation-answer-recovery}.
Answer agreement measures the proportion of runs where activation replacement selects the same answer as the original-message run.
For answer recovery, we calculate this proportion among runs where the original and reference messages lead to different answers.
On 2Wiki, the original message's activation changes achieve answer recovery of 97.81\% on Gemma-3-4B and 96.73\% on Qwen3-4B, compared with 41.07\% and 55.06\% using another message's changes.
On Correlated-Trap, activation replacement achieves higher answer recovery than Swapped-role changes across all three models.
The recovery rates range from 1.33\% to 16.39\% for activation replacement and from 0.00\% to 0.27\% for Swapped-role changes.

\begin{table}[!t]
\centering
\caption{Receiver answers after activation replacement, reported as mean $\pm$ sample standard deviation. Agreement uses the same matched message records for both replacements: 1,268 for Gemma and 1,334 for Qwen on 2Wiki, and 1,440 per model on Correlated-Trap. Recovery uses only records where the message changes the answer, including both corrections and changes to incorrect answers. Bold marks the higher mean recovery within each model and dataset.}
\label{tab:activation-answer-recovery}
\small
\setlength{\tabcolsep}{5pt}
\renewcommand{\arraystretch}{1.05}
\begin{tabular}{llcc}
\toprule
Model & Activation changes & \shortstack{Answer agreement\\(\%)} & \shortstack{Answer recovery\\(\%)} \\
\midrule
\multicolumn{4}{l}{\textbf{(a) 2Wiki}} \\
Gemma-3-4B & Original message & $99.61\pm0.27$ & $\mathbf{97.81}\pm0.23$ \\ 
& Another message (same task) & $88.89\pm1.88$ & $41.07\pm13.20$ \\
Qwen3-4B & Original message & $99.78\pm0.00$ & $\mathbf{96.73}\pm0.22$ \\ 
& Another message (same task) & $94.23\pm1.48$ & $55.06\pm11.67$ \\
\midrule
\multicolumn{4}{l}{\textbf{(b) Correlated-Trap}} \\
Gemma-3-4B & Original message & $23.68\pm2.68$ & $\mathbf{16.39}\pm1.67$ \\ 
& Swapped-role changes & $6.74\pm1.15$ & $0.23\pm0.00$ \\
Qwen3-4B & Original message & $18.47\pm1.07$ & $\mathbf{1.33}\pm0.92$ \\ 
& Swapped-role changes & $14.51\pm1.05$ & $0.27\pm0.27$ \\
Qwen3-8B & Original message & $40.90\pm1.89$ & $\mathbf{4.88}\pm1.54$ \\ 
& Swapped-role changes & $30.49\pm1.73$ & $0.00\pm0.00$ \\
\bottomrule
\end{tabular}
\end{table}

\FloatBarrier
\subsection{CGD message selection}\label{sec:additional-selection} \paragraph{Selection with the same candidate messages.}
We compare CGD with the message-selection baselines in Fig.~\ref{fig:additional_D3_1}, using the same candidate messages and recorded receiver answers.
Each model is evaluated on 240 2Wiki tasks, with answers scored over the same two options.
Results show that CGD achieves higher average accuracy than all three baselines on all three models.
When compared with Activation products, CGD improves accuracy by 1.67\%, 1.94\%, and 2.36\% on Gemma-3-4B-it, Qwen3-4B, and Qwen3-8B, respectively.
The paired results in Fig.~\ref{fig:additional_D3_1}(b) show that CGD corrects more baseline errors than it introduces in every comparison.
The comparison with Activation products shows that learning directly from receiver activation changes improves message selection.

\begin{figure}[!t]
\centering
\begin{minipage}[t]{0.48\textwidth}
\centering
\includegraphics[width=\linewidth]{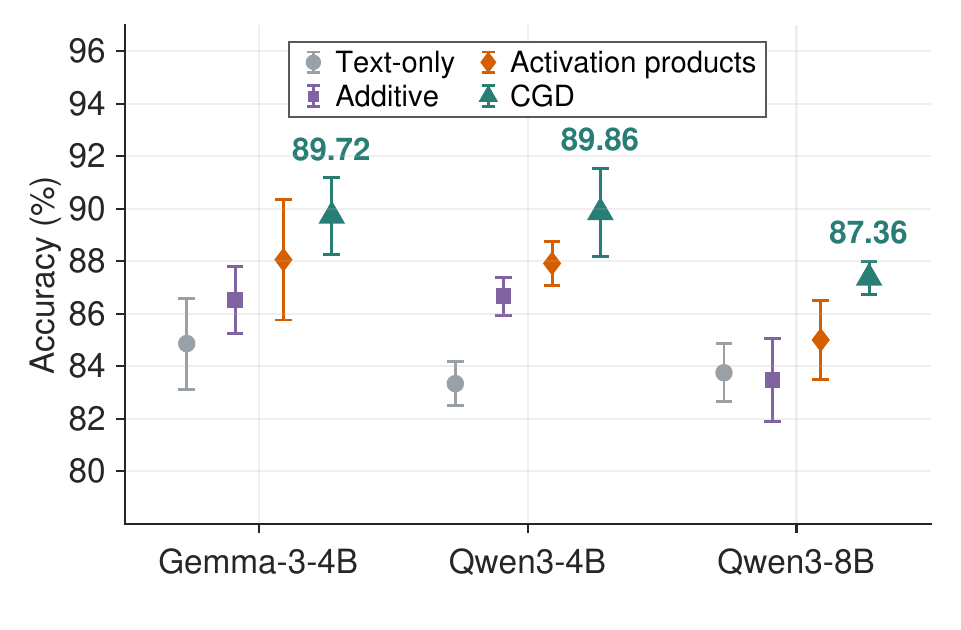}\par
{\footnotesize (a)\par}
\end{minipage}\hfill
\begin{minipage}[t]{0.48\textwidth}
\centering
\includegraphics[width=\linewidth]{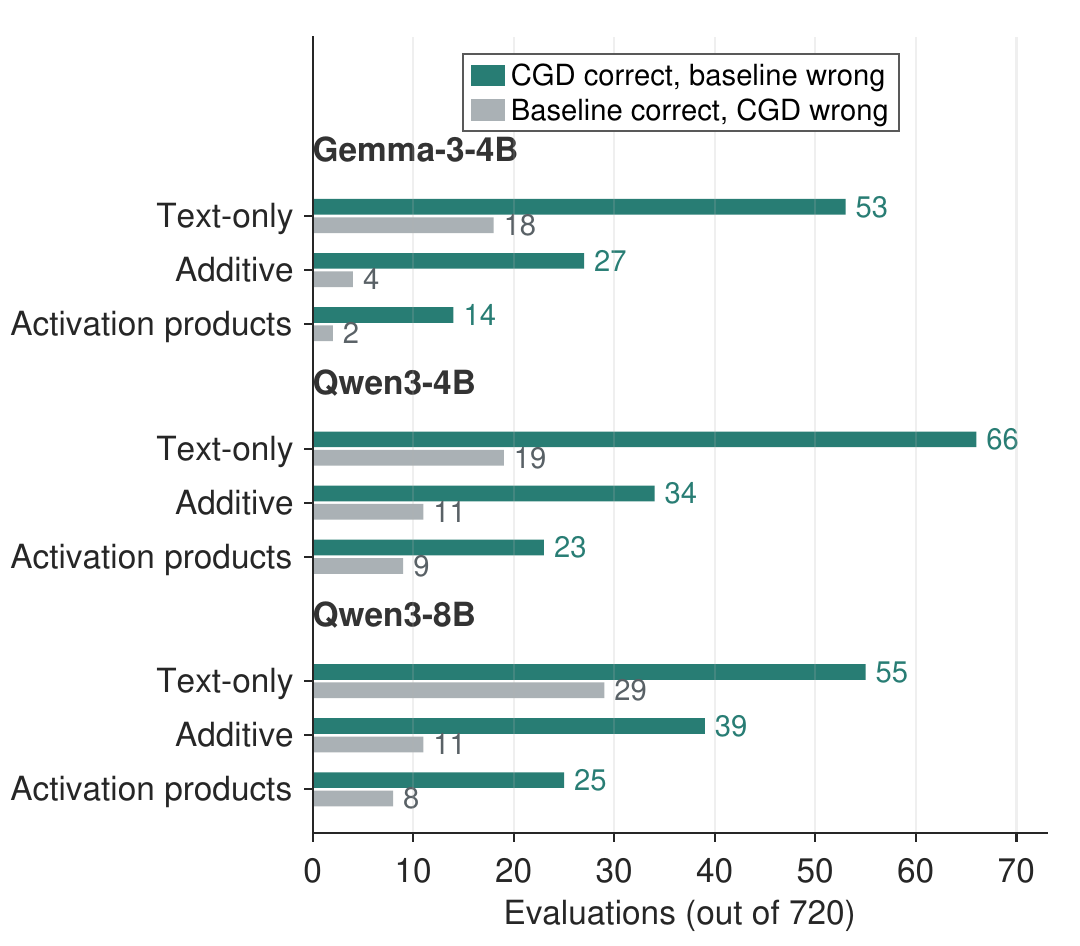}\par
{\footnotesize (b)\par}
\end{minipage}
\caption{Message selection on 2Wiki with the same candidate messages and two-option scoring. (a) Accuracy across baselines. (b) Cases where CGD is correct and the baseline is wrong, or vice versa.}
\label{fig:additional_D3_1}
\end{figure}

\paragraph{Fixed message strategies.}
We compare CGD with sending no message, either sender's message, or both messages in Table~\ref{tab:cgd-fixed-messages}.
All strategies use the same messages and receiver settings.
CGD achieves the highest average accuracy across all three models.
Compared with sending both messages, it corrects 50, 65, and 60 errors while introducing 18, 13, and 27 errors on Gemma-3-4B, Qwen3-4B, and Qwen3-8B, respectively.
Thus, selecting messages improves accuracy over directly sending all messages.

\begin{table}[!t]
\centering
\caption{Two-answer accuracy (\%) on 240 new 2Wiki tasks for the 4B models and the original 240 tasks for Qwen3-8B, reported as mean $\pm$ sample standard deviation. CGD achieves the highest average accuracy across all three models.}
\label{tab:cgd-fixed-messages}
\small
\renewcommand{\scms}[2]{#1{\scriptsize$\pm#2$}}
\setlength{\tabcolsep}{4pt}
\renewcommand{\arraystretch}{1.12}
\begin{tabular*}{\linewidth}{@{\extracolsep{\fill}}lccc@{}}
\toprule
Strategy & Gemma-3-4B & Qwen3-4B & Qwen3-8B \\
\midrule
No message & \scms{84.17}{0.00} & \scms{78.75}{0.00} & \scms{81.67}{0.00} \\
Sender 1 only & \scms{86.11}{1.27} & \scms{81.67}{1.10} & \scms{84.58}{2.17} \\
Sender 2 only & \scms{82.50}{1.91} & \scms{79.58}{1.44} & \scms{77.50}{0.83} \\
Both messages & \scms{85.28}{1.34} & \scms{83.89}{1.34} & \scms{82.78}{1.97} \\
\midrule
\rowcolor{CGDshade}
\textbf{CGD} & \scms{\textbf{89.72}}{1.20} & \scms{\textbf{91.11}}{0.64} & \scms{\textbf{87.36}}{0.64} \\
\bottomrule
\end{tabular*}
\end{table}

\paragraph{Activation products.}
We match the input scaling and weight penalty of Activation products to CGD, as described in Appendix~\ref{app:protocol-cgd}.
Table~\ref{tab:cgd-matched-products} shows that CGD retains higher average accuracy across all three models.
On the same tasks and seeds, CGD answers 3, 8, and 3 more cases correctly on Gemma-3-4B, Qwen3-4B, and Qwen3-8B, respectively.
These results support learning directly from receiver activation changes to select messages.
\begin{table}[!t]
\centering
\caption{Two-answer accuracy (\%) on the original 240 2Wiki tasks for each model, before and after matching the scaling and penalty to CGD. Entries show the mean and sample standard deviation across three seeds.}
\label{tab:cgd-matched-products}
\small
\renewcommand{\scms}[2]{#1{\scriptsize$\pm#2$}}
\setlength{\tabcolsep}{4pt}
\renewcommand{\arraystretch}{1.12}
\begin{tabular*}{\linewidth}{@{\extracolsep{\fill}}lccc@{}}
\toprule
Approach & Gemma-3-4B & Qwen3-4B & Qwen3-8B \\
\midrule
Activation products & \scms{88.06}{2.29} & \scms{87.92}{0.83} & \scms{85.00}{1.50} \\
Activation products (adjusted) & \scms{89.31}{1.05} & \scms{88.75}{1.50} & \scms{86.94}{1.27} \\
\midrule
\rowcolor{CGDshade}
\textbf{CGD} & \scms{\textbf{89.72}}{1.46} & \scms{\textbf{89.86}}{1.68} & \scms{\textbf{87.36}}{0.64} \\
\bottomrule
\end{tabular*}
\end{table}

\paragraph{Evaluation on new 2Wiki tasks.}
We evaluate CGD on another 240 2Wiki tasks without retraining and present the results in Fig.~\ref{fig:additional_D3_2}.
Results show that CGD achieves the highest average accuracy among all baseline schemes on both 4B models.
On Gemma-3-4B-it and Qwen3-4B, it improves over Additive by 2.50 \% and 3.75\%, and over Activation products by 1.11\% and 0.83\%, respectively.
These results show that CGD maintains its accuracy advantage on new 2Wiki tasks without retraining.

\begin{figure}[!t]
\centering
\includegraphics[width=0.55\linewidth]{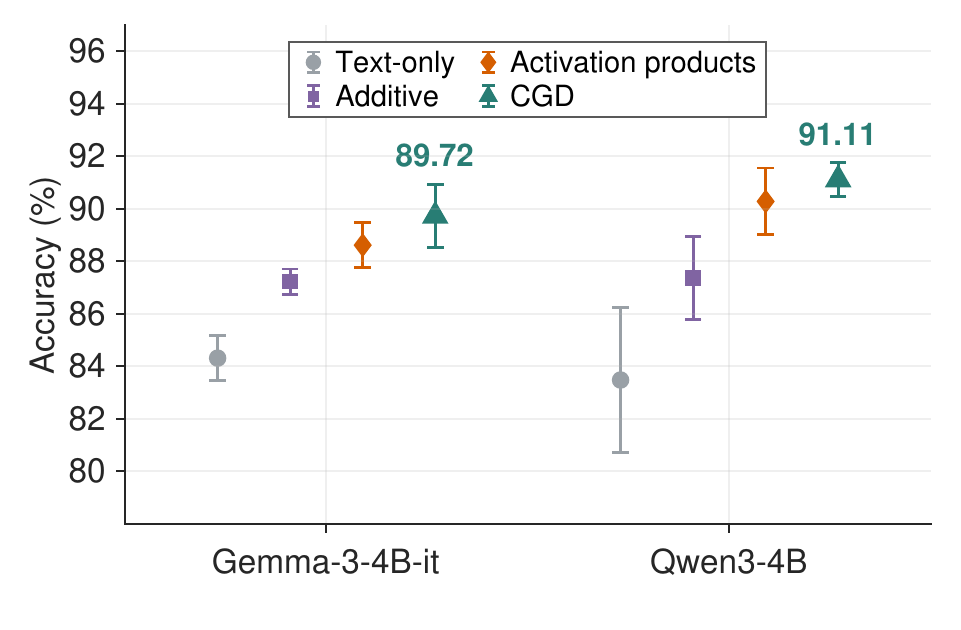}
\caption{Message selection on new 2Wiki tasks without retraining. Points and error bars show average accuracy and standard deviation.}
\label{fig:additional_D3_2}
\end{figure}

\FloatBarrier
\subsection{Free-generation task performance}\label{sec:additional-performance}
\paragraph{Receiver-summary inputs.}
We compare message-selection inputs for free generation on 2Wiki in Table~\ref{tab:additional-receiver-inputs}.
For each nonempty message combination, Receiver-only uses the receiver's activation summary without candidate messages and the average of its activation summaries for the other nonempty message combinations.
Text + receiver also includes the text of the messages in the combination being scored.
Results show that CGD achieves 61.53\% accuracy on Qwen3-8B, exceeding all five input baselines and improving on the strongest, Receiver-only, by 1.11\%.
On Gemma-3-4B, CGD exceeds Receiver-only and Text + receiver by 2.22\% and 4.44\%, while Activation products have the highest average accuracy.
Thus, CGD improves on receiver-summary inputs alone on both models.

\begin{table}[!t]
\centering
\caption{Free-generation accuracy (\%) on 2Wiki, reported as mean $\pm$ sample standard deviation. The Qwen3-8B CGD entry uses a separate run with the same evaluation tasks, seeds, and scoring. Bold marks the highest mean in each column.}
\label{tab:additional-receiver-inputs}
\small
\setlength{\tabcolsep}{12pt}
\renewcommand{\arraystretch}{1.05}
\begin{tabular}{lcc}
\toprule
Approach & Gemma-3-4B & Qwen3-8B \\
\midrule
Text-only & $38.33\pm1.67$ & $56.67\pm1.10$ \\
Receiver-only & $40.69\pm1.58$ & $60.42\pm0.42$ \\
Text + receiver & $38.47\pm1.05$ & $59.17\pm0.72$ \\
Additive & $42.78\pm1.27$ & $53.61\pm1.05$ \\
Activation products & $\mathbf{43.33}\pm1.91$ & $57.36\pm0.24$ \\
\midrule
CGD & $42.92\pm0.83$ & $\mathbf{61.53}\pm0.48$ \\
\bottomrule
\end{tabular}
\end{table}
\paragraph{Comparison with collaboration approaches.}
We present GSM8K free-generation accuracy with Qwen3-8B in Fig.~\ref{fig:additional_D4_1}.
To compare accuracy on the same tasks and seeds, we use the runs completed by CGD and all baselines when the token-limited comparison was set up.
We calculate accuracy over these 555 paired runs across 240 GSM8K tasks.
We limit each baseline's total generated tokens to the number generated by CGD for the same task and seed.
For approaches that generate messages before answering, we first reserve enough tokens to match the length of CGD’s final answer. 
We divide the remaining tokens among the message-generation steps. Each step stops at its token limit.
Results show that CGD achieves higher free-generation accuracy than all eight baselines when all approaches have the same limit on generated tokens.

\paragraph{Paired answer comparison.}
To explain the accuracy differences, we present paired results from the same runs in Fig.~\ref{fig:additional_D4_2}.
CGD is correct and Majority vote is wrong on 11 runs, whereas Majority vote is correct and CGD is wrong on 8 runs.
Under the same token limit, runs where only CGD is correct outnumber runs where only the baseline is correct for each of the eight baselines.
\begin{figure}[!t]
\centering
\begin{minipage}[t]{0.48\textwidth}
\centering
\includegraphics[width=0.8\linewidth]{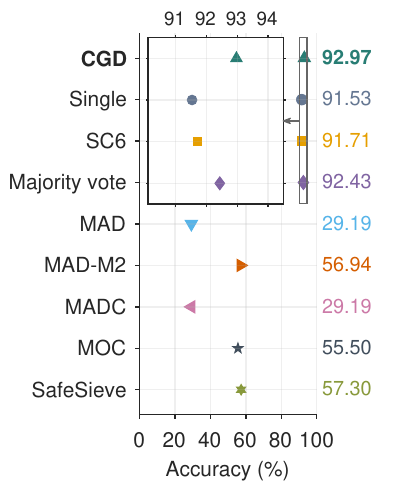}
\caption{Comparison of the free-generation accuracy on Qwen3-8B and GSM8K, calculated over 555 paired runs across 240 tasks, with 195, 150, and 210 runs for seeds 0, 1, and 2, respectively.}
\label{fig:additional_D4_1}
\end{minipage}\hfill
\begin{minipage}[t]{0.48\textwidth}
\centering
\includegraphics[width=0.8\linewidth]{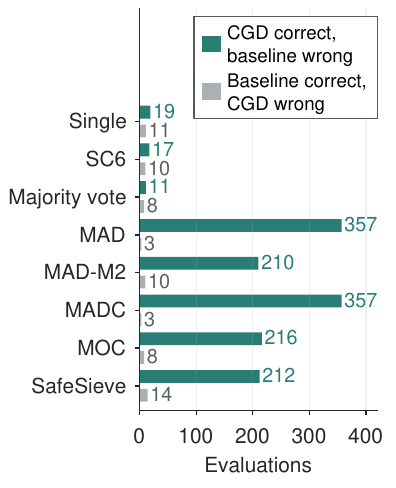}
\caption{Paired free-generation answers on GSM8K with Qwen3-8B, using the same 555 runs and token limits as Fig.~\ref{fig:additional_D4_1}. Bars count runs where CGD is correct and the baseline is wrong, and vice versa.}
\label{fig:additional_D4_2}
\end{minipage}
\end{figure}

\begin{table}[!t]
\centering
\caption{2Wiki accuracy (\%) using A/B answer scores. We report the mean and sample standard deviation across three seeds.}
\label{tab:2wiki-answer-scoring}
\small
\renewcommand{\scms}[2]{#1{\scriptsize$\pm#2$}}
\setlength{\tabcolsep}{5pt}
\renewcommand{\arraystretch}{1.12}
\begin{tabular*}{\linewidth}{@{\extracolsep{\fill}}lccc@{}}
\toprule
Approaches & Gemma-3-4B & Qwen3-4B & Qwen3-8B \\
\midrule
Single & \scms{78.75}{0.00} & \scms{75.83}{0.00} & \scms{75.42}{0.00} \\
MAD & \scms{81.11}{1.05} & \scms{81.39}{1.73} & \scms{81.94}{2.93} \\
MAD-M2 & \scms{82.92}{1.50} & \scms{81.53}{1.68} & \scms{78.47}{2.77} \\
MADC & \scms{79.72}{1.46} & \scms{79.03}{1.34} & \scms{81.53}{1.27} \\
MOC & \scms{86.81}{1.20} & \scms{86.94}{0.87} & \scms{85.69}{0.24} \\
SafeSieve & \scms{80.00}{2.60} & \scms{84.17}{0.72} & \scms{81.94}{2.10} \\
\midrule
\rowcolor{CGDshade}
\textbf{CGD} & \scms{\textbf{89.72}}{1.46} & \scms{\textbf{89.86}}{1.68} & \scms{\textbf{87.36}}{0.64} \\
\bottomrule
\end{tabular*}
\end{table}
\FloatBarrier
\begingroup
\setlength{\textfloatsep}{2pt plus 1pt minus 1pt}
\begin{figure}[!t]
\centering
\begin{tabular}{@{}c@{\hspace{0.02\linewidth}}c@{}}
\includegraphics[width=0.49\linewidth]{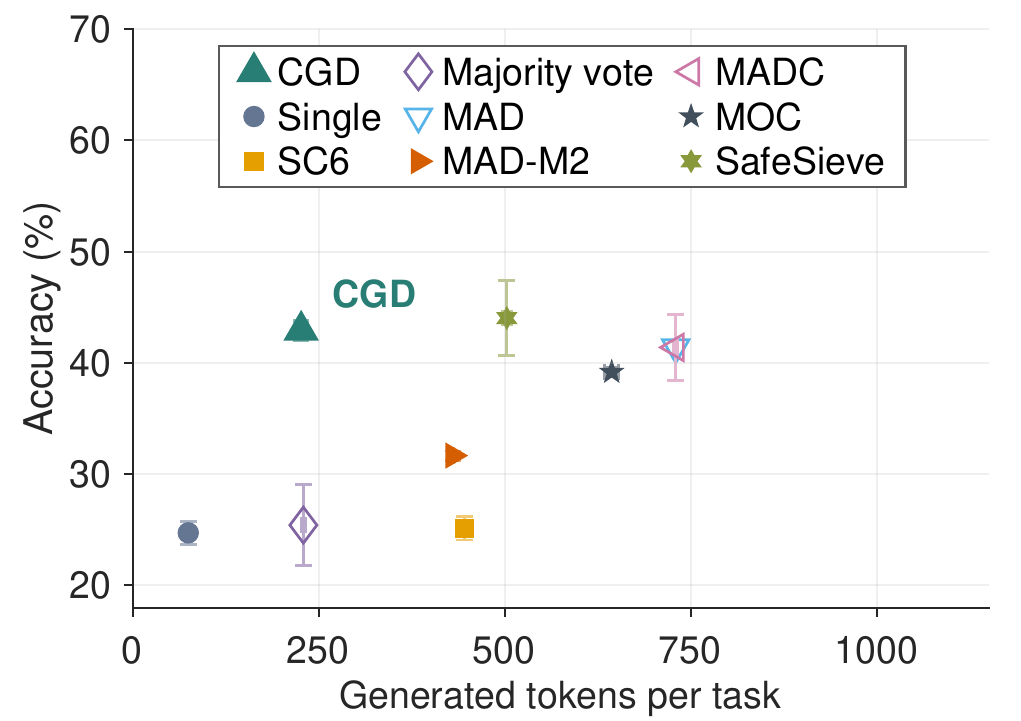} &
\includegraphics[width=0.49\linewidth]{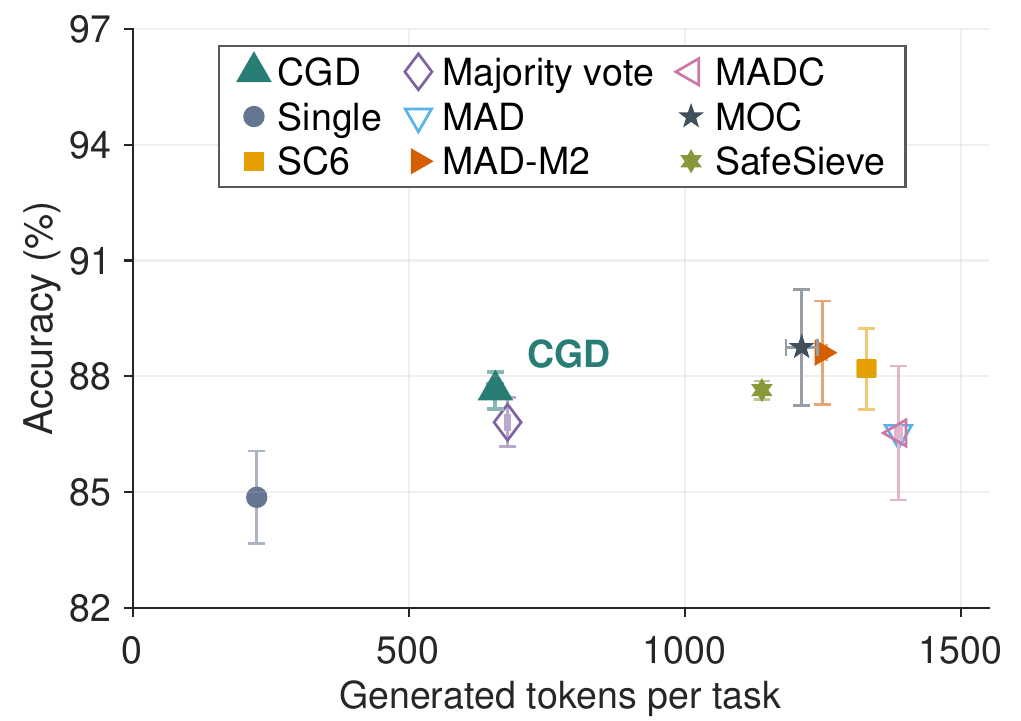} \\
{\footnotesize (a) Gemma-3-4B / 2Wiki} & {\footnotesize (b) Gemma-3-4B / GSM8K} \\[0pt]
\includegraphics[width=0.49\linewidth]{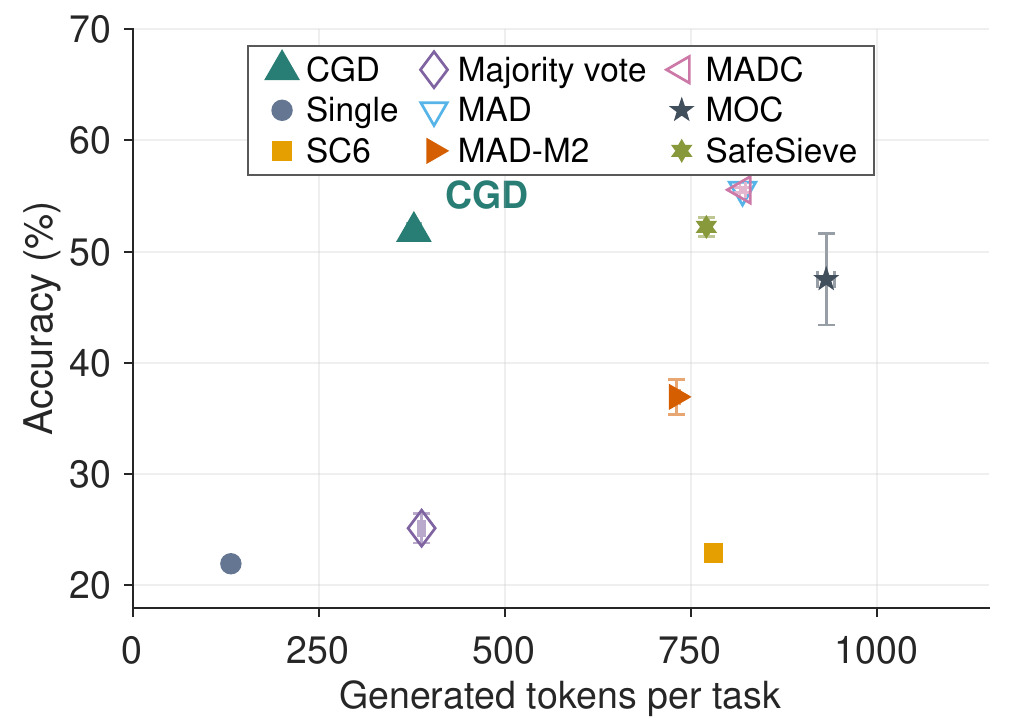} &
\includegraphics[width=0.49\linewidth]{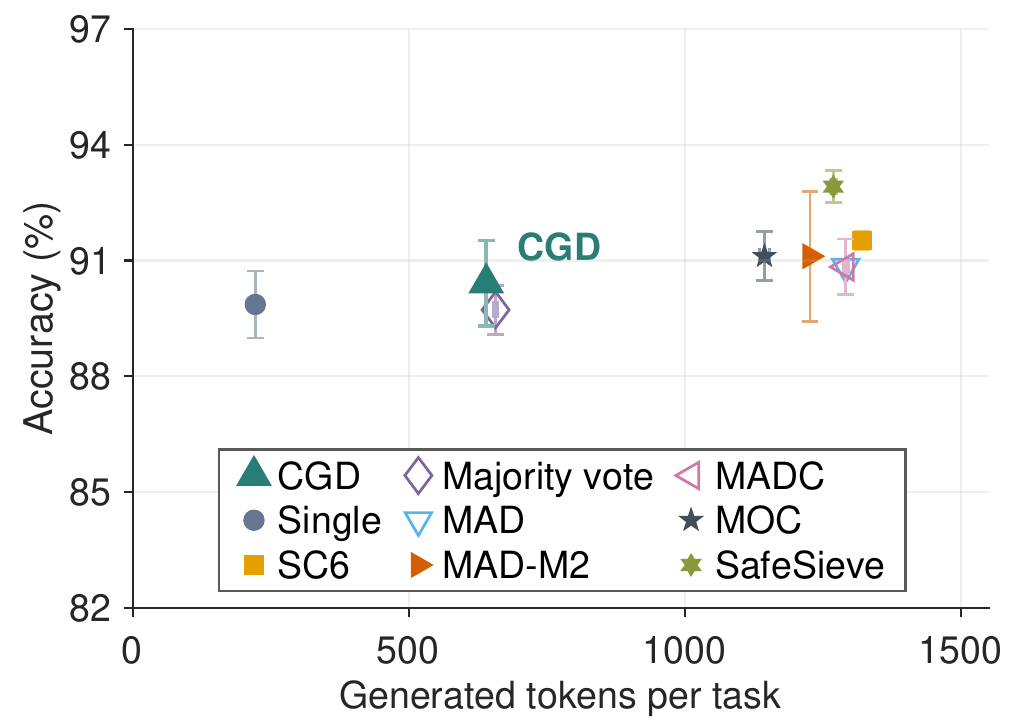} \\
{\footnotesize (c) Qwen3-4B / 2Wiki} & {\footnotesize (d) Qwen3-4B / GSM8K} \\[0pt]
\includegraphics[width=0.49\linewidth]{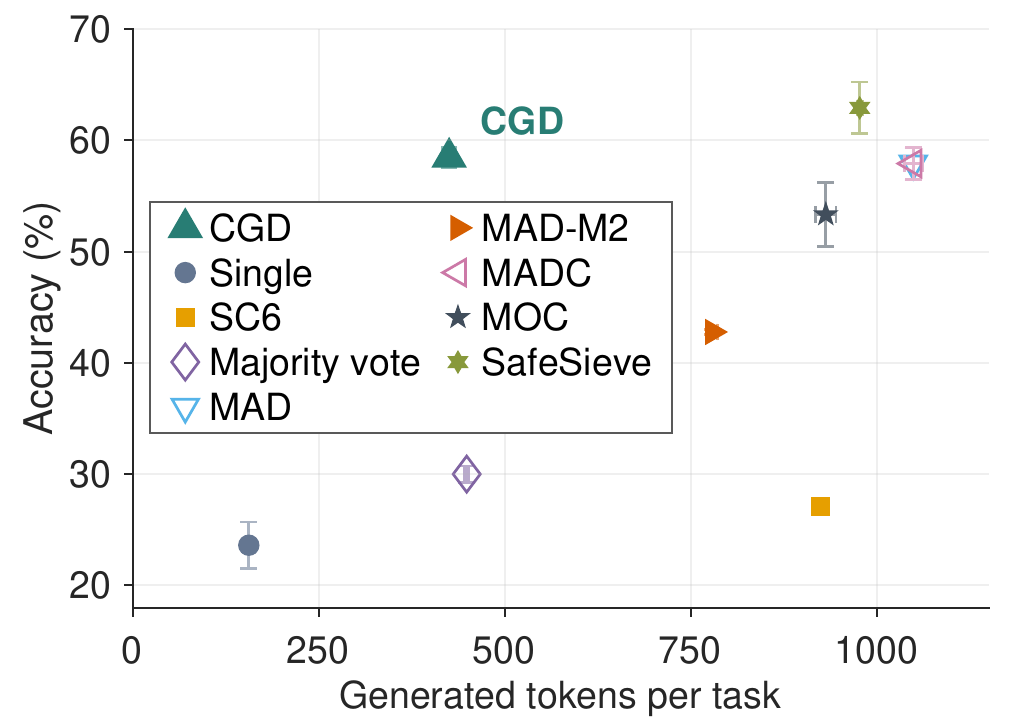} &
\includegraphics[width=0.49\linewidth]{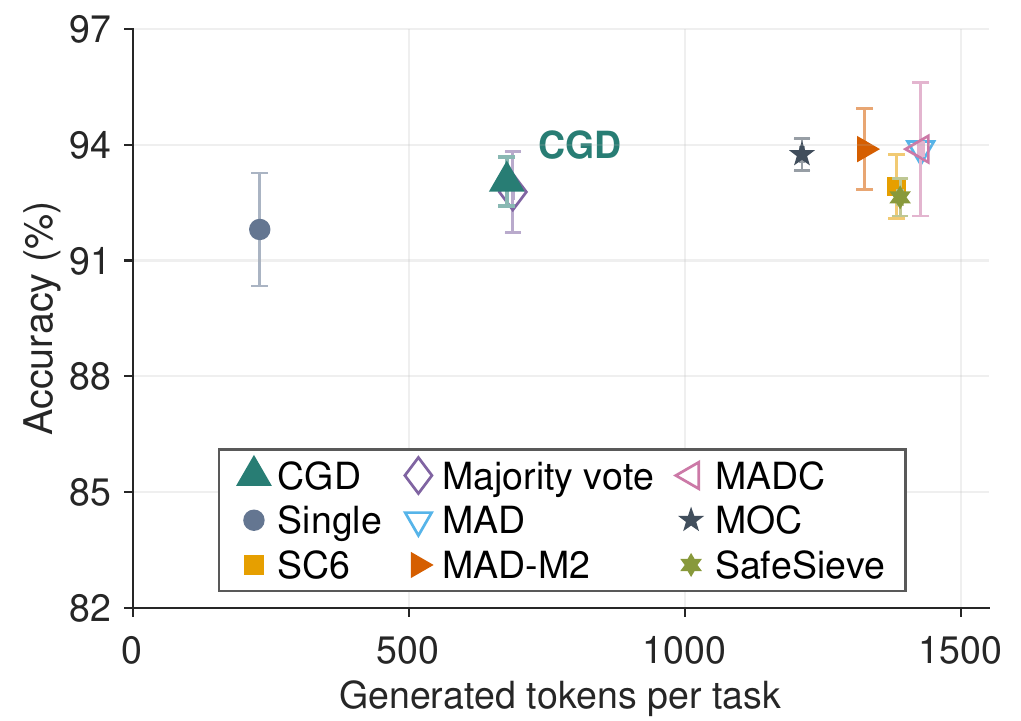} \\
{\footnotesize (e) Qwen3-8B / 2Wiki} & {\footnotesize (f) Qwen3-8B / GSM8K}
\end{tabular}
\caption{Free-generation accuracy and generated tokens over 720 runs per model and dataset. Points show means and error bars show standard deviations across seeds. MAD and MADC have overlapping results.}
\label{fig:additional_D5_1}
\end{figure}
\subsection{Accuracy and generation cost}\label{sec:additional-costs}
\paragraph{Two-answer comparisons.}
We compare collaboration approaches using the A/B answer scores in Appendix~\ref{app:protocol-scoring}.
Table~\ref{tab:2wiki-answer-scoring} shows that CGD exceeds the strongest baseline by 2.92\%, 2.92\%, and 1.67\% on Gemma-3-4B, Qwen3-4B, and Qwen3-8B, respectively.
\paragraph{Accuracy and generated tokens.}
To compare accuracy and generation cost across models, we present the original 2Wiki and GSM8K runs with complete token records in Fig.~\ref{fig:additional_D5_1}.
For CGD, this figure uses a separate set of runs from Table~\ref{tab:four-dataset-accuracy-tokens}, with accuracy and token counts calculated from the same recorded answers.
We evaluate 240 tasks per dataset using each approach's original generation settings and record generated tokens across all calls, including messages, intermediate responses, and final answers.
Across all six settings, CGD achieves higher average accuracy than every baseline that generates fewer tokens on average.
\FloatBarrier
\endgroup

\clearpage
\subsection{Case study of message selection}\label{sec:additional-examples} \paragraph{Correcting reasoning with selected messages.}
To show how the selected messages affect the receiver's answer, we present a GSM8K example in Fig.~\ref{fig:additional_D6_1}.
Agent 1 fails to subtract the bees that have already returned, while Agent 2 omits the later departures from its final calculation.
CGD delivers both messages, and Agent 3 calculates the correct answer, 75.
With no message or just one message, Agent 3 answers incorrectly.
This example shows that messages ending in incorrect answers can still improve the receiver's answer when delivered in combination.

\begin{figure}[!t]
\centering
\includegraphics[width=\linewidth]{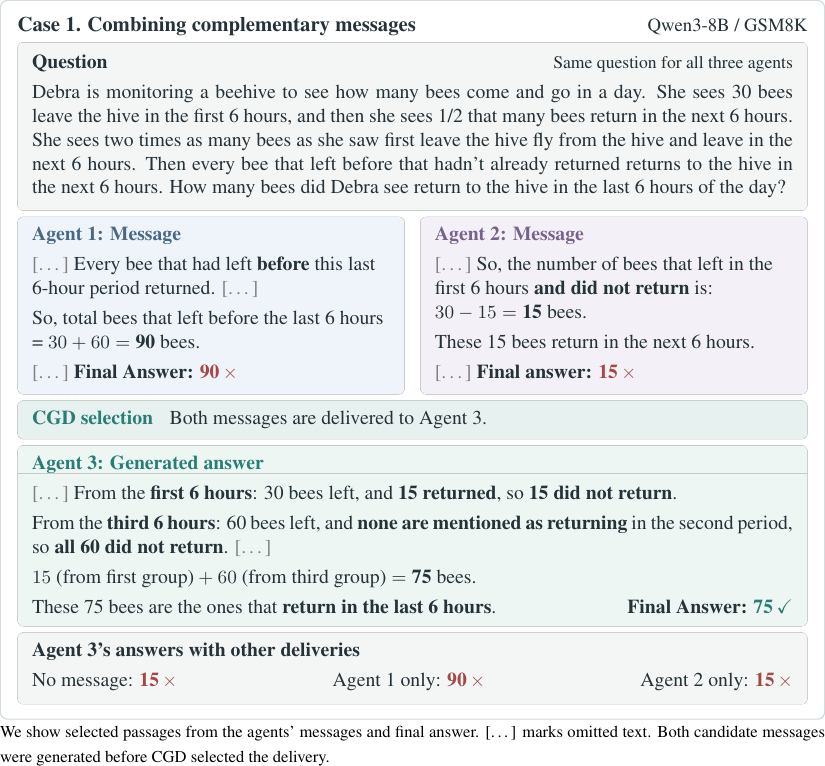}
\caption{A GSM8K example with Qwen3-8B in which joint message delivery corrects the receiver's answer.}
\label{fig:additional_D6_1}
\end{figure}
\FloatBarrier

\clearpage
Fig.~\ref{fig:additional_D6_3} shows a different correction on GSM8K with Qwen3-4B.
Both senders calculate travel times of 36 and 40 minutes, but only Agent 1 subtracts them to obtain the waiting time.
CGD delivers this message, and the receiver answers 4 instead of 40.
The selected message supplies the final calculation missing from the receiver's answer without communication.
\begin{figure}[!t]
\centering
\includegraphics[width=\linewidth]{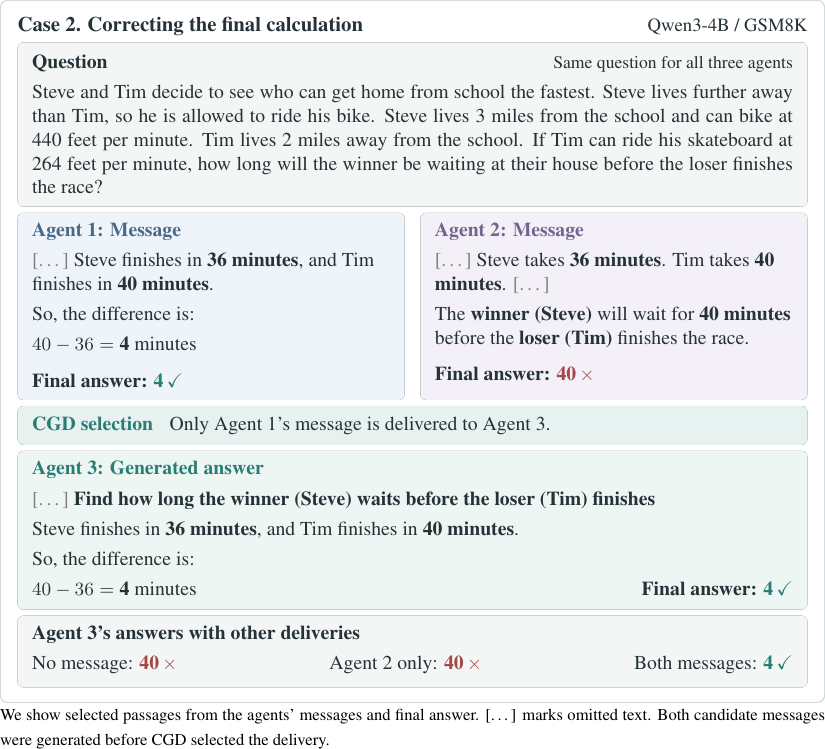}
\caption{A GSM8K example with Qwen3-4B in which the selected message supplies the receiver's missing calculation.}
\label{fig:additional_D6_3}
\end{figure}
\FloatBarrier
\clearpage
\paragraph{Excluding misleading information.}
We present a 2Wiki example in Fig.~\ref{fig:additional_D6_2} where excluding one message corrects the receiver's answer.
Agent 1 mistakes Robert Stewart for Frances Vane's husband, while Agent 2 identifies Charles Vane and his place of death, London.
CGD delivers only Agent 2's message, and Agent 3 answers London correctly.
Delivering both messages instead leads to the incorrect answer, 1864.
This example shows that adding a misleading message can turn a correct answer into an incorrect one.

\begin{figure}[!t]
\centering
\includegraphics[width=\linewidth]{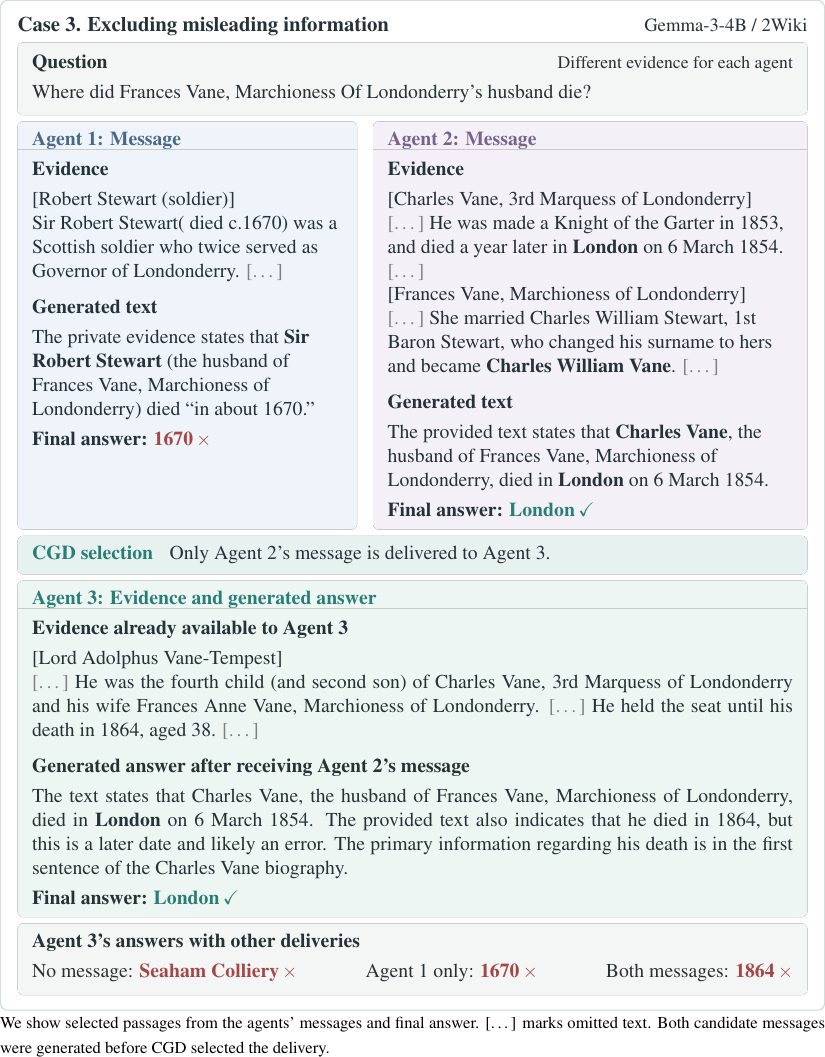}
\caption{A 2Wiki example with Gemma-3-4B in which excluding a misleading message corrects the receiver's answer.}
\label{fig:additional_D6_2}
\end{figure}
\FloatBarrier

\clearpage
Fig.~\ref{fig:additional_D6_4} illustrates a shared calculation error on GSM8K with Gemma-3-4B.
Both senders answer 6, and delivering both messages leaves the receiver with the same error.
CGD delivers only Agent 1's message, after which the receiver restores the volume before cooking and answers 12.
Agreement between senders does not ensure that delivering both messages improves the receiver's answer.
\begin{figure}[!t]
\centering
\includegraphics[width=\linewidth]{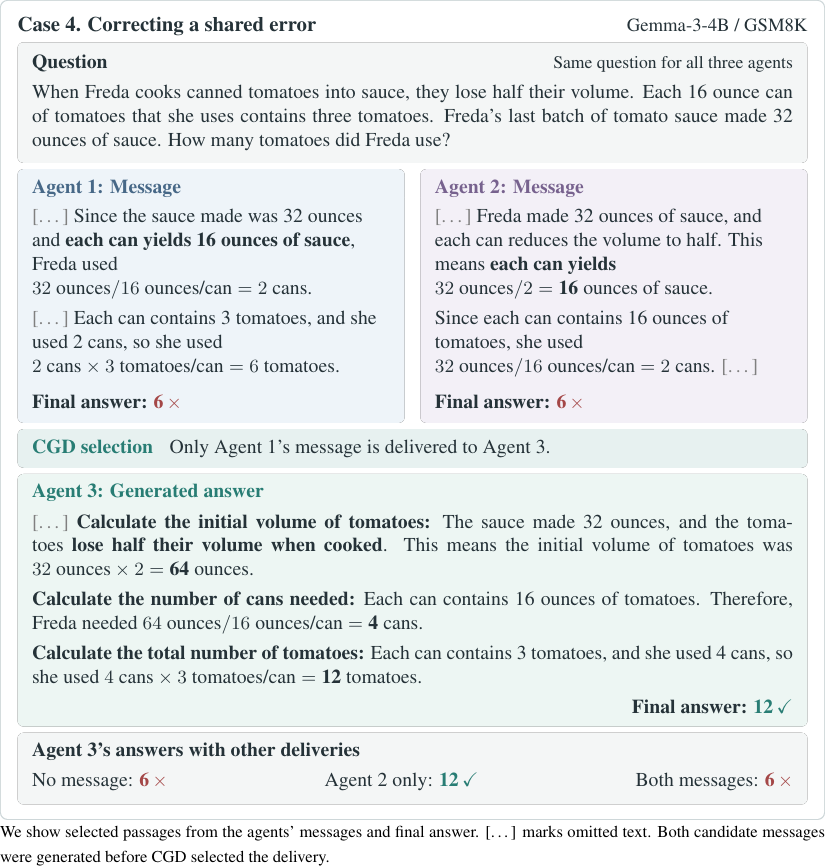}
\caption{A GSM8K example with Gemma-3-4B in which single-message delivery corrects an error shared by both senders.}
\label{fig:additional_D6_4}
\end{figure}
\FloatBarrier

\section{Discussions}
\label{app:limitations}
Our activation replacements span several layers and prompt positions. Identifying a smaller set of activations that preserves the measured message effects remains open.
CGD selects among messages that have already been generated and reads receiver activations for each available message combination. Its running cost includes message generation, these activation reads, and final-answer generation. Scoring every message combination may become expensive as the number of messages grows.

\end{document}